%% file: main.tex
\documentclass{article} %
\usepackage{arxiv}

\usepackage{microtype}
\usepackage{hyperref}
\usepackage{url}
\usepackage{amsmath}
\usepackage{amsfonts}
\usepackage{booktabs}
\usepackage{cleveref}
\usepackage{graphicx}
\graphicspath{{./}{../}}
\usepackage{amssymb}
\usepackage{algorithm}
\usepackage[noend]{algorithmic}
\usepackage{wrapfig}
\usepackage[normalem]{ulem}
\usepackage{pifont}
\usepackage{subcaption}
\input{math_commands}

\usepackage{tcolorbox, listings, bm}
\tcbuselibrary{skins, breakable, listings, raster}

\usepackage{lineno}
\usepackage{xspace}
\definecolor{scggreen}{HTML}{DCEEF0}
\definecolor{scgborder}{HTML}{458588}
\definecolor{norevred}{HTML}{E4F0E4}
\definecolor{norevborder}{HTML}{689D6A}
\definecolor{seedgray}{HTML}{F1F3F5}
\definecolor{seedborder}{HTML}{6C757D}
\definecolor{orhighlight}{HTML}{C62828}
\definecolor{codeblue}{HTML}{1565C0}
\definecolor{codegray}{HTML}{455A64}
\definecolor{conclusionhl}{HTML}{1565C0}
\definecolor{ours}{HTML}{51969A}
\definecolor{RL}{HTML}{D65E0D}
\definecolor{solved}{HTML}{98971A}
\definecolor{notsolved}{HTML}{cc241d}

\definecolor{myblue}{rgb}{0.21,0.49,0.74}
\definecolor{darkblue}{rgb}{0, 0, 0.5}
\hypersetup{colorlinks=true, citecolor=myblue, linkcolor=myblue, urlcolor=myblue}

\usepackage[dvipsnames]{xcolor}

\definecolor{PromptLightBlue}{HTML}{eefcfd}
\definecolor{PromptBlue}{HTML}{44BDC6}

\newtcolorbox{prompt}[2][PromptBlue]{%
    enhanced,
    colback=PromptLightBlue,        
    colframe=#1,                    
    colbacktitle=#1,                
    coltitle=white,                 
    fonttitle=\bfseries,
    title=#2,
    arc=4pt,
    boxrule=0.8pt,
    left=8pt, right=8pt,
    top=6pt, bottom=6pt,
    width=0.95\linewidth,
    center,
    attach boxed title to top left={xshift=10pt, yshift=-\tcboxedtitleheight/2},
    boxed title style={
        colframe=#1,
        arc=3pt,
        boxrule=0pt,
    },
}

\definecolor{FrontBoxGray}{HTML}{F1F1F1}

\newtcolorbox{frontmatterbox}{%
    enhanced,
    colback=FrontBoxGray,
    colframe=FrontBoxGray,
    boxrule=0pt,
    arc=18pt,
    outer arc=18pt,
    left=28pt,
    right=28pt,
    top=30pt,
    bottom=28pt,
    width=\textwidth,
    center,
    before skip=1.5em,
    after skip=2em,
}

\title{Harnesses for Inference-Time Alignment over \\ Execution Trajectories}

\author{Boyuan Wang\textsuperscript{$\diamond$}, Bochao Li\textsuperscript{$\diamond$}, Minghan Wang\textsuperscript{$\diamond$}, Yuxin Tao\textsuperscript{$\dagger$}, Fang Kong\textsuperscript{$\dagger$}}

\usepackage{xcolor}
\definecolor{infra-blue}{HTML}{0076BA}
\definecolor{infra-fault}{HTML}{F27300}

\begin{document}

\maketitle

\begin{center}
    \vspace{-2mm}
   Southern University of Science and Technology  
\end{center}

\input{text/0_abs}


\footnotetext{\textsuperscript{$^{\diamond}$}Equal contribution.}
\footnotetext{
\textsuperscript{$^\dagger$}Correspondence to 
\href{taoyx@sustech.edu.cn}{taoyx@sustech.edu.cn} and
\href{kongf@sustech.edu.cn}{kongf@sustech.edu.cn}.
}

\input{text/1_intro}
\input{text/2_related_work}
\input{text/3_preliminary}

\input{text/4_main_framework}
\input{text/5_experiment}
\input{text/6_conclusion}

\bibliographystyle{plainnat}
\bibliography{main}

\appendix
\clearpage

\input{text/999_apendix}

\clearpage
\newpage

\end{document}

%% file: math_commands.tex
\usepackage{amsfonts}
\usepackage{amsmath}
\usepackage{amssymb}

\usepackage{amsthm}

\usepackage{thmtools}
\usepackage{thm-restate}

\usepackage{bm}
\usepackage{float}
\usepackage{graphicx}
\usepackage{caption}
\usepackage{hyperref}
\usepackage{cleveref}
\usepackage{nccmath}
\usepackage{color}
\usepackage{algorithm}
\usepackage{algorithmic}
\usepackage{enumitem}
\usepackage{fancyhdr}
\usepackage{booktabs}

\usepackage{mathtools}
\usepackage{latexsym}
\usepackage{dsfont}
\usepackage{mathrsfs}
\usepackage{xspace}

\theoremstyle{plain}
\newtheorem{theorem}{Theorem}%

\newtheorem{lemma}[theorem]{Lemma}
\newtheorem{corollary}[theorem]{Corollary}

\newtheorem{assumption}[theorem]{Assumption}

\theoremstyle{definition}
\newtheorem{remark}[theorem]{Remark}

\def\eqref#1{equation~\ref{#1}}

\def\1{\bm{1}}

\DeclareMathAlphabet{\mathsfit}{\encodingdefault}{\sfdefault}{m}{sl}
\SetMathAlphabet{\mathsfit}{bold}{\encodingdefault}{\sfdefault}{bx}{n}

\usepackage{bm}

%% file: text/0_abs.tex
\begin{abstract}
Harness engineering has emerged as an important inference-time technique for large language model (LLM) agents, aiming to improve long-term performance through task decomposition and guided execution. However, \textbf{more elaborate harnesses are not uniformly better}: increasing decomposition or guidance can sometimes improve execution, but can also reduce final task success. We study harness design through the lens of inference-time trajectory alignment. This perspective separates harness into two mechanisms: task decomposition, which structures a task into sub-goals, and guided execution, which reshapes local action distributions during execution. This decomposition allows us to quantify how workflow granularity, retry budgets, and guidance-induced action reweighting shape the performance limits of harness design. It further reveals concrete failure modes, including over-decomposition, over-pruning, and hallucinated execution. We validate these predictions through controlled synthetic experiments and real terminal agent benchmarks. Inspired by the theory, we further show that \textbf{effective harnesses can be partial}: specifying only the initial steps and leaving the remaining execution to agent can achieve higher pass rate than fully structured workflows.
\end{abstract}

%% file: text/1_intro.tex
\section{Introduction}

Large language model agents have demonstrated impressive capabilities in solving long-horizon interactive tasks involving complex software engineering \citep{claudecode2026, codexcli2026}, scientific discovery \citep{qu2026crispr, jin2025stella}, and autonomous tool use \citep{zeng2026glm, team2025kimi}. 
A key technique behind is \emph{harnessing}: a scaffolding strategy that injects human priors into the agent's execution process \citep{openai2026harnessengineering, Langchain2026, Anthropic_harness}. 
By decomposing long-horizon tasks into structured sub-goals and providing guidance for intermediate decisions, a harness enables agents to complete complex tasks more reliably in an autonomous manner \citep{autoresearch, agentsmd, Harbor_Framework, BrowserHarness}.

This success suggests a tempting intuition: more elaborate harnesses should lead to better agents.
From this view, harness design becomes a problem of adding structure: finer-grained decomposition, more detailed instructions, and tighter execution constraints \citep{erdogan2025planandact, Wang2025DyFlowDW, dang2025multi}.
Yet this intuition conflicts with a central lesson from the history of AI \citep{bitterlesson}: human-designed structure often helps in the short run, but can limit a system's ability to search, adapt, and scale \citep{yan2025reformreducinghuman, guo2025deepseek, Silver2017MasteringTG}. 
This raises a basic question: \textbf{What should a harness specify, and what should leave for agent to resolve on its own?}

To answer this question, we theoretically model how harnesses shape the inference-time trajectories of LLM agents.
We separate a harness into two components: \emph{workflow}, which specifies what the agent should achieve at each stage, and \emph{guidance}, which biases how the agent acts within that stage.
This separation yields two findings through stage-level gaps.
First, \textbf{finer-grained decomposition is not always better}. The optimal granularity must align the required sub-goal scale with the agent's controllable progress scale under the given tolerance and retry budget.
Second, \textbf{guidance helps only when aligned}. It improves performance by shifting probability mass toward recoverable actions. Moreover, when guidance favors actions that follow the instruction rather than the task evidence, it can instead lead to hallucinated responses. These findings recast harness design as an alignment problem, where imposed structure must match the agent's capability and the available task evidence.

Beyond these observations, we find that although a harness is usually designed for the whole task, keeping only its early stages can be more effective.
This points to a counterintuitive principle for harness design: effective harnesses need not specify the full execution path. We formalize this idea as \textbf{Partial Harnessing}, a new design strategy that specifies only the initial stages and leaves the remaining execution to the agent. Our experiments show that partial harnessing can outperform fully specified workflows, suggesting that harness design should decide not only what structure to add, but also when to stop adding it.

We summarize our contributions as follows:
\begin{itemize}[leftmargin=*, itemsep=2pt, topsep=2pt]
    \item We formulate harness design as an inference-time alignment problem by decomposing a harness into workflow and guidance components, yielding two stage-level principles: sub-goal scale should match agent capability, and guidance should match task evidence.
    \item We introduce {partial harnessing} as a design strategy that stops scaffolding once its reliability cost outweighs tail-risk reduction, formalized through a marginal stopping rule.
    \item We empirically validate these predictions on synthetic cumulative-progress tasks and Terminal-Bench v2, where alignment patterns appear and partial harnesses outperform full workflows.
\end{itemize}

\begin{figure}[t]
    \centering
    \includegraphics[width=1\linewidth]{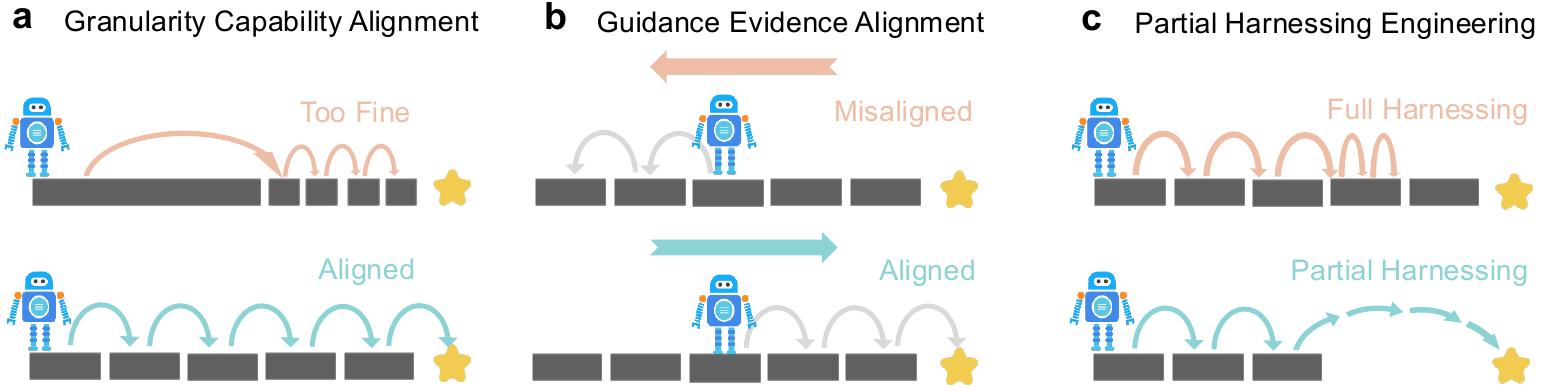}
    \caption{Alignment Principles for Harnessed Agent Execution}
    \label{fig:placeholder}
\end{figure}

%% file: text/2_related_work.tex
\section{Related Work}
\label{sec:related_work}

\textbf{Long-horizon agent execution.}
LLM agents have grown from single-step responders into systems capable of executing long-horizon tasks in interactive environments. The starting point is the reasoning-action loop of ReAct~\citep{yao2022react}, which later work extends with feedback and memory~\citep{shinn2023reflexion, packer2023memgpt} and with explicit planning and search~\citep{yao2023tree, zhou2023language}. As these capabilities matured, evaluation moved into realistic long-horizon environments spanning web interaction~\citep{zhou2023webarena}, software engineering~\citep{jimenez2023swe}, and computer use~\citep{xie2024osworld}, prompting recent systems to treat the surrounding scaffold itself as a design object and optimize the workflow that orchestrates execution~\citep{hong2023metagpt, zhang2024aflow}. A common thread across this progression is that every step adds more human-designed structure around the model. Our work asks what such structure should actually specify, and what should be left to the agent.

\textbf{Harness optimization.}
A related line of work treats the scaffolds around LLM execution as optimization targets. Early methods optimize prompts directly, by generating, scoring, or refining natural-language instructions \citep{zhou2022large, pryzant2023automatic, yang2024opro, fernando2023promptbreeder}. This view is extended from single prompts to multi-stage LM programs, where systems such as DSPy, MIPRO, Self-Refine, and TextGrad optimize instructions, demonstrations, or intermediate artifacts across multiple model calls \citep{khattab2023dspy, opsahl2024optimizing, madaan2023self, yuksekgonul2024textgrad}. More recent work broadens the search space to agent modules, executable workflows, and full harness implementations \citep{shang2024agentsquare, hu2024automated, agrawal2025gepa, novikov2025alphaevolve, lee2026meta}. These works primarily ask how to discover better scaffolds. In contrast, we ask when additional scaffold should help at all: which decompositions and guidance rules align with the agent's execution, and which ones over-specify the trajectory.

\textbf{Human priors and agent autonomy.}
A broader line of work studies how control should be shared between humans and autonomous systems. Mixed-initiative interaction and adjustable autonomy show that authority need not belong entirely to either side, but can be allocated according to uncertainty, context, and task demands \citep{Horvitz1999PrinciplesOM, scerri2002towards}. Recent LLM agent systems revisit this issue as humans provide goals, constraints, feedback, or oversight while agents execute increasingly long-horizon tasks \citep{feng2024large, zou2025call, wang2025interaction}. Harness design is a concrete instance of this trade-off: it specifies part of the trajectory through decomposition and guidance, while leaving the rest for the agent to resolve \citep{pan2026natural, bui2026building} . Our work studies when such human priors improve execution and when they over-constrain the agent's execution.

%% file: text/3_preliminary.tex
\section{Preliminary}
\label{sec:preliminary}

\textbf{Intuition.}
We think of solving a task as proceeding on two interacting timescales,
one driven by the harness and one driven by the agent. The \emph{outer}
timescale belongs to the harness, which lays out a sequence of sub-goals
$g_1 \to g_2 \to \cdots \to g_T$ toward the final answer and advances along
this plan one sub-goal at a time. The \emph{inner} timescale belongs to the
agent: given the current sub-goal $g_t$, it repeatedly takes an action and
observes the resulting state until $g_t$ is met, at which point control
returns to the harness and $g_{t+1}$ is revealed. Crucially, the harness is
not silent during this inner loop: even as the agent chooses each step, the
harness simultaneously \emph{shapes} the loop, nudging the trajectory toward
behaviors it considers promising for $g_t$. The two timescales thus carry a
clean division of labor: the harness decides \emph{what to work on next} and
\emph{which trajectories to favor while working on it}, while the agent
decides \emph{how to take each step}.

\textbf{How a Harness Shapes Execution.}
We now make two timescales precise. Consider tasks $x \sim \mathcal{D}$, each with a unique correct final answer $y^\star(x)$, and define a \emph{harness} as an inference-time scaffold parameterized by $h=(\kappa,\lambda,\psi)$, where $\kappa$ controls the decomposition granularity, $\lambda$ controls the guidance strength, and $\psi$ specifies the local guidance rule. These three parameters split cleanly along the two timescales: $\kappa$ governs the outer one, and $\lambda, \psi$ govern the inner one.

For the outer timescale, $\kappa$ applied to a task $x$ induces the ordered sub-goal sequence $\Delta_h(x) = (g_1,\ldots,g_{T_h(x)})$, the coarse plan from the intuition above. This sequence fixes the stage-level structure but leaves each inner trajectory to the agent, so a complete harness-conditioned execution takes the form
$$
\tau_h(x) = (g_1,\tau_1,\ldots,g_T,\tau_T),
\qquad T = T_h(x),
$$
where each $\tau_t = (s_{t,0},a_{t,0},\ldots,a_{t,n_t-1},s_{t,n_t})$ is the inner trajectory generated while pursuing $g_t$, with states $s_{t,j}\in\mathcal{S}$ and actions $a_{t,j}\in\mathcal{A}$. How each $\tau_t$ is actually produced is the job of the inner timescale.

For the inner timescale, let $K_{t-1}$ collect everything observed before stage $t$, i.e.\ the task together with all preceding sub-goals and trajectories. Without guidance, the agent unrolls $\tau_t$ from $K_{t-1}$ and $g_t$ under its \emph{base trajectory distribution} $\mathbb{Q}_{t,0}(\tau_t \mid K_{t-1}, g_t)$, defined autoregressively by $a_{t,j} \sim q_h(\cdot \mid H_{t,j})$ with $H_{t,j} = (K_{t-1}, g_t, s_{t,0}, a_{t,0}, \ldots, a_{t,j-1}, s_{t,j})$. The base distribution captures the agent acting on its own; the harness's role is to reshape it.

This reshaping is realized through $\psi$ and $\lambda_t$, which together determine a non-negative weight $W_{t,\lambda_t}(K_{t-1}, g_t, \tau_t)$ measuring how well a candidate trajectory aligns with the behavior $\psi$ prescribes for $g_t$. Reweighting the base distribution by this score yields the \emph{guided trajectory distribution},
$$
\mathbb{Q}_{t,\lambda_t}(\tau_t \mid K_{t-1}, g_t)
\;\propto\;
\mathbb{Q}_{t,0}(\tau_t \mid K_{t-1}, g_t)\,
W_{t,\lambda_t}(K_{t-1}, g_t, \tau_t),
$$
which governs the agent's actual behavior at stage $t$. The strength $\lambda_t$ controls the magnitude of this reweighting: at $\lambda_t = 0$ the weight is uniform and the guided distribution collapses to the base distribution, and as $\lambda_t$ grows, $\mathbb{Q}_{t,\lambda_t}$ concentrates on trajectories preferred under $\psi$. Once $\tau_t$ terminates, control returns to the harness, $g_{t+1}$ is revealed, and this two-level process repeats until stage $T$ is complete.

\textbf{From Final Success to Stagewise Recoverability.}
We now make this decomposition precise. Let $y(\tau_h)$ denote the final
answer produced by execution $\tau_h$, define the final success event as
$\mathrm{Succ}_x(\tau_h) := \{y(\tau_h) = y^\star(x)\}$, and write the
primitive harness-design objective as
$$
\max_h \; \mathbb{E}_{x \sim \mathcal{D}}
\left[
\mathbb{P}_h(\mathrm{Succ}_x(\tau_h) \mid x)
\right].
$$
To connect this terminal objective with process-level behavior, we introduce
the completed prefix after stage $t$,
$K_t := (x,g_1,\tau_1,\ldots,g_t,\tau_t)$ with $K_0 := x$, and let $B_t$
denote the event that $K_t$ is \emph{recoverable}, i.e.\ that some
continuation under the remaining plan $(g_{t+1},\ldots,g_T)$ still reaches
$y^\star(x)$. The intuition above says that final success is equivalent to
recoverability holding throughout, which under goal consistency we write as
$\mathrm{Succ}_x(\tau_h) \equiv \bigcap_{t=1}^{T_h(x)} B_t$.

This equivalence turns the terminal objective into a stagewise product. By
the chain rule,
$$
\mathbb{P}_h(\mathrm{Succ}_x(\tau_h) \mid x)
=
\prod_{t=1}^{T_h(x)}
\bar p_t(h;x),
\qquad
\bar p_t(h;x)
:=
\mathbb{P}_h(B_t \mid B_{<t}, x),
$$
where $B_{<t}:=\bigcap_{s<t}B_s$ and $B_{<1}:=\Omega$, so $\bar p_t(h;x)$ is
the conditional probability of remaining recoverable at stage $t$ given that
the run was recoverable through every earlier stage. Taking negative
logarithms turns the product into a sum,
$-\log \mathbb{P}_h(\mathrm{Succ}_x(\tau_h) \mid x) = \sum_{t=1}^{T_h(x)} -\log \bar p_t(h;x)$,
which exhibits the process loss as the stagewise decomposition of the
primitive final-success objective rather than an auxiliary term added on
top of it.

%% file: text/4_main_framework.tex
\section{Alignment Principles for Harness Design}
\label{sec:structure_theorems}

Building on the recoverability framework of Section~\ref{sec:preliminary}, we now present three alignment principles, each addressing a distinct lever the harness has over execution: granularity--capability alignment in Section~\ref{sec:granularity_capability_alignment}, guidance--evidence alignment in Section~\ref{sec:guidance_action_space_filtering}, and partial harnessing in Section~\ref{sec:partial-harnessing}.

\subsection{Granularity - Capability Alignment}
\label{sec:granularity_capability_alignment}

A harness's outer timescale specifies sub-goals, but it does not specify them in the abstract: each sub-goal asks the agent to make a definite amount of progress within a finite execution budget. Whether a workflow helps therefore depends on a single relationship---between the progress each stage requests and the progress the agent can reliably realize at that stage. We characterize this relationship precise and show that it controls the final success probability.

Consider stage $t$ in isolation. The harness requests latent progress $\ell_t$ and grants the agent at most $M_t$ low-level steps in which to deliver it. After $m\le M_t$ such steps, the cumulative progress the agent can reliably control lies in a window $I_{t,m}=[\mu^-_{t,m},\mu^+_{t,m}]$ with stochastic variation $\sigma_{t,m}$, and a tolerance $\epsilon_t$ describes how far the realized progress may drift from $\ell_t$ before the resulting prefix ceases to be recoverable. The relevant quantity is the smallest standardized gap between the requested progress and any scale the agent can reach within budget,
$$
\rho_t^{(M_t)}
:=
\min_{1\le m\le M_t}
\frac{\big(d(\ell_t,I_{t,m})-\epsilon_t\big)_+^2}{2\sigma_{t,m}^2},
$$
which vanishes when some reachable cumulative scale lies within tolerance of $\ell_t$ and grows quadratically in the gap when no such scale exists. The mismatch $\rho_t^{(M_t)}$ is therefore a stage-local property: it depends only on what the harness asks of stage $t$ and what the agent can do within $M_t$ steps.

Aggregating these stage-local mismatches across the workflow yields the main result.
\begin{theorem}[Granularity-capability mismatch bound, informal]
\label{thm:granularity_capability_alignment_informal}
Consider a harness that decomposes a task $x$ into $T=T_h(x)$ sub-goals. At
stage $t$, suppose the harness requires latent progress $\ell_t$, and the agent
may take at most $M_t$ low-level execution steps. Under the
recoverability-tube, concentration, and boundary-contraction conditions in
Appendix~\ref{sec:granularity_alignment}, the final success probability
satisfies
$$
    \mathbb{P}_h(\mathrm{Succ}_x(\tau_h)\mid x)
    \le
    \exp\left(
    -
    \sum_{t=1}^{T_h(x)}
    \left[
        \eta_t+
        \bigl(\rho_t^{(M_t)}-\log M_t\bigr)_+
    \right]
    \right).
$$
\end{theorem}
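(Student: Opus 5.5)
The plan is to start from the stagewise factorization in Section~\ref{sec:preliminary}, $\mathbb{P}_h(\mathrm{Succ}_x(\tau_h)\mid x)=\prod_{t=1}^{T}\bar p_t(h;x)$, and bound each factor separately. The target per-stage bound is
$$
\bar p_t(h;x)\le \exp\bigl(-\eta_t-(\rho_t^{(M_t)}-\log M_t)_+\bigr).
$$
Taking the product and exponentiating the summed exponents then gives the theorem directly. Everything therefore reduces to a single stage $t$, conditioned on $B_{<t}$.

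At stage $t$, I would use the recoverability-tube condition to translate recoverability into a progress statement. Conditional on $B_{<t}$, the event $B_t$ should imply that at the stopping step $n_t\le M_t$ the realized cumulative progress $Z_{t,n_t}$ satisfies $|Z_{t,n_t}-\ell_t|\le\epsilon_t$. A union bound over the possible stopping times then gives
$$
\bar p_t\le \sum_{m=1}^{M_t}\mathbb{P}\bigl(|Z_{t,m}-\ell_t|\le\epsilon_t\bigr).
$$
Since the reliable window is $I_{t,m}$, reaching the tolerance ball requires a deviation of at least $(d(\ell_t,I_{t,m})-\epsilon_t)_+$ outside the window. The concentration condition, which I take to be sub-Gaussian with variance proxy $\sigma_{t,m}^2$, bounds each term by $\exp(-(d-\epsilon)_+^2/2\sigma_{t,m}^2)\le e^{-\rho_t^{(M_t)}}$, because $\rho$ is the minimum over $m$. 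Summing the $M_t$ terms gives $\bar p_t\le M_te^{-\rho_t^{(M_t)}}=\exp(-(\rho_t^{(M_t)}-\log M_t))$. Combining this with the trivial bound $\bar p_t\le1$ yields the positive-part form.

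The $\eta_t$ term has to come from the boundary-contraction condition. I would read it as saying that, even with perfect alignment, a fraction of probability mass near the tube boundary is lost: $\bar p_t\le e^{-\eta_t}$ times the conditional probability of hitting the tolerance ball. Combining the two factors multiplicatively, rather than taking their minimum, gives the additive exponent $\eta_t+(\rho_t-\log M_t)_+$. This requires the concentration bound to hold conditionally on the surviving event, and I would obtain that by applying the contraction first and the concentration second.

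I expect the main obstacle to be this multiplicative combination of the contraction loss with the positive-part mismatch term. The issue is that the union bound and the trivial bound $1$ must both be taken after the factor $e^{-\eta_t}$ has been extracted. I would therefore state the contraction condition as a factorization $\bar p_t\le e^{-\eta_t}\,\mathbb{P}(\text{hit window}\mid\text{inside contracted tube},B_{<t})$, and then apply the concentration argument to the conditional probability on the right. A secondary concern is measurability of the random stopping time under the guided distribution $\mathbb{Q}_{t,\lambda_t}$. The union bound over $m\le M_t$ sidesteps this, since it needs no optional stopping argument.
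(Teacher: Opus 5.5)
Your argument is essentially the paper's proof. Fix the prefix and apply the boundary-loss condition $P_h(B_t\mid K_{t-1})\le e^{-\eta_t}P_h(\bigcup_{m\le M_t}\{|Z_{t,m}-\ell_t|\le\varepsilon_t\}\mid K_{t-1})$. Bound each tube event by $e^{-a_{t,m}}$ using a one-sided sub-Gaussian tail around a mean lying in $I_{t,m}$, union-bound to $\min\{1,M_te^{-\rho_t^{(M_t)}}\}$, and multiply the stages via the chain-rule factorization. The paper handles prefix dependence by taking an infimum over recoverable prefixes before averaging, which reduces to your direct conditioning on $B_{<t}$ in the deterministic regime of the informal statement.

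The ``main obstacle'' in your last paragraph does not actually arise. The paper's condition (iii) multiplies $e^{-\eta_t}$ by the tube probability conditioned only on $K_{t-1}$, which is exactly the conditioning under which concentration is assumed, so the two steps compose directly. You should drop the proposed reformulation that also conditions on an ``inside contracted tube'' event. Sub-Gaussian tails are not inherited by a law conditioned on an arbitrary event, so that version would need an extra concentration assumption the paper neither makes nor needs.
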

The bound exposes two distinct contributions to per-stage loss. The execution cost $\eta_t$ reflects the intrinsic difficulty of any one stage and is paid regardless of how the workflow is structured. 
By contrast, the granularity penalty $\bigl(\rho_t^{(M_t)}-\log M_t\bigr)_+$ appears only when the requested scale exceeds what the retry budget can absorb, and it precisely captures the cost of misaligned decomposition.
A larger $M_t$ enlarges the reachable set and softens this penalty through the $-\log M_t$ term, but cannot eliminate a structural gap between $\ell_t$ and the agent's execution dynamics.

This decomposition explains why finer workflows are not uniformly better. Specialize to a uniform $T$-stage workflow on a task with total latent progress $L_x$, so each stage requests $L_x/T$. If a single low-level step reliably advances progress by an amount in $[\mu^-,\mu^+]$ and each stage allows at most $M$ steps, the cumulative scales reachable within a stage form the union
$$
\bigcup_{m=1}^{M} [m\mu^-,m\mu^+],
$$
and useful decompositions place $L_x/T$ near this set. Small $T$ pushes $L_x/T$ above the union and leaves stages unreachable within $M$ steps; large $T$ pushes $L_x/T$ below it and imposes milestones the agent cannot stop at without coordination loss. The reliable regime lies between these extremes, where each stage requires nontrivial progress while remaining reachable under the agent's realizable progress scale. Workflow granularity is thus a design parameter to be chosen against the agent's execution dynamics, not against the task's logical structure alone.

\subsection{Guidance - Evidence Alignment}
\label{sec:guidance_action_space_filtering}

The inner timescale of a harness is shaped by guidance: at stage $t$, the weight $W_{t,\lambda_t}$ reweights the agent's base trajectory distribution toward behaviors $\psi$ prescribes for $g_t$. This reweighting is helpful when it concentrates probability on trajectories that keep the prefix recoverable, and harmful when it concentrates probability on trajectories that look locally preferred but foreclose successful continuations. We make this dichotomy precise through a single stage-local quantity.

Fix a recoverable prefix $K_{t-1}$, and let $R_t^{\mathrm{stg}}(K_{t-1})$ denote the set of stage trajectories that keep $K_t$ recoverable. Under the base distribution, the average retention weight is $\bar W^{\mathrm{rec}}_{t,\lambda_t}(K_{t-1})$ on $R_t^{\mathrm{stg}}$ and $\bar W^{\mathrm{bad}}_{t,\lambda_t}(K_{t-1})$ on its complement. The retention gap
$$
\Gamma_{t,\lambda_t}(K_{t-1})
:=
\log \bar W^{\mathrm{rec}}_{t,\lambda_t}(K_{t-1})
-
\log \bar W^{\mathrm{bad}}_{t,\lambda_t}(K_{t-1})
$$
measures how much more weight guidance places on recoverable trajectories than on non-recoverable ones, in log space. A positive gap means guidance preferentially preserves trajectories that keep a successful continuation available, whereas a negative gap means it preserves trajectories that may look locally acceptable but render the prefix non-recoverable.

This single sign-valued quantity determines whether guidance helps at the prefix.
\begin{theorem}[Guidance alignment via retention gaps, informal]
\label{thm:guidance_retention_gap}
Fix a task $x$, a stage $t$, and a recoverable prefix $K_{t-1}$. Under the regularity conditions in Appendix~\ref{app:guidance_action_space_filtering}, the stage recoverability probability after guidance satisfies
$$
Q_{t,\lambda_t}
\!\left(
R_t^{\mathrm{stg}}(K_{t-1})
\mid K_{t-1},g_t
\right)
=
\sigma\!\left(
\omega^0_t(K_{t-1})
+
\Gamma_{t,\lambda_t}(K_{t-1})
\right),
$$
where $\omega^0_t(K_{t-1})$ is the unguided recoverability log-odds and $\sigma(u)=1/(1+e^{-u})$. Consequently, guidance improves stage recoverability relative to the unguided law at the same prefix if and only if $\Gamma_{t,\lambda_t}(K_{t-1})>0$, and harms it if and only if $\Gamma_{t,\lambda_t}(K_{t-1})<0$.
\end{theorem}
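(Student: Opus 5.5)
The identity is a Bayes'-rule computation on a two-cell partition of stage trajectories, so the plan is to write both the guided and unguided recoverability probabilities as ratios of weighted masses and compare their log-odds. Fix $x$, $t$, and a recoverable prefix $K_{t-1}$, and abbreviate $R := R_t^{\mathrm{stg}}(K_{t-1})$ and $R^c$ for its complement in the space of stage trajectories. Let $\pi_0 := \mathbb{Q}_{t,0}(R\mid K_{t-1},g_t)$. The unguided log-odds is then $\omega^0_t(K_{t-1}) = \log \pi_0 - \log(1-\pi_0)$.

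Define the average retention weights as base-conditional expectations:
$$\bar W^{\mathrm{rec}}_{t,\lambda_t} = \mathbb{E}_{\mathbb{Q}_{t,0}}\!\left[W_{t,\lambda_t}\mid R\right], \qquad \bar W^{\mathrm{bad}}_{t,\lambda_t} = \mathbb{E}_{\mathbb{Q}_{t,0}}\!\left[W_{t,\lambda_t}\mid R^c\right].$$
This is the reading of ``average retention weight under the base distribution'' that I will fix in the appendix.

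The regularity conditions I need are the following:
\begin{itemize}
\item $0<\pi_0<1$, so that both conditional expectations and $\omega^0_t$ are defined;
\item $W_{t,\lambda_t}$ is measurable, non-negative and base-integrable;
\item both averages are strictly positive and finite, so that the normalizer $Z=\mathbb{E}_{\mathbb{Q}_{t,0}}[W_{t,\lambda_t}]$ satisfies $0<Z<\infty$ and the logarithms in $\Gamma_{t,\lambda_t}$ make sense.
\end{itemize}
Under these conditions the guided law is a well-defined probability measure with density $W_{t,\lambda_t}/Z$ relative to the base law.

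The main computation splits $Z$ over the partition:
$$Z = \pi_0\,\bar W^{\mathrm{rec}}_{t,\lambda_t} + (1-\pi_0)\,\bar W^{\mathrm{bad}}_{t,\lambda_t},$$
and the guided mass of $R$ is $\pi_0\,\bar W^{\mathrm{rec}}_{t,\lambda_t}/Z$. Hence
$$\mathbb{Q}_{t,\lambda_t}(R\mid K_{t-1},g_t) = \frac{\pi_0 \bar W^{\mathrm{rec}}_{t,\lambda_t}}{\pi_0 \bar W^{\mathrm{rec}}_{t,\lambda_t} + (1-\pi_0)\bar W^{\mathrm{bad}}_{t,\lambda_t}}.$$
Dividing numerator and denominator by the numerator gives
$$\frac{1}{1+\exp\!\left(-\left[\log\frac{\pi_0}{1-\pi_0} + \log \bar W^{\mathrm{rec}}_{t,\lambda_t} - \log \bar W^{\mathrm{bad}}_{t,\lambda_t}\right]\right)},$$
which is exactly $\sigma(\omega^0_t+\Gamma_{t,\lambda_t})$.

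For the consequence, I will use that $\sigma$ is strictly increasing and $\sigma(\omega^0_t)=\pi_0$. It follows that the guided recoverability exceeds, equals, or falls below $\pi_0$ exactly according to whether $\Gamma_{t,\lambda_t}$ is positive, zero, or negative. This gives both ``if and only if'' statements.

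There is no real analytic obstacle. The main step to handle carefully is the regularity and definitional part: stating the positivity and integrability conditions so that no division by zero or $\log 0$ occurs, and noting that the degenerate cases $\pi_0\in\{0,1\}$ are excluded, since there guidance cannot change the stage recoverability. If the paper intends $\bar W$ to be the average of $W$ over the sets $R$ and $R^c$ without normalizing by their base masses, I will instead absorb those masses into $\omega^0_t$, and the same one-line identity goes through.
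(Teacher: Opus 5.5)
Your proposal is correct and follows essentially the paper's argument: the paper likewise writes the guided odds of $R_t^{\mathrm{stg}}(K_{t-1})$ against its complement as the base odds times the ratio of base-conditional mean retention weights (the normalizer cancels), takes logarithms to get $\omega^0_t(K_{t-1})+\Gamma_{t,\lambda_t}(K_{t-1})$, and reads off both ``if and only if'' claims from strict monotonicity of $\sigma$. Your regularity conditions ($0<\pi_0<1$, strictly positive finite conditional averages, hence $0<Z<\infty$) are the ones the appendix imposes; the only difference is that the paper also covers the zero-weight boundary cases in the extended-real sense, which you simply exclude.
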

The theorem reduces a multi-dimensional design choice---which trajectories to favor, and how strongly---to a one-dimensional diagnostic at each prefix. The unguided log-odds $\omega^0_t(K_{t-1})$ is a property of the agent and the prefix, fixed once both are specified. Guidance enters the recoverability probability only through $\Gamma_{t,\lambda_t}$, additively in log-odds space. Stronger guidance, formalized as a larger $\lambda_t$, scales the magnitude of $\Gamma_{t,\lambda_t}$ but not its sign, so increasing $\lambda_t$ amplifies whichever effect $\psi$ already produces at $K_{t-1}$: helpful guidance becomes more helpful, and misaligned guidance becomes more harmful.

This sign-amplification structure explains why the same guidance can either reduce or amplify hallucination. In evidence-limited settings, a recoverable trajectory is one that stays within what the current observations justify. A guidance rule $\psi$ that rewards evidence checking or uncertainty awareness places its weight on grounded trajectories, producing a positive retention gap; raising $\lambda_t$ then suppresses ungrounded continuations and reduces hallucination. A guidance rule $\psi$ that rewards detail, confidence, or instruction compliance without conditioning on evidence places weight on trajectories that satisfy $\psi$ regardless of whether they remain grounded, producing a negative retention gap; raising $\lambda_t$ then drives the agent further from the supported set, amplifying hallucination. Guidance strength is therefore not a generic reliability lever: its effect at a prefix is determined by the alignment between $\psi$ and the evidence available at that prefix.

\subsection{Partial Harnessing as a Marginal Reliability Trade-off}
\label{sec:partial-harnessing}
A harness's length, the number of stages it specifies before releasing the agent, is itself a design lever. Each additional scaffolded stage shortens the residual task left to the agent, but imposes another recoverability constraint that execution must satisfy. Adding a stage is helpful when its tail-risk reduction exceeds its reliability cost, and harmful when the cost dominates. We make this dichotomy precise through a single marginal comparison, and use it to characterize when \emph{partial harnessing}, which specifies only an initial prefix of the trajectory and leaves the rest to the agent, outperforms full coverage.

Fix a task with total latent progress demand \(L_x\) and a scaffold step size
\(s>0\). For \(m\in\mathcal J:=\{0,1,\ldots,\lfloor L_x/s\rfloor\}\), let
\(h_m\) denote the partial harness that specifies the first \(m\) scaffolded
stages and leaves the residual length \(L_x-ms\) to the autonomous agent. Two
scalar quantities determine the trade-off: the \emph{scaffold cost}
\[
c_s\;:=\;-\log q_{\mathrm{scaf}}(s;M),
\]
the negative log-reliability of executing one scaffolded stage, and the
\emph{tail risk}
\[
\kappa_{\mathrm{tail}}(d;M)\;:=\;-\log q_{\mathrm{tail}}(d;M),
\]
the negative log-reliability of finishing a residual task of length \(d\)
autonomously. Under the homogeneous slice factorization in
Appendix~\ref{app:partial-harnessing}, the negative log-success of \(h_m\)
separates additively as
\[
F(m)\;:=\;-\log\Pr\nolimits_{h_m}\!\bigl(\mathrm{Succ}_x(\tau_h)\mid x\bigr)
\;=\;m\,c_s\;+\;\kappa_{\mathrm{tail}}(L_x-ms;M),
\]
where the first term grows linearly with coverage while the second shrinks as
the autonomous tail becomes shorter.

The trade-off between these two terms collapses into a one-dimensional
diagnostic at the margin. Let
\[
\Delta(m;M)\;:=\;\kappa_{\mathrm{tail}}(L_x-ms;M)\;-\;\kappa_{\mathrm{tail}}(L_x-(m{+}1)s;M)
\]
denote the reduction in tail risk obtained by adding the \((m{+}1)\)-st
scaffolded stage. Adding that stage strictly improves reliability if and only
if this reduction exceeds its own scaffold cost,
\(
\Delta(m;M)\;>\;c_s.
\)
Partial harnessing thus follows a simple marginal principle: keep extending
the harness only while the next scaffolded stage saves more residual risk
than it introduces.

\begin{theorem}[Coverage--autonomy alignment, informal]
\label{thm:partial-harnessing}
Under the homogeneous slice factorization and the diminishing-returns
condition on \(\kappa_{\mathrm{tail}}\) stated in
Appendix~\ref{app:partial-harnessing}, \(F(m)\) is discrete-convex in \(m\),
and the success probability \(\exp(-F(m))\) is log-concave and unimodal over
\(m\in\mathcal J\). The smallest reliability-maximizing coverage is
\[
m_{\mathrm{peak}}
\;=\;
\min\{m\in\mathcal J:\;m+1\in\mathcal J,\;\Delta(m;M)\le c_s\},
\]
with \(m_{\mathrm{peak}}=\max\mathcal J\) if the set is empty. For a target
reliability \(\alpha\in(0,1)\), the minimum-structure \(\alpha\)-reliable
coverage is \(m_\alpha=\min\{m\in\mathcal J:F(m)\le-\log\alpha\}\) whenever
this set is nonempty.
\end{theorem}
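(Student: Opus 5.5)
The plan is to work directly from the additive representation $F(m)=m\,c_s+\kappa_{\mathrm{tail}}(L_x-ms;M)$, which holds by the homogeneous slice factorization, and reduce every claim to the behaviour of the forward differences of $F$. Writing $F(m+1)-F(m)=c_s-\Delta(m;M)$, the whole statement becomes a statement about the sequence $m\mapsto\Delta(m;M)$ on $\mathcal J$.

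\textbf{Convexity.} I will read the diminishing-returns condition on $\kappa_{\mathrm{tail}}$ as follows: the marginal tail-risk reduction from shortening the residual by one step $s$ is nonincreasing as the residual shrinks. Equivalently, $d\mapsto\kappa_{\mathrm{tail}}(d;M)$ is discrete-convex on the grid $\{L_x-ms\}$, so that $\Delta(m;M)\ge\Delta(m+1;M)$. Then
\[
F(m+2)-2F(m+1)+F(m)=\Delta(m;M)-\Delta(m+1;M)\ge 0,
\]
so $F$ is discrete-convex, since the linear term $m\,c_s$ contributes nothing to second differences. Log-concavity of $\exp(-F)$ is then immediate, because $\log\exp(-F)=-F$ is discrete-concave. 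For unimodality, the forward differences $c_s-\Delta(m;M)$ are nondecreasing in $m$. Hence their sign pattern is $(-,\ldots,-,\,\ge0,\ldots)$, so $F$ first decreases and then (weakly) increases, and $\exp(-F)$ is unimodal.

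\textbf{Peak characterization.} Let $m^\ast$ be the first index with $m^\ast+1\in\mathcal J$ and $\Delta(m^\ast;M)\le c_s$. For $m<m^\ast$ we have $F(m+1)<F(m)$, so $F$ strictly decreases up to $m^\ast$. For $m\ge m^\ast$, monotonicity of $\Delta$ gives $\Delta(m;M)\le c_s$, so $F$ is nondecreasing from $m^\ast$ on. Thus $m^\ast$ is a minimizer of $F$. Any minimizer $m'<m^\ast$ would contradict strict decrease on $[m',m^\ast]$, so $m^\ast$ is the smallest minimizer, i.e.\ the smallest reliability-maximizing coverage. If the set is empty, $F$ strictly decreases on all of $\mathcal J$ and the minimum sits at $\max\mathcal J$.

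\textbf{The $\alpha$-reliable coverage.} The claim about $m_\alpha$ is essentially definitional. Since $\exp(-F(m))\ge\alpha$ is equivalent to $F(m)\le-\log\alpha$, $m_\alpha$ is the least coverage reaching reliability $\alpha$, and it is well defined whenever the set is nonempty because $\mathcal J$ is finite. As a remark, I would add that by unimodality the feasible set is a discrete interval containing $m_{\mathrm{peak}}$, so $m_\alpha\le m_{\mathrm{peak}}$.

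\textbf{Main obstacle.} The main obstacle is pinning down the appendix's exact form of the diminishing-returns condition, and checking that it yields monotonicity of $\Delta$ on the grid, including near the boundary where $L_x-ms$ approaches $0$ and the last element $\lfloor L_x/s\rfloor$ may leave a residual shorter than $s$. If the condition is stated for continuous $d$, I would first derive grid convexity from it: convexity of $\kappa_{\mathrm{tail}}$ in $d$ implies that the differences over equal-length steps are monotone. I would also handle $\kappa_{\mathrm{tail}}(0;M)$ as a boundary value.
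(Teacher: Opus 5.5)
Your proposal is correct and follows essentially the same route as the paper: the slice factorization gives $F(m+1)-F(m)=c_s-\Delta(m;M)$, convexity of $\kappa_{\mathrm{tail}}$ on the equally spaced grid $\{L_x-ms: m\in\mathcal J\}$ makes $\Delta(m;M)$ nonincreasing, so $F$ has nondecreasing forward differences; the smallest minimizer is then the first index where the forward difference turns nonnegative, and the $m_\alpha$ claim is definitional via $\exp(-F(m))\ge\alpha\iff F(m)\le-\log\alpha$. Your boundary worry is moot because the paper assumes convexity directly on that grid, and your extra remark $m_\alpha\le m_{\mathrm{peak}}$ is a valid consequence of the sublevel sets of a discrete-convex $F$ being intervals containing the minimizer.
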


The theorem connects harness length to the preceding alignment principles via a single marginal comparison. Far from a free parameter, the scaffold cost \(c_s\) is raised by granularity mismatch (Theorem~\ref{thm:granularity_capability_alignment_informal}) and misaligned guidance (Theorem~\ref{thm:guidance_retention_gap}), and lowered by well-aligned scaffolding. The tail risk \(\kappa_{\mathrm{tail}}\), by contrast, is a property of the agent. More capable agents attain smaller \(\kappa_{\mathrm{tail}}\) on any residual, so \(\Delta(m;M)\) crosses \(c_s\) at smaller \(m\) and the optimal coverage shifts earlier.

Harness length is therefore not a generic reliability lever, with its effect governed by the alignment between local scaffold quality and the agent's autonomous capability on the residual. The right harness is the smallest one that brings the residual within autonomous reach, and no smaller. The slice rule is a clean stopping principle rather than a universal optimization law. When added stages alter earlier behavior or tail success depends strongly on the realized prefix, the success curve may become multi-modal, and candidate coverages must be compared directly.

%% file: text/5_experiment.tex
\section{Experiments}

To validate our theoretical predictions on harness design, we conduct \textbf{synthetic experiments} for controlled mechanism analysis in Section~\ref{sec:synthetic} and \textbf{real-data experiments} for realistic long-horizon validation in Section~\ref{sec:real}, examining three aspects: (i) \emph{decomposition granularity}, (ii) \emph{guidance alignment}, and (iii) \emph{partial harness specification}.

\subsection{Synthetic Experiments}
\label{sec:synthetic}

\begin{figure}[t]
    \centering
    \begin{subfigure}{0.32\textwidth}
        \centering
        \includegraphics[width=\linewidth]{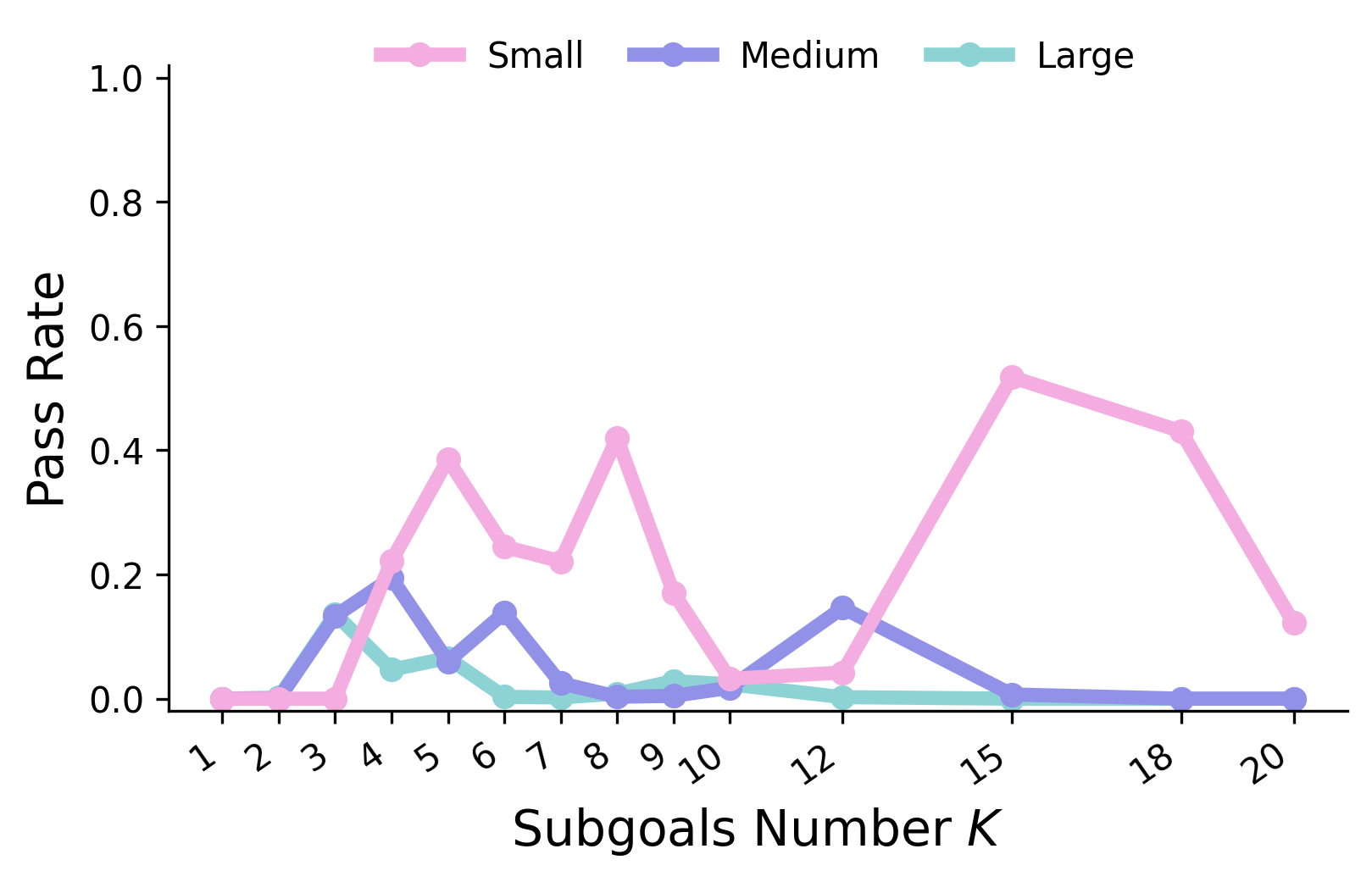}
        \caption{Pass rate under different granularities}
        \label{fig:exp1_a}
    \end{subfigure}
    \hfill
    \begin{subfigure}{0.32\textwidth}
        \centering
        \includegraphics[width=\linewidth]{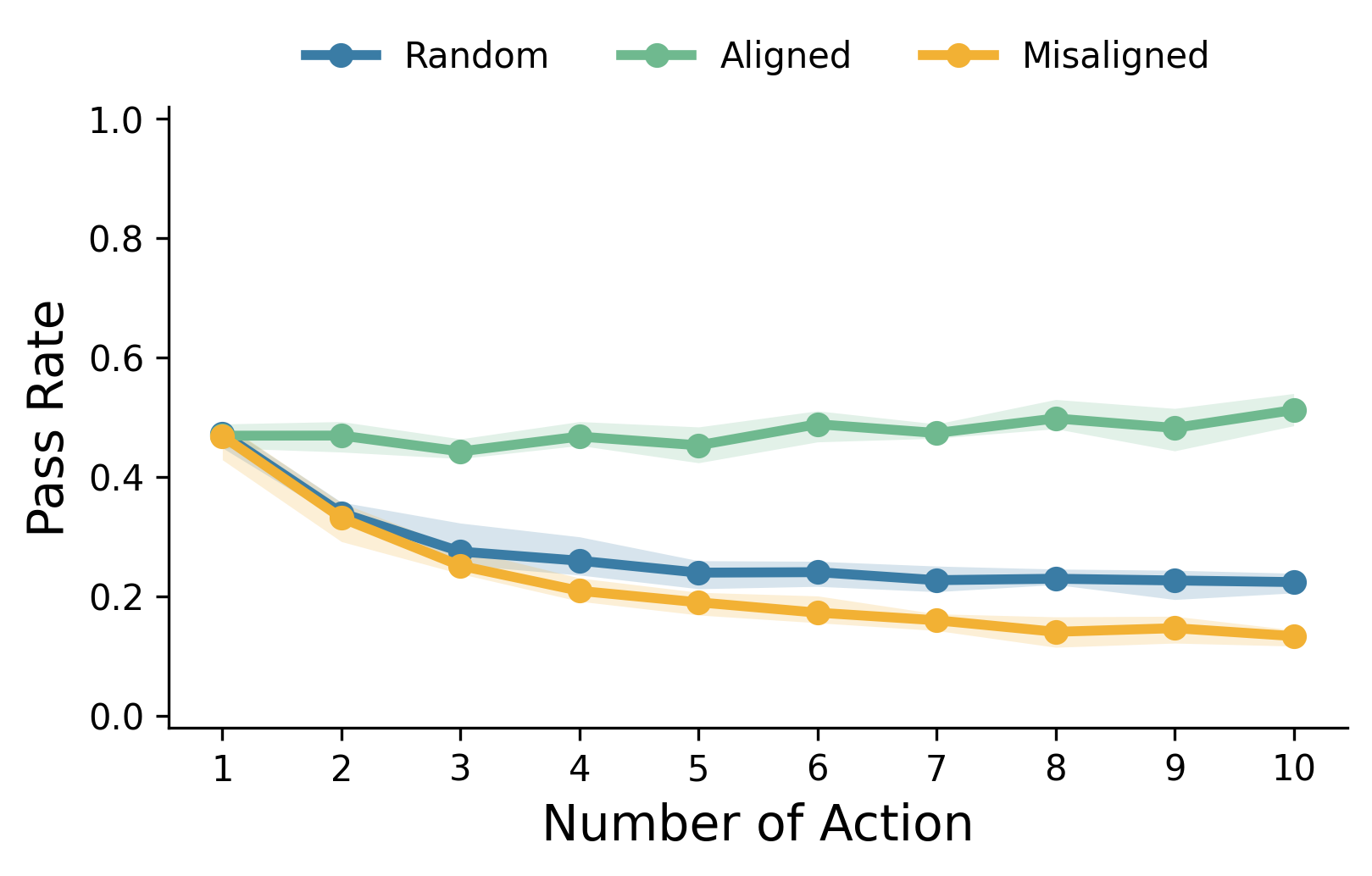}
        \caption{Pass rate under different guidance levels}
        \label{fig:exp2_a}
    \end{subfigure}
    \hfill
    \begin{subfigure}{0.32\textwidth}
        \centering
        \includegraphics[width=\linewidth]{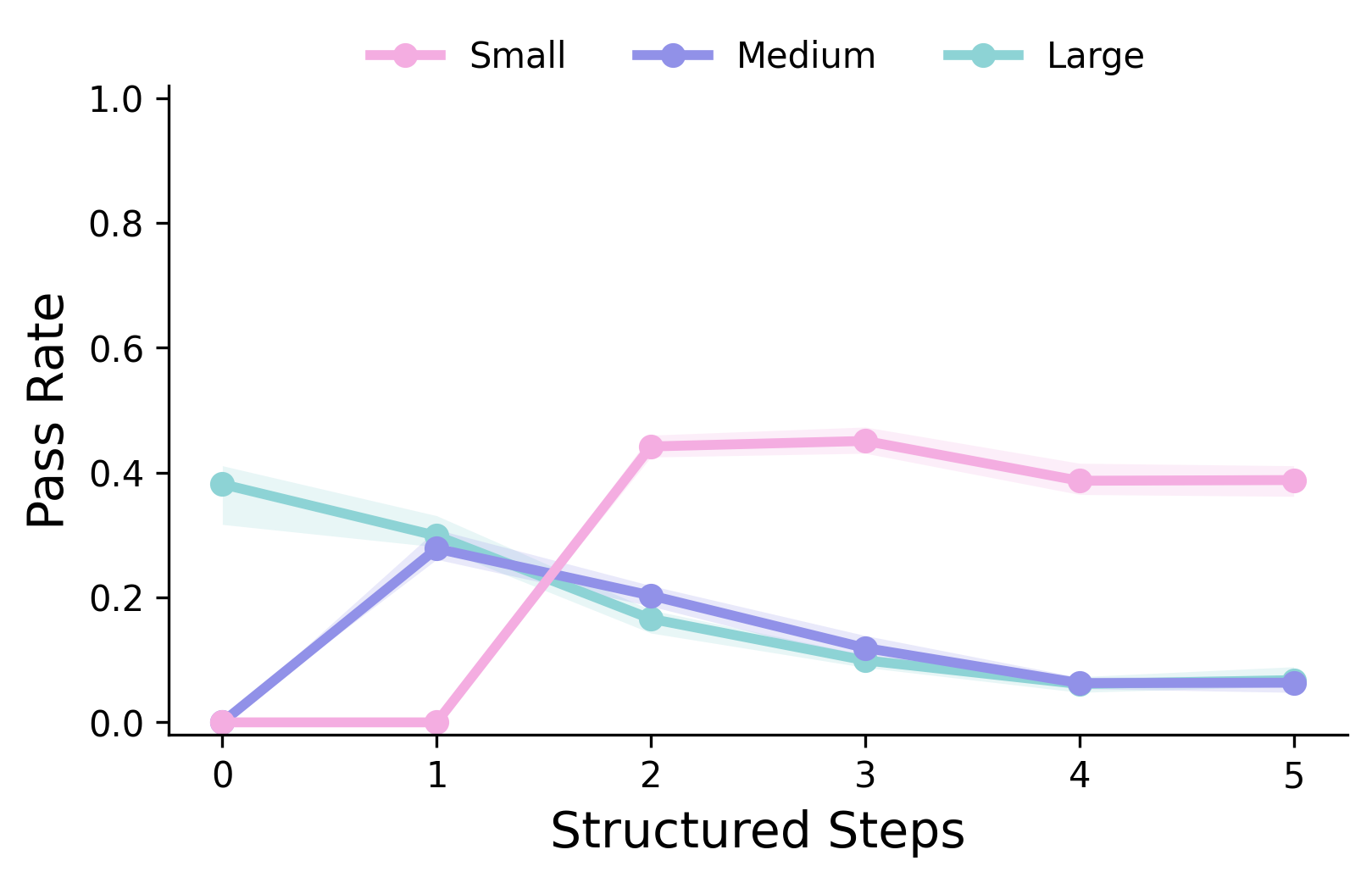}
        \caption{Pass rate under a 5 chunk candidate decomposition}
        \label{fig:2c}
    \end{subfigure}
    \hfill
    \begin{subfigure}{0.32\textwidth}
        \centering
        \includegraphics[width=\linewidth]{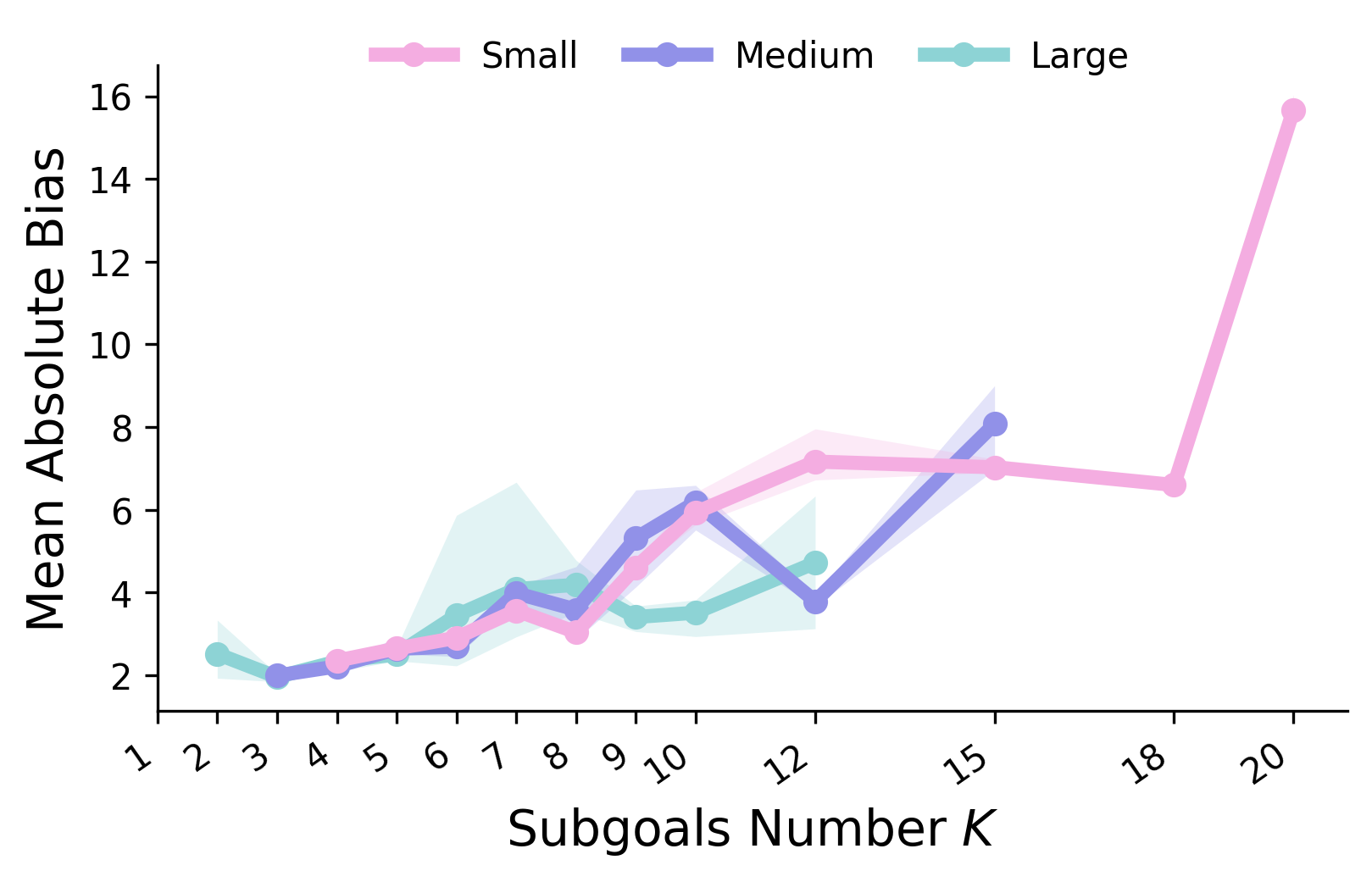}
        \caption{Final bias under different granularities}
        \label{fig:exp1_b}
    \end{subfigure}
    \hfill
    \begin{subfigure}{0.32\textwidth}
        \centering
        \includegraphics[width=\linewidth]{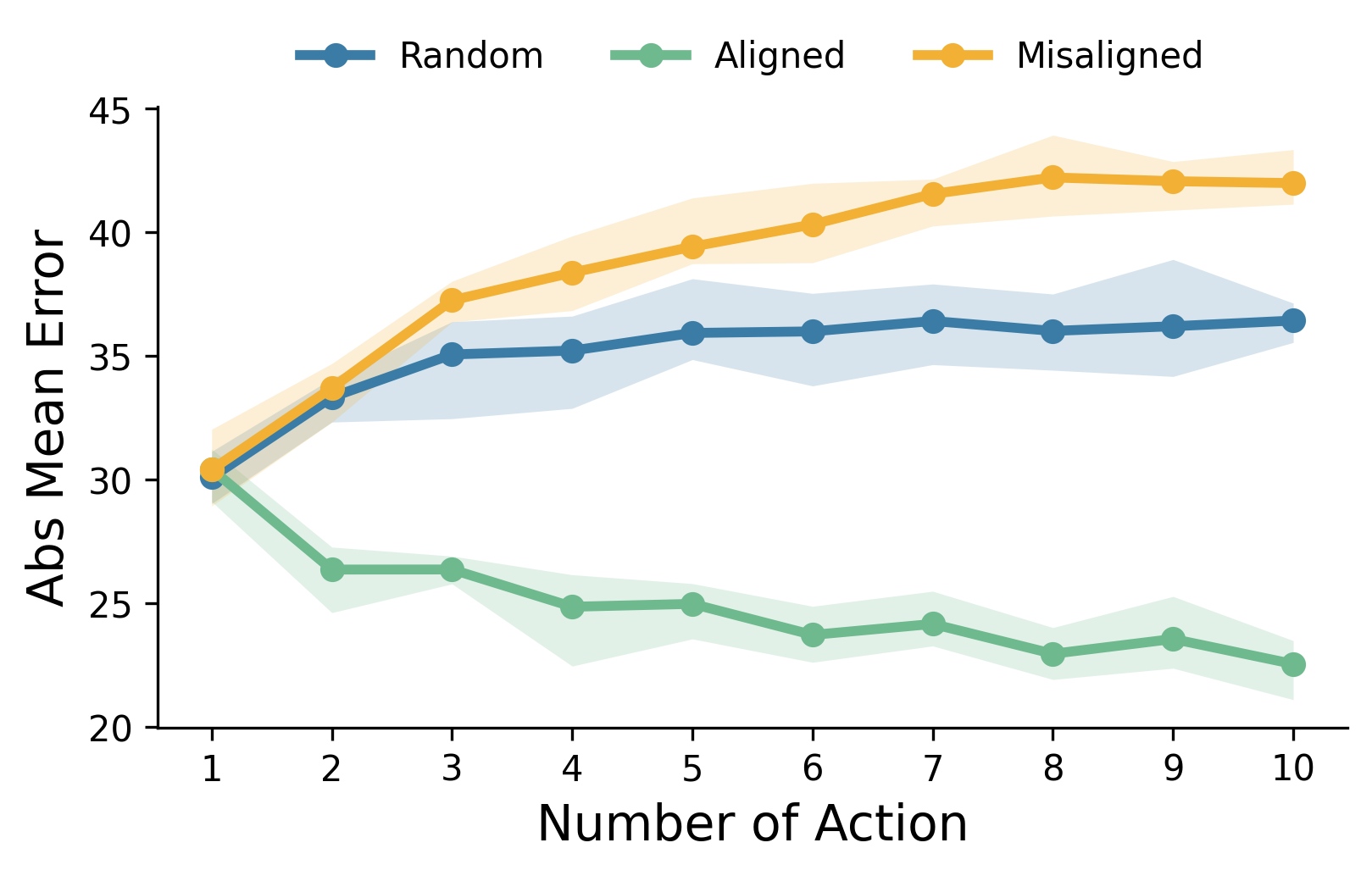}
        \caption{Final error under different guidance levels}
        \label{fig:exp2_b}
    \end{subfigure}
    \hfill
    \begin{subfigure}{0.32\textwidth}
        \centering
        \includegraphics[width=\linewidth]{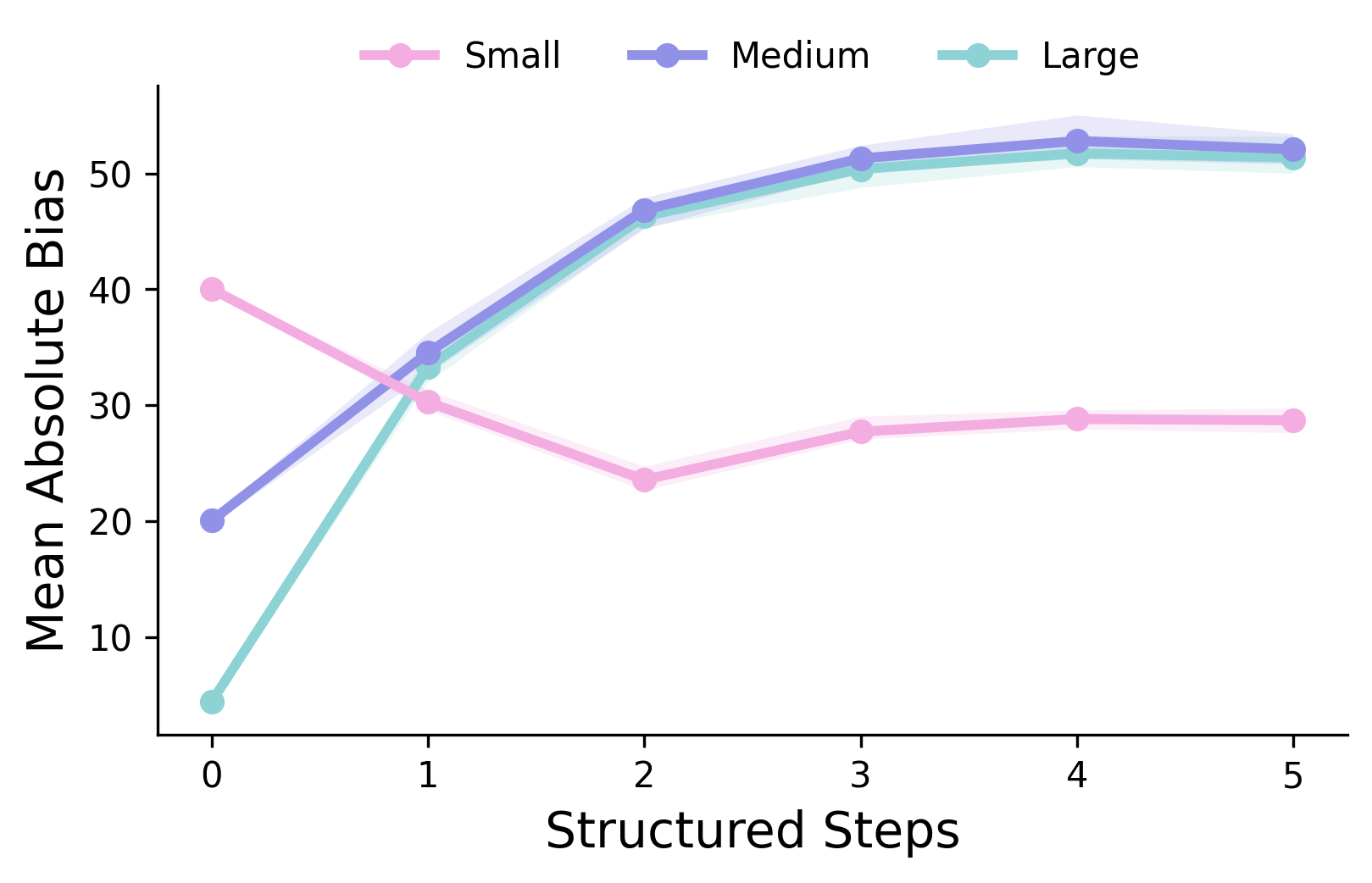}
        \caption{Final bias under a 5 chunk candidate decomposition}
        \label{fig:2f}
    \end{subfigure}
    \caption{\textbf{Three alignment principles for harness design.}
    (a, d) Granularity--capability: pass rate is non-monotonic in subgoal count $K$ and final bias grows with finer decomposition, with each agent peaking at a different $K$.
    (b, e) Guidance--evidence: aligned guidance improves pass rate and lowers final error as the action pool grows, while misaligned guidance does the opposite.
    (c, f) Partial harnessing: pass rate is unimodal in scaffold count and final bias rises beyond the peak, with stronger agents reaching the peak earlier.}
    \label{fig:sim_granularity}
\end{figure}

\textbf{Setup.}
We study a synthetic addition task in which an agent must reach a target sum $G = 100$. A \emph{harness} $\Delta_h$ decomposes $G$ into an ordered sequence of stage goals $(g_1, \dots, g_K)$ satisfying $\sum_{k=1}^{K} g_k = G$; for instance, $\Delta_h = (25, 25, 25, 25)$ when $K = 4$. Beyond decomposition, the harness also provides \emph{guidance} that shapes how the agent acts within each stage. Together, these two roles let the harness serve as a high-level planner that both issues subgoals and constrains the low-level action model used to realize them.

Within stage $k$, let $s_t = \sum_{\tau=1}^{t} a_\tau$ denote the progress accumulated after $t$ draws (reset to zero at the start of each stage). The agent maintains a pool of candidate truncated Gaussian action distributions
$$
\mathcal{P}=\left\{\mathcal{TN}_{[\ell_j,u_j]}(\mu_j,\sigma_j^2)\right\}_{j\in\mathcal J},
$$
where $\mathcal{TN}_{[\ell, u]}(\mu, \sigma^2)$ denotes a Gaussian with mean $\mu$ and variance $\sigma^2$ truncated to $[\ell, u]$. The harness's guidance is operationalized in our simplified formulation as a \emph{pruning} of this pool, leaving an admissible index set $\mathcal{J}_{\Delta_h} \subseteq \mathcal{J}$. At each draw, the agent \emph{greedily} selects $j^\star \in \mathcal{J}_{\Delta_h}$ best matched to the residual gap $g_k - s_{t-1}$, samples
$$
\tilde a \sim \mathcal{TN}_{[\ell_{j^\star},u_{j^\star}]} (\mu_{j^\star},\sigma_{j^\star}^2), \qquad a=\operatorname{round}(\tilde a)\in\mathbb Z.$$
and updates $s_t = s_{t-1} + a$. A stage is deemed successful once $s_t \in [g_k - \epsilon,\, g_k + \epsilon]$ with tolerance $\epsilon = 2$, after which the harness advances to stage $k+1$.

Based on the synthetic task above, we conduct three controlled experiments on harness execution. First, we study decomposition granularity by varying the number of subgoals $K$ across agents with different action scales. Second, we study guidance quality by varying the available action-pool size under different guidance strategies. In both experiments, we measure pass rate and final bias.
We further include additional control experiments on retry budget, completion tolerance, and guidance pruning in Appendices~\ref{app:granularity-experiment}, \ref{app:guidance-experiment}, and~\ref{app:partial-harness-experiment}..

\begin{figure}[!t]
    \centering
    \begin{subfigure}{0.47\textwidth}
        \centering
        \includegraphics[width=\linewidth]{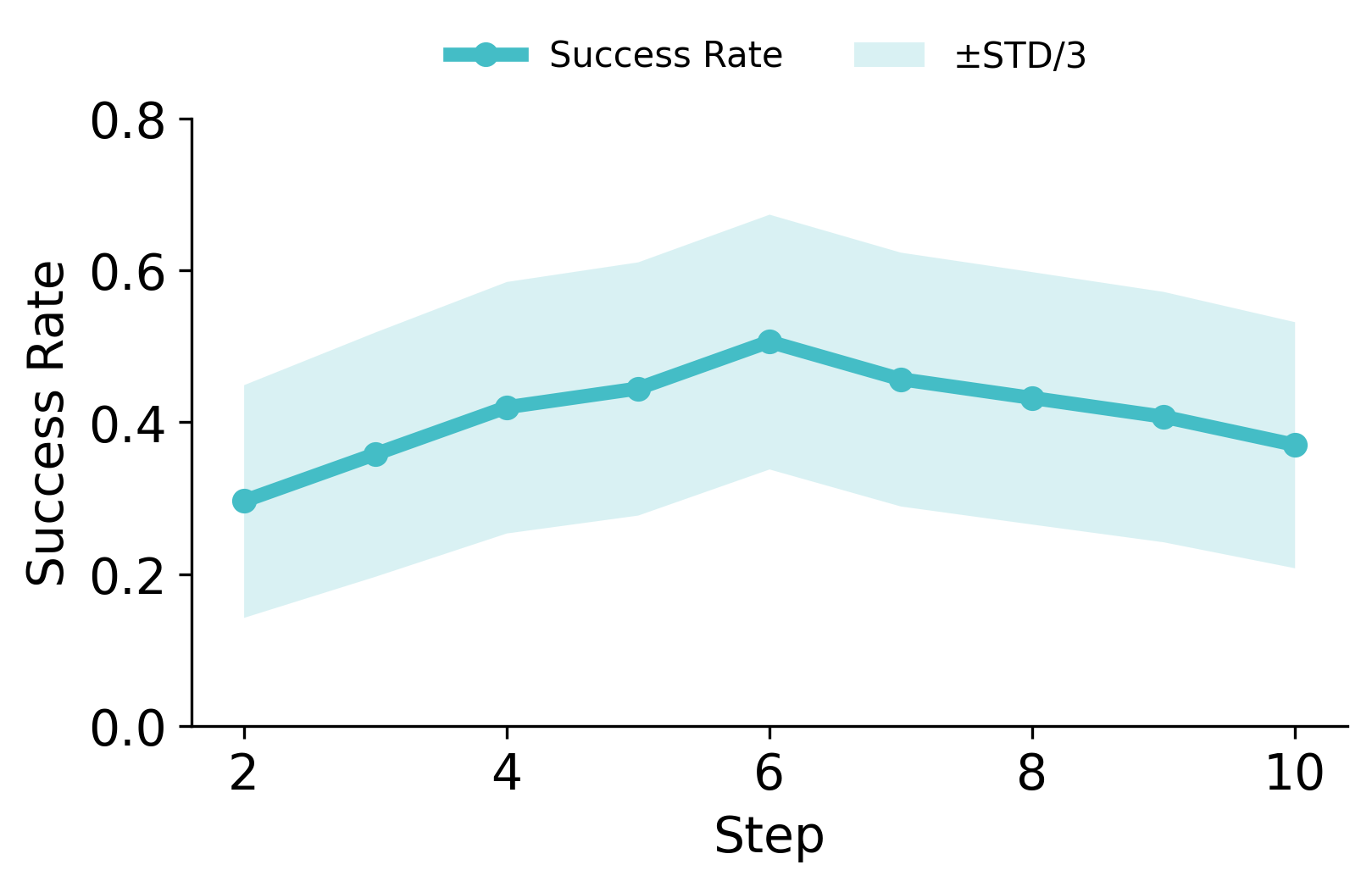}
        \caption{Workflow Granularity and Pass Rate on Terminal-Bench v2}
        \label{fig:real_step_trial1_replay}
    \end{subfigure}
    \hfill
    \begin{subfigure}{0.47\textwidth}
        \centering
        \includegraphics[width=\linewidth]{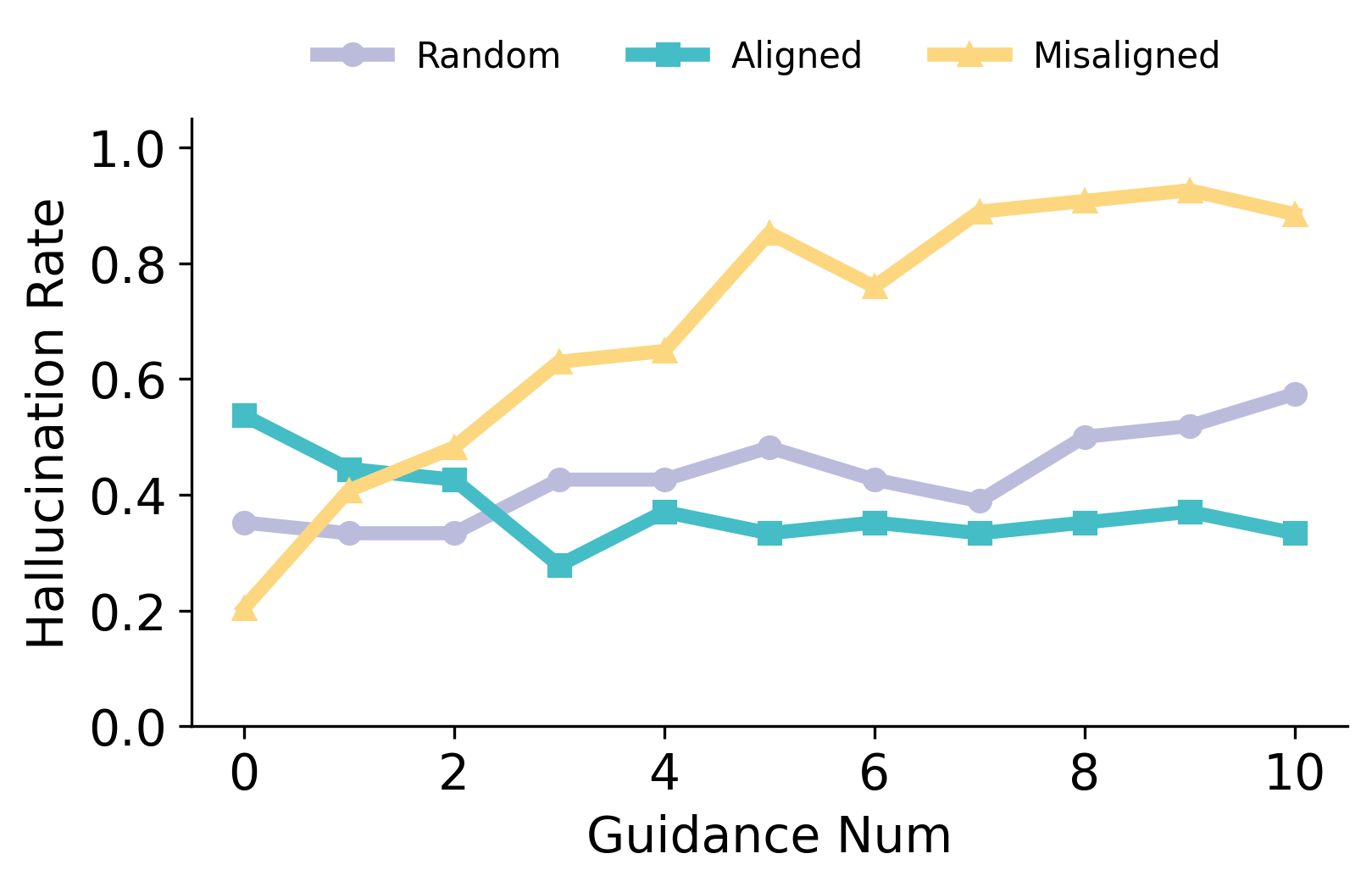}
        \caption{Guidance Quantity and Hallucination Rate under Different Alignment Conditions}
        \label{fig:real_step_trial123}
    \end{subfigure}
    \caption{
    Harness design trade-offs in real and controlled settings.
    (a) On Terminal-Bench v2, pass rate first improves and then declines as the workflow is split into more steps.
    (b) Hallucination rate under increasing amounts of guidance, comparing random, aligned, and misaligned guidance.
    }
    \label{fig:real_step_granularity}
\end{figure}

\textbf{I. Granularity--Capability Alignment.}
We vary the number of subgoals $K$ across three agents with different action scales, sweeping $G/K$ from values too coarse to be reached within the draw budget to values below the agent's natural action scale. Figures~\ref{fig:exp1_a} and~\ref{fig:exp1_b} show that pass rate peaks only when $G/K$ matches the agent's controllable progress, and that final bias grows under overly fine decomposition as local completion errors accumulate. Both effects match the granularity penalty $(\rho^{(M)}-\log M)_+$ in Theorem~\ref{thm:granularity_capability_alignment_informal}.

\textbf{II. Guidance--Evidence Alignment.}
We hold decomposition fixed and vary which action distributions remain available, comparing aligned guidance, misaligned guidance, and uniform random selection on the same pool. Figures~\ref{fig:exp2_a} and~\ref{fig:exp2_b} show that pass rate rises with pool size only under aligned guidance, while misaligned guidance produces lower pass rate and larger final error than uniform selection on the same pool. The benefit therefore comes from the sign of the retention gap rather than from additional constraints, matching Theorem~\ref{thm:guidance_retention_gap}.

\textbf{III. Partial Harnessing.}
We fix a chunk size and sweep the number of scaffolded chunks $r$, leaving the remaining length to the autonomous agent. Figures~\ref{fig:2c} and~\ref{fig:2f} show that pass rate is unimodal in $r$ for every agent, with the peak at smaller $r$ for stronger agents, while final bias rises beyond the peak as additional structure accumulates terminal error rather than reducing it. The same shape persists across alternative chunk sizes (Figure~\ref{fig:partial-extended}). The unimodal shape and agent-dependent peak location match the marginal stopping condition $\Delta(r;M)\le c_s$ in Theorem~\ref{thm:partial-harnessing}, with stronger agents reaching this condition at smaller $r$ owing to their lower autonomous tail cost $\kappa_{\mathrm{tail}}$.

\subsection{Real Data}
\label{sec:real}

\textbf{Setup.}
We conduct three experiments, each isolating one axis of harness design. (i) To probe \emph{decomposition depth}, a granularity sweep on Terminal-Bench-2~\citep{merrill2026terminalbench} varies sub-goal count $k\!\in\!\{1,\dots,10\}$ in the workflow given to a fixed agent. (ii) To probe \emph{guidance quantity and quality}, a controlled Plotly chart-analysis task varies the number of Aligned, Misaligned, and Random guidance steps and measures the hallucination rate. (iii) To probe partial harnessing, we conduct a case study on a Terminal-Bench-2 task by decomposing the solution into $10$ steps and progressively revealing a prefix of length $\ell$. Full configurations are deferred to Appendix~\ref{app:real}.

\textbf{(a) Granularity-capability alignment on Terminal-Bench v2.}
Figure~\ref{fig:real_step_trial1_replay} shows that pass rate first rises and then declines with sub-goal count, peaking at six steps. The drop on either side reflects the two failure regimes predicted by the theory: coarse workflows leave each step too large to complete reliably, and fine workflows fragment the task into milestones the model cannot meaningfully terminate at.

\textbf{(b) Guidance-evidence alignment and hallucination.}
Figure~\ref{fig:real_step_trial123} contrasts how hallucination scales with guidance count under three policies that differ only in whether they track task evidence. Under aligned guidance, hallucination stays near $0.35$ regardless of count; under misaligned guidance it rises monotonically from $0.20$ to nearly $0.90$; random guidance, which mixes the two, interpolates in between. The same scaling axis therefore has opposite effects on reliability depending on the sign of the retention gap, with stronger guidance amplifying whichever direction this sign points, matching the sign-amplification structure predicted by Theorem~\ref{thm:guidance_retention_gap}.

\subsection{A Case Study on Partial Harnessing}
\begin{figure}[t]
    \centering
    \includegraphics[width=\linewidth]{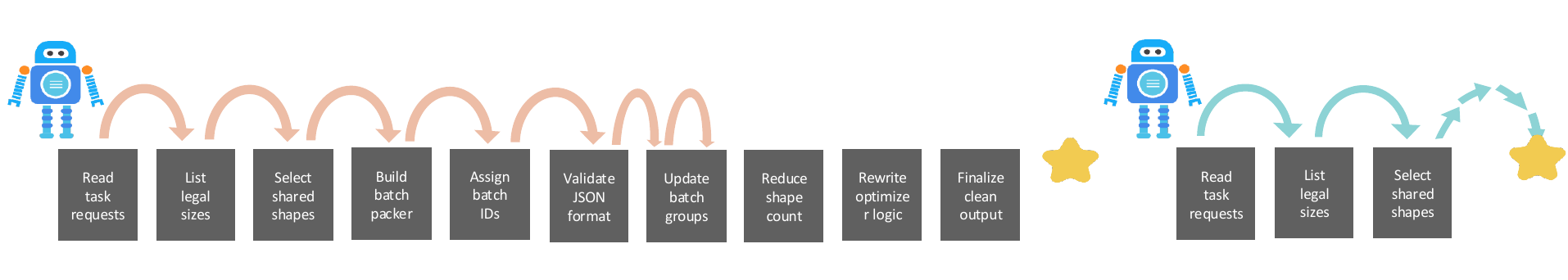}
    \caption{\textbf{Partial harnessing on llm-inference-batching-scheduler task.}
    Left: the fully specified workflow over-constrains execution and the agent gets lost in repeated intermediate revisions before reaching the final stages. Right: the partial workflow provides only an initial 3-step harness, after which the agent completes the remaining task through its own planning.}
    \label{fig:case_partial_harnessing}
\end{figure}

Beyond the empirical results above, Figure~\ref{fig:case_partial_harnessing} compares a partial harness and a full harness on the \texttt{llm-inference-batching-scheduler} task \cite{merrill2026terminalbench}. The partial harness contains only three initial steps: read the inputs, identify feasible sizes, and select shared shapes. After this initial scaffold, the agent takes over the remaining execution, enters its own iterative refinement loop, and completes the task. In contrast, the full harness specifies the downstream procedure in much greater detail, but the additional structure does not translate into better control. Instead, the agent becomes stuck inside the prescribed scaffold, repeatedly revising intermediate decisions without reaching the final solution. This case illustrates why partial harnessing can outperform full specification: a short scaffold can guide the agent into the right search space while leaving enough autonomy for it to plan, adapt, and finish the task.

%% file: text/6_conclusion.tex
\section{Conclusion}

We studied harness engineering as an inference-time alignment problem over agent execution trajectories. Separating a harness into decomposition---which sets the progress scale imposed on the agent---and guidance---which reshapes the local trajectory distribution---reveals that stronger harnesses are not necessarily better: decomposition helps only when its scale matches the agent's controllable progress, and guidance helps only when it preserves trajectories supported by the current evidence. Misalignment in either role turns scaffolding into a reliability bottleneck rather than an aid, with hallucination as one concrete instance of guidance retaining the wrong part of the local trajectory space.

This view recasts harness design from the problem of adding more structure to the problem of choosing what to specify, how strongly to specify it, and when to stop. In particular, partial harnessing shows that a harness need not cover the full execution path: once the remaining task falls within the agent's autonomous capability, continued control adds reliability cost without repaying it. The right harness is the smallest one that keeps the agent on a recoverable trajectory, and no smaller.
For clarity, we adopt simplified settings and assumptions to isolate the core mechanisms of harness design, while recognizing that real agent behavior is substantially more complex; extending these principles to richer execution dynamics and real world agent systems remains an important direction for future work.

%% file: text/999_apendix.tex
\input{text/999_2_sim_experiments}

\input{text/999_3_real_experiments}

\input{text/999_1_proof}

\input{text/999_4_prompt}

%% file: text/999_2_sim_experiments.tex
\section{Synthetic Experiments Details}
\label{app:additional_sim}

\subsection{Task and Metrics}
\label{app:synthetic-setup}

The cumulative-progress task is parameterized by a total target $G$, a tolerance $\epsilon$, and a per-stage draw budget $R$. A harness decomposes $G$ into an ordered subgoal sequence $(g_1,\ldots,g_K)$ with $\sum_k g_k=G$ and provides the agent with a pool of action distributions, each a truncated Gaussian specified by $(\mu,\sigma,\ell,u)$. At stage $k$, local progress $s_k$ is initialized to zero. The agent draws $z\sim\mathcal{N}(\mu,\sigma^2)$, rejects samples outside $[\ell,u]$, rounds to the nearest integer, and clips back into $[\ell,u]$; the resulting action $a$ updates $s_k\leftarrow s_k+a$. A stage succeeds once $s_k\in[g_k-\epsilon,g_k+\epsilon]$ after at least one draw, fails by overshoot if $s_k>g_k+\epsilon$, and fails by draw limit if $R$ draws are exhausted before the tolerance window is reached. An episode succeeds if and only if all stages succeed.

We report two metrics throughout. \emph{Pass rate} is the fraction of episodes in which all stages succeed and measures whether the harness keeps the run on a recoverable trajectory. \emph{Absolute final bias} is $|\sum_k s_k - G|$ and measures how much terminal accuracy the harness preserves even when the run completes. The two metrics can dissociate, and reporting both separates "the harness lets the agent finish" from "the harness lets the agent finish accurately."

\subsection{Granularity--Capability Experiment}
\label{app:granularity-experiment}

This experiment isolates decomposition by varying $K$ while fixing a single action distribution per agent. We use $G=100$, $\epsilon=2$, $R=4$, and sweep $K\in\{1,2,3,4,5,6,7,8,9,10,12,15,18,20\}$. The three agents differ only in action scale: Small $(\mu=6,\sigma=2,[4,8])$, Medium $(\mu=8,\sigma=3,[5,11])$, and Large $(\mu=10,\sigma=4,[6,14])$. Small $K$ produces large subgoals that may exceed what an agent can reach within $R$ draws, while large $K$ produces subgoals below the agent's natural action scale, where rounding-and-clipping introduces residual error at every stage. The three agents therefore peak at different $K$, sweeping out the granularity--capability frontier predicted by Theorem~\ref{thm:granularity_capability_alignment_informal}.

\subsection{Guidance--Evidence Experiment}
\label{app:guidance-experiment}

This experiment isolates guidance quality from action-pool size. We construct a pool of ten distributions indexed by $i=0,\ldots,9$, with
$$
\mu_i=4.0+1.2i,\qquad \sigma_i=1.5+0.35i,\qquad
\ell_i=\max\{1,\lfloor\mu_i-2.0\rfloor\},\qquad
u_i=\lceil\mu_i+2\sigma_i\rceil.
$$
At the start of each episode we randomly retain $N\in\{1,\ldots,10\}$ distributions, then compare three selection policies on the retained pool: aligned guidance (favoring distributions whose mean is closest to the remaining progress), misaligned guidance (favoring distributions whose mean is farthest), and uniform random selection. We use $G=100$, $K=5$, $\epsilon=4$, and $R=5$. Holding pool size fixed across the three policies isolates the sign of the retention gap from the size of the local action space, which is the comparison Theorem~\ref{thm:guidance_retention_gap} predicts to be decisive.

\subsection{Partial Harnessing Experiment}
\label{app:partial-harness-experiment}

This experiment sweeps a progress slice as in Theorem~\ref{thm:partial-harnessing}. We fix a chunk size $c$ and a number of scaffolded chunks $r$, defining the harness
$$
\Delta_h=(\underbrace{c,\ldots,c}_{r\text{ chunks}},\,G-rc),
$$
where $r=0$ leaves the entire task to the autonomous agent and $r=\lfloor G/c\rfloor$ scaffolds it fully. We use $G=100$, $\epsilon=2$, $R=10$, and the Small, Medium, and Large agents from Appendix~\ref{app:granularity-experiment}. Sweeping $r$ traces the slice objective $F(r)$ for each agent. A stronger agent has lower autonomous tail cost $\kappa_{\mathrm{tail}}$ and reaches $\Delta(r;M)\le c_s$ at smaller $r$, predicting earlier peaks for stronger agents.

\subsection{Additional Control Experiments}
\label{app:additional-experiments}

The experiments below vary harness or task parameters that could plausibly explain the patterns above through a simpler mechanism. In each case the result reproduces the alignment predictions rather than displacing them.

\paragraph{Retry budget.}
Holding $K=4$ and the granularity setup of Appendix~\ref{app:granularity-experiment} fixed, we sweep the per-stage draw budget $R\in\{1,\ldots,10\}$. Increasing $R$ raises pass rate and lowers final bias, but the curves saturate quickly once the binding failure mode shifts from draw-limit exhaustion to action-scale mismatch and overshoot (Figure~\ref{fig:attempt}). Retry budget is therefore a recoverability resource that interacts with granularity rather than substituting for it: more attempts cannot fix a stage whose requested progress lies outside the agent's reachable scales, matching the $-\log M_t$ correction in Theorem~\ref{thm:granularity_capability_alignment_informal}.

\begin{figure}[t]
    \centering
    \begin{subfigure}{0.47\textwidth}\centering
        \includegraphics[width=\linewidth]{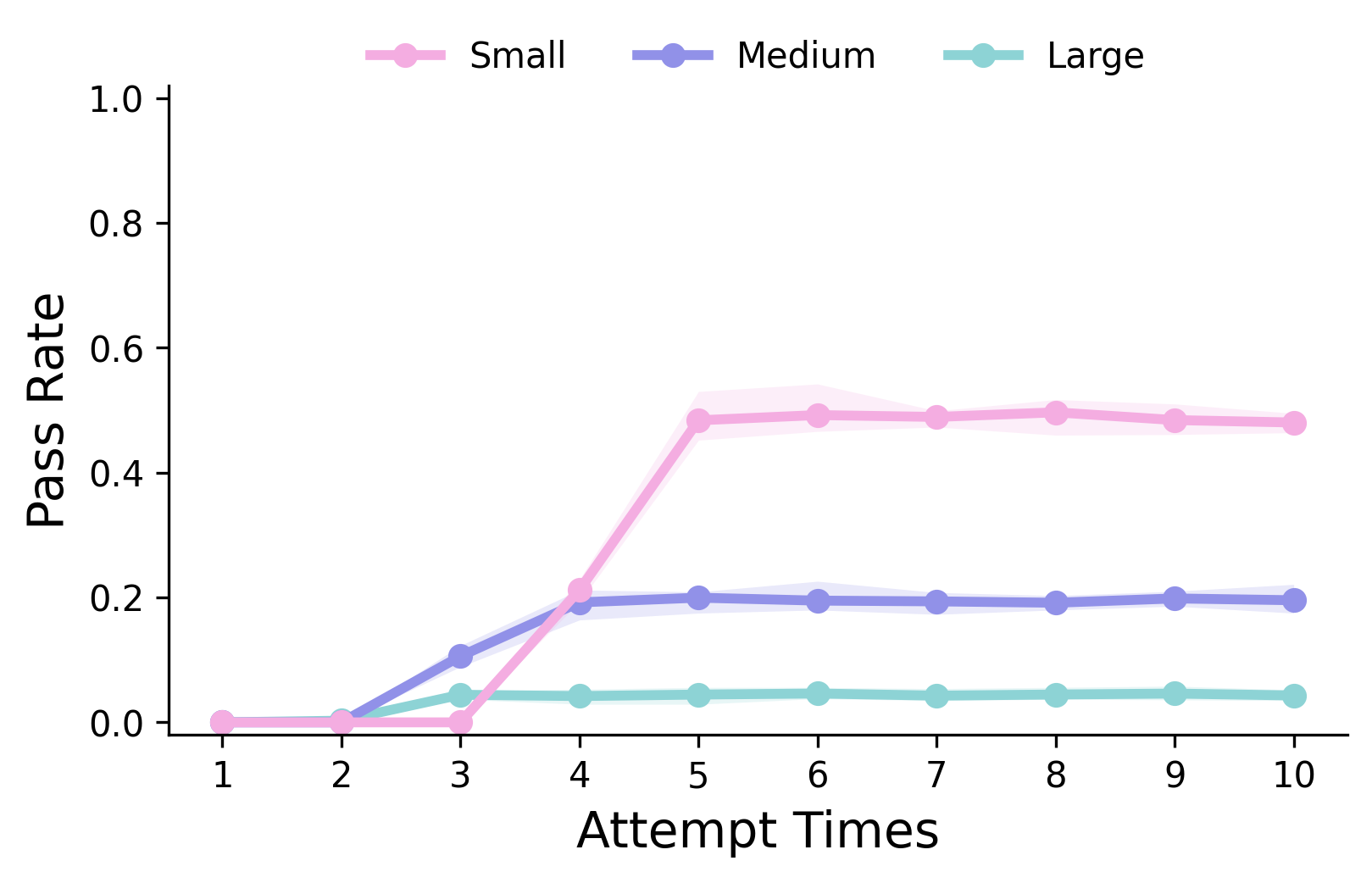}
        \caption{Pass rate vs.\ attempt budget}\label{fig:attempt_pass_rate}
    \end{subfigure}\hfill
    \begin{subfigure}{0.47\textwidth}\centering
        \includegraphics[width=\linewidth]{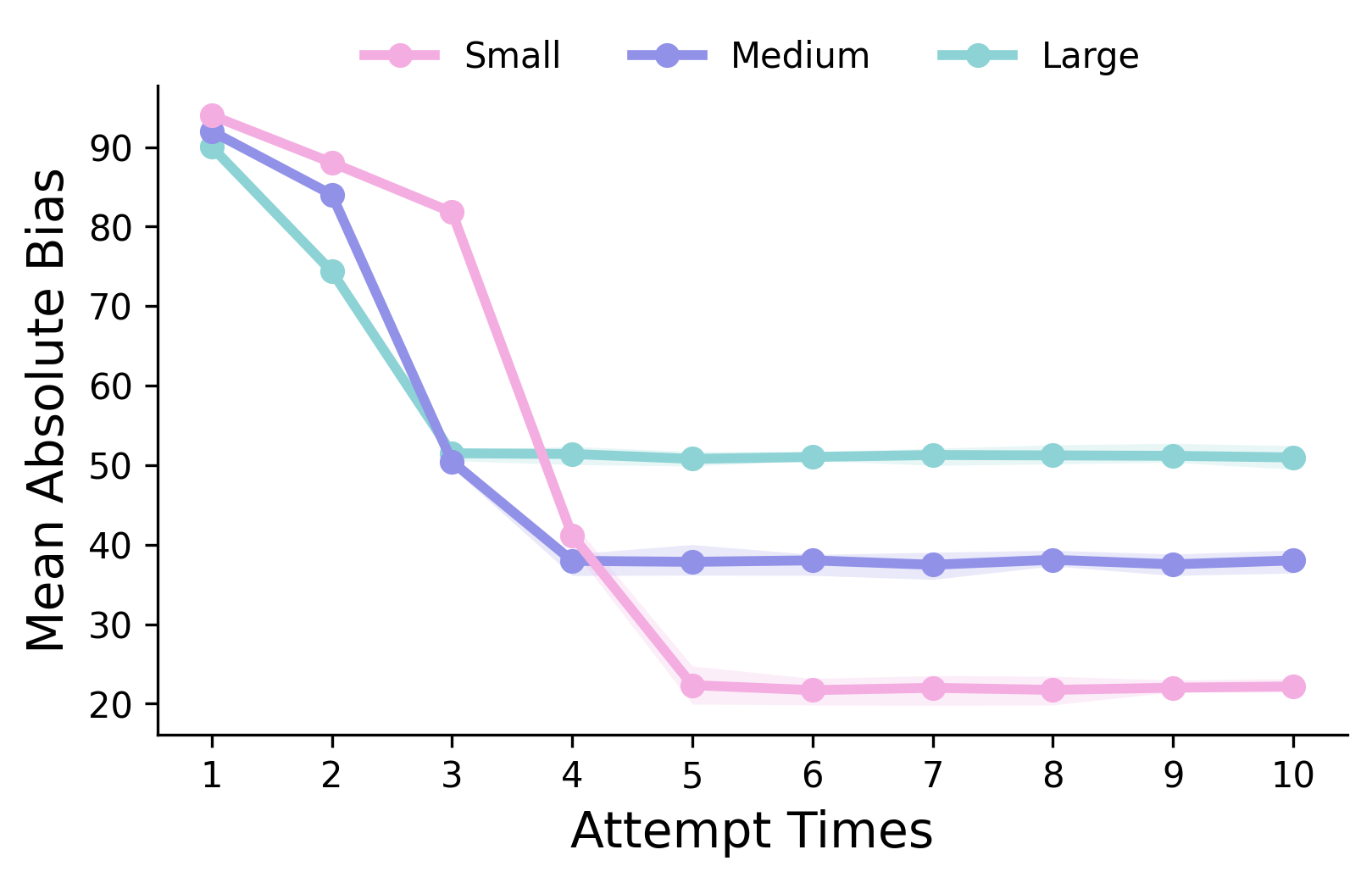}
        \caption{Final bias vs.\ attempt budget}\label{fig:attempt_bias}
    \end{subfigure}
    \caption{\textbf{Retry budget improves recoverability but cannot overcome capability mismatch.} Pass rate and final bias both saturate once attempts no longer relax the binding constraint.}
    \label{fig:attempt}
\end{figure}

\paragraph{Completion tolerance.}
Fixing $G=100$, $K=10$, and $R=4$, we sweep $\epsilon\in\{0,1,\ldots,10\}$. Larger tolerance raises pass rate by widening each stage's acceptance window, but for agents whose action scale undershoots the subgoal size it accepts premature stage completion and accumulates systematic terminal under-progress (Figure~\ref{fig:tolerance}). Tolerance therefore trades recoverability against terminal accuracy, and the right setting depends on the agent's action scale rather than functioning as a universal robustness knob.

\begin{figure}[t]
    \centering
    \begin{subfigure}{0.47\textwidth}\centering
        \includegraphics[width=\linewidth]{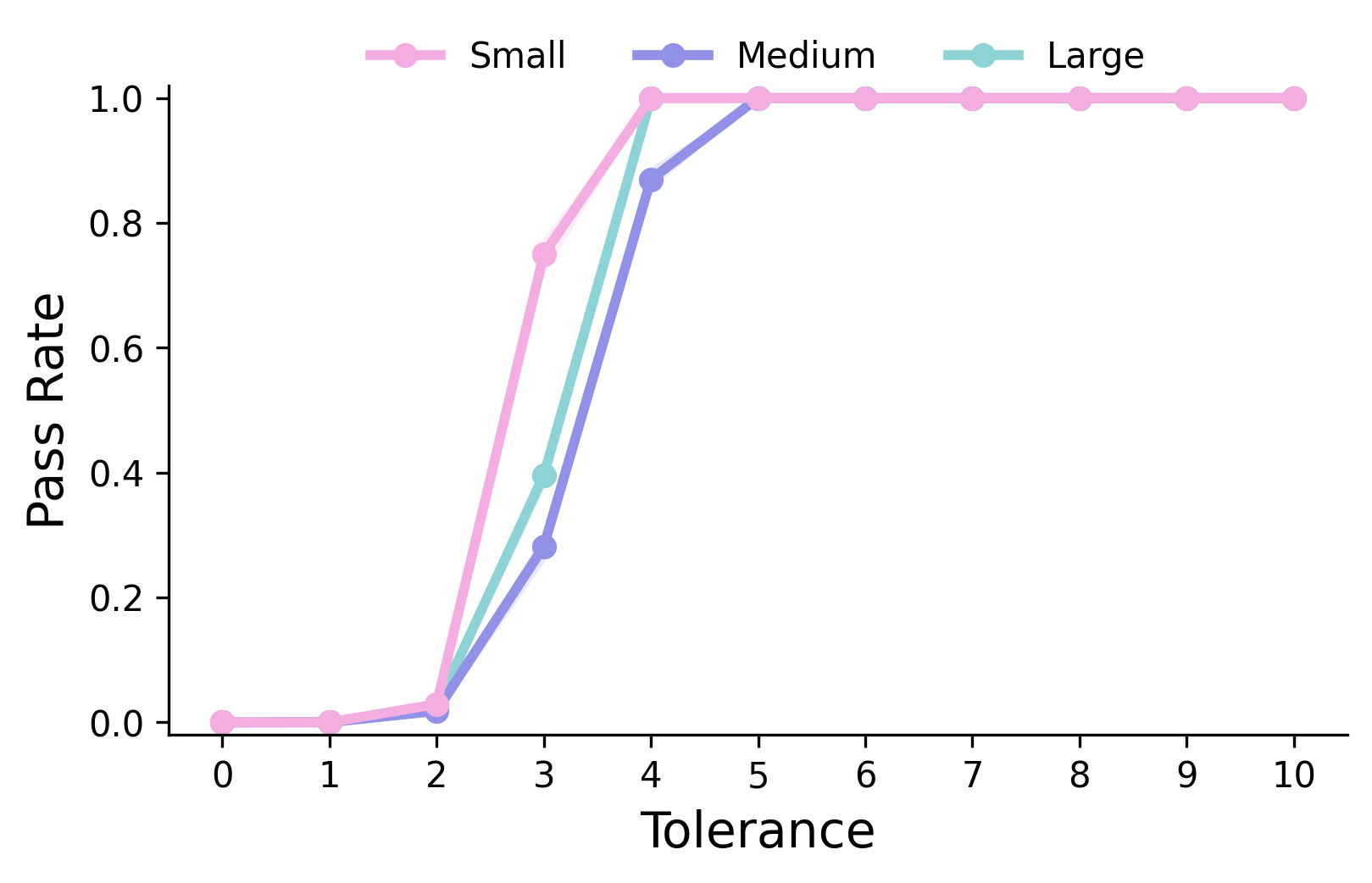}
        \caption{Pass rate vs.\ tolerance}\label{fig:tolerance_pass_rate}
    \end{subfigure}\hfill
    \begin{subfigure}{0.47\textwidth}\centering
        \includegraphics[width=\linewidth]{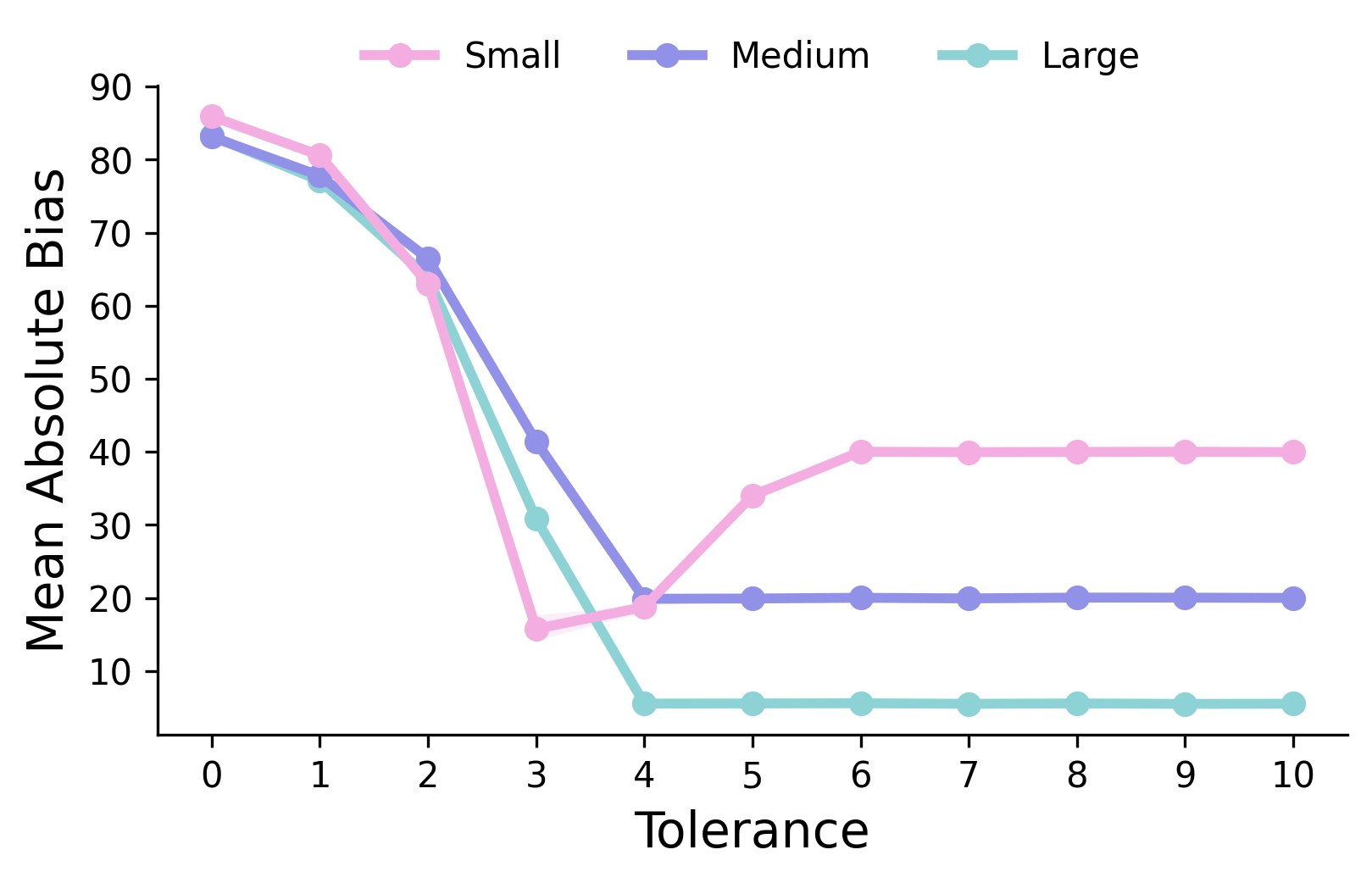}
        \caption{Final bias vs.\ tolerance}\label{fig:tolerance_bias}
    \end{subfigure}
    \caption{\textbf{Tolerance trades recoverability for terminal accuracy.} Pass rate rises sharply with $\epsilon$ but final bias grows when tolerance exceeds the agent's natural per-stage variation.}
    \label{fig:tolerance}
\end{figure}

\paragraph{Aggressive guidance pruning.}
Starting from a three-distribution pool---Model 1 $(\mu=6,\sigma=2,[4,8])$, Model 2 $(\mu=8,\sigma=3,[5,11])$, Model 3 $(\mu=10,\sigma=6,[4,14])$---we randomly remove $m\in\{0,1,2\}$ distributions and sweep $K$ as in Appendix~\ref{app:granularity-experiment}, with the agent greedily selecting the closest-mean distribution at each step. Stronger pruning lowers pass rate and amplifies final bias, with the largest gap appearing under fine decomposition where local action-scale mismatch is most exposed (Figure~\ref{fig:hard_guidance}). Pruning is therefore a negative-gap intervention even when individual remaining distributions are reasonable: the retention-gap sign in Theorem~\ref{thm:guidance_retention_gap} depends on which trajectories are removed, not how many.

\begin{figure}[t]
    \centering
    \begin{subfigure}{0.47\textwidth}\centering
        \includegraphics[width=\linewidth]{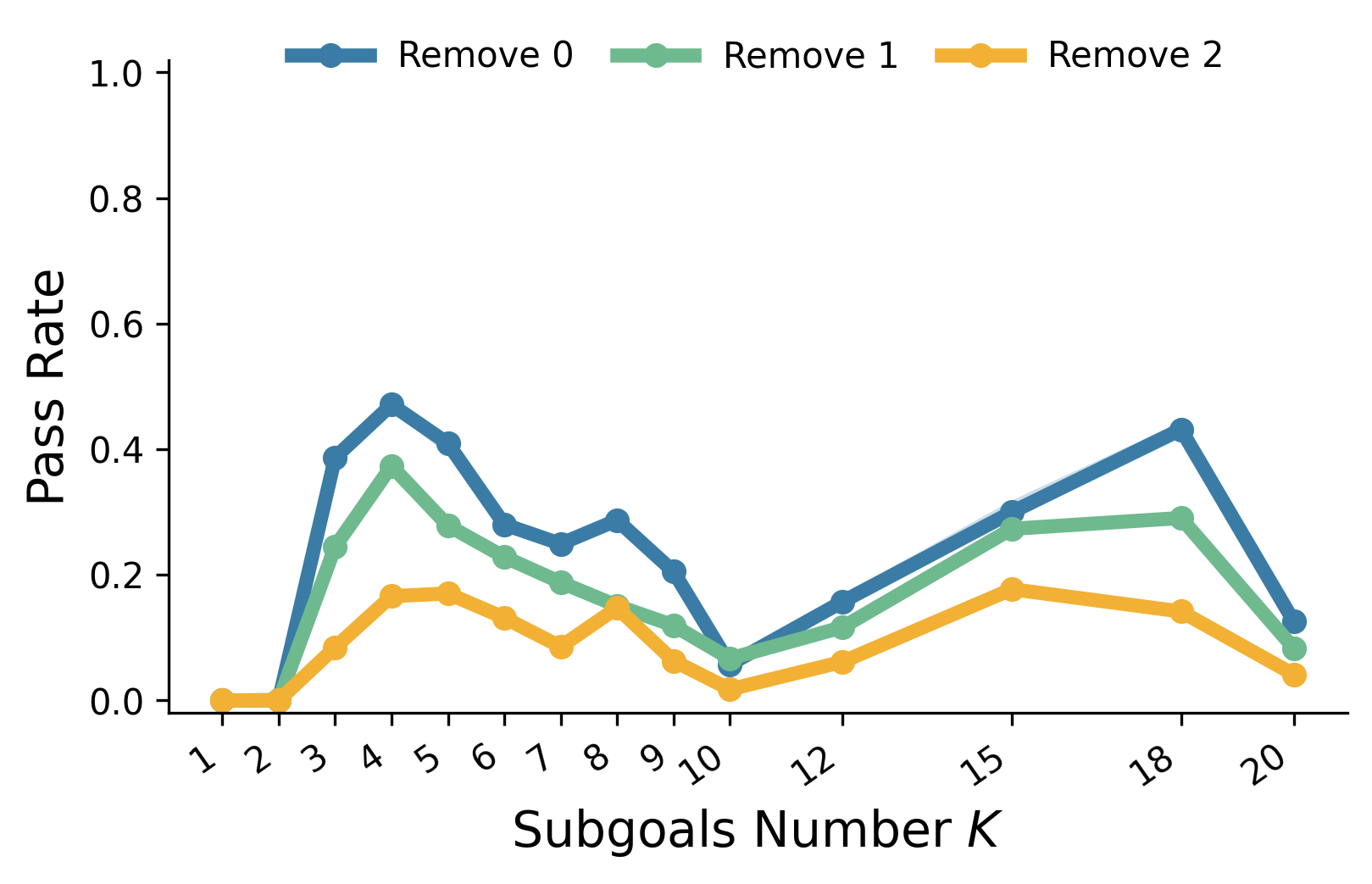}
        \caption{Pass rate vs.\ subgoal count under pruning}\label{fig:hard_guidance_pass_rate}
    \end{subfigure}\hfill
    \begin{subfigure}{0.47\textwidth}\centering
        \includegraphics[width=\linewidth]{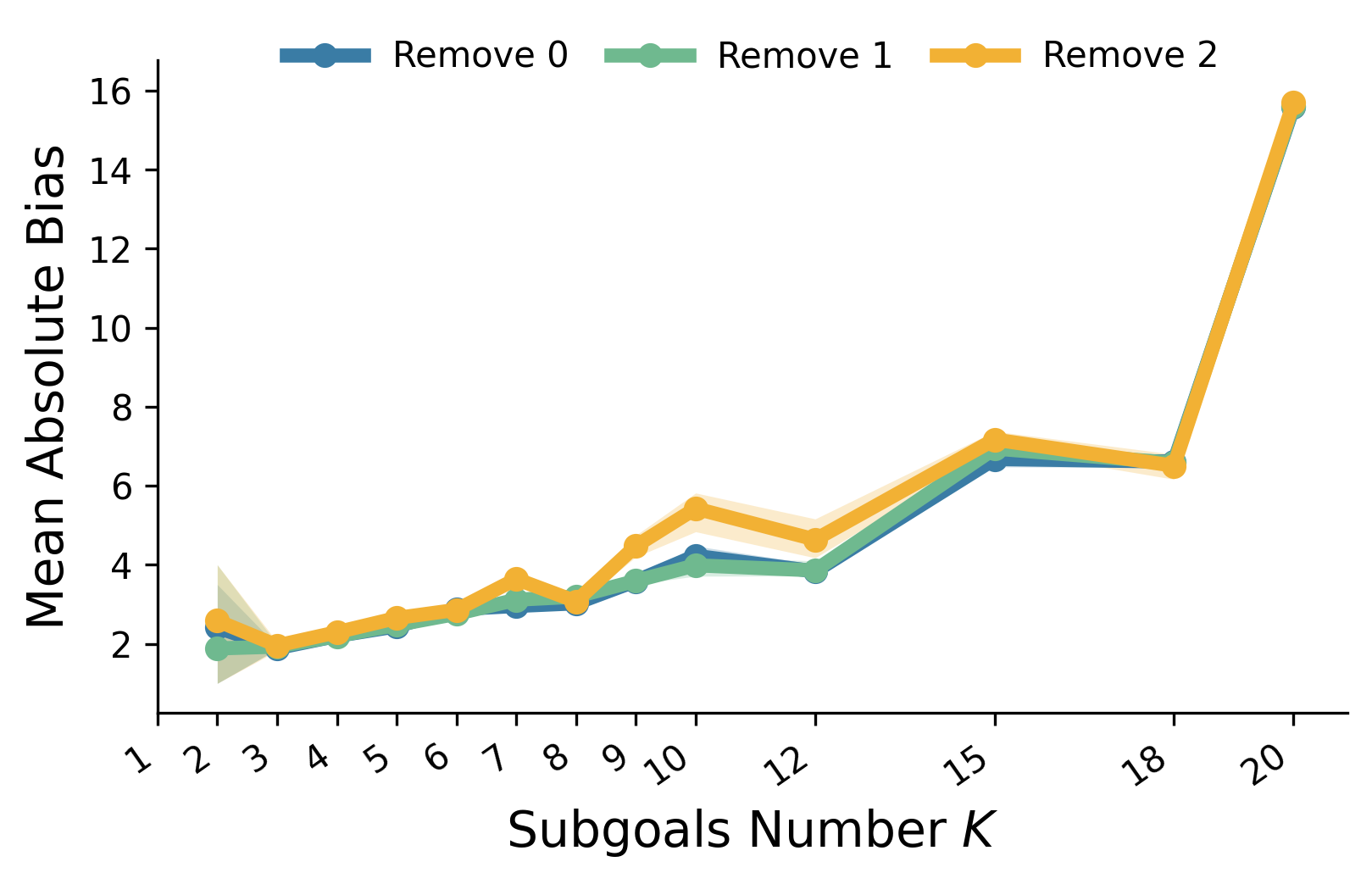}
        \caption{Final bias vs.\ subgoal count under pruning}\label{fig:hard_guidance_bias}
    \end{subfigure}
    \caption{\textbf{Aggressive pruning amplifies granularity mismatch.} Removing more action distributions worsens both pass rate and final bias, with the largest gap appearing under fine decomposition where local mismatch is most exposed.}
    \label{fig:hard_guidance}
\end{figure}

\paragraph{Extended partial-harnessing sweeps.}
We complement the chunk size $c=20$ result reported in the main text with $c=10$ and $c=25$ (Figure~\ref{fig:partial-extended}). The unimodal shape and the agent-dependent peak location persist across chunk sizes, with finer chunks placing the peak at larger $r$ as predicted by the marginal rule.

\begin{figure}[t]
    \centering
    \begin{subfigure}{0.47\textwidth}\centering
        \includegraphics[width=\linewidth]{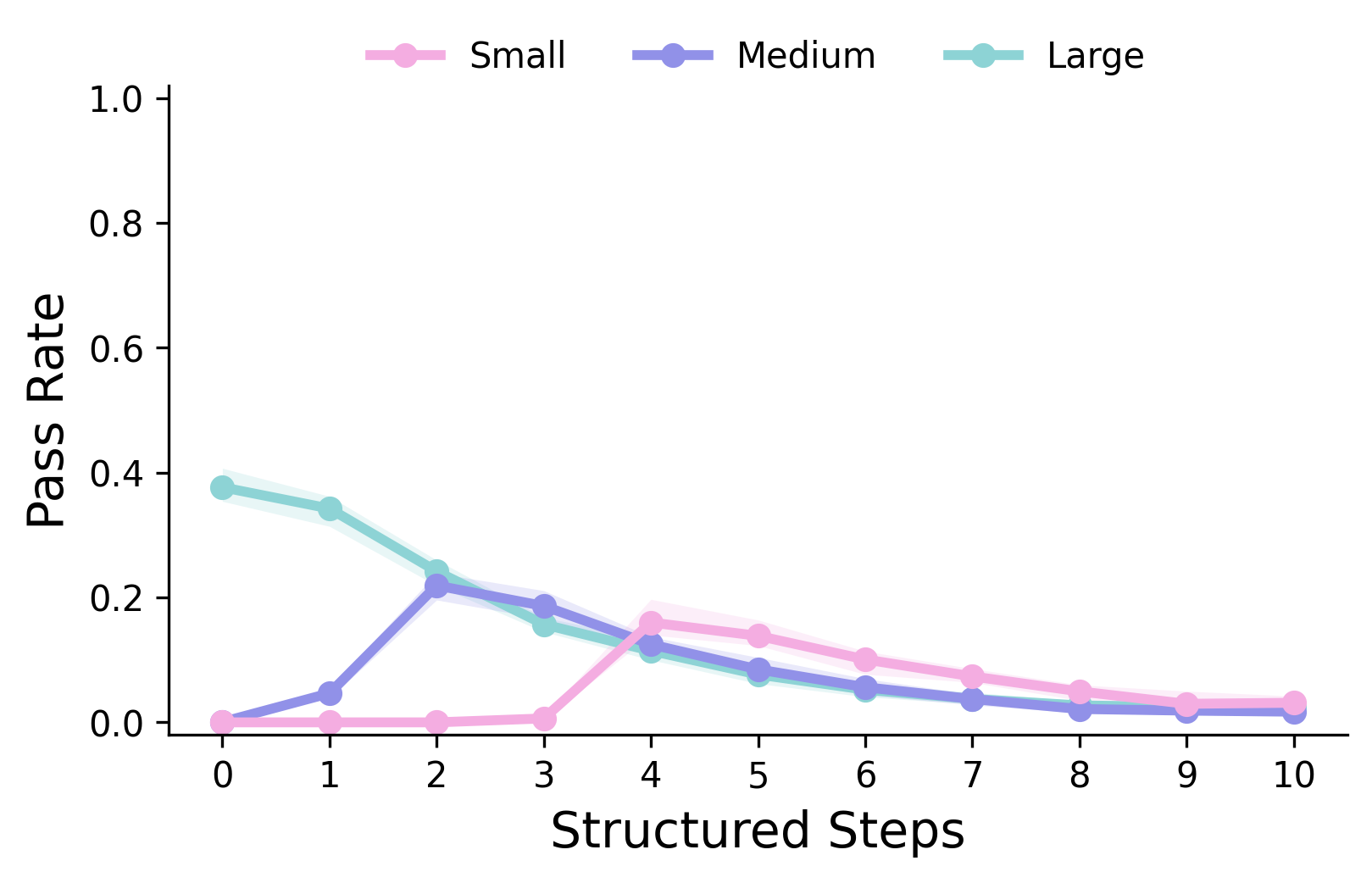}
        \caption{Pass rate, $c=10$}
    \end{subfigure}\hfill
    \begin{subfigure}{0.47\textwidth}\centering
        \includegraphics[width=\linewidth]{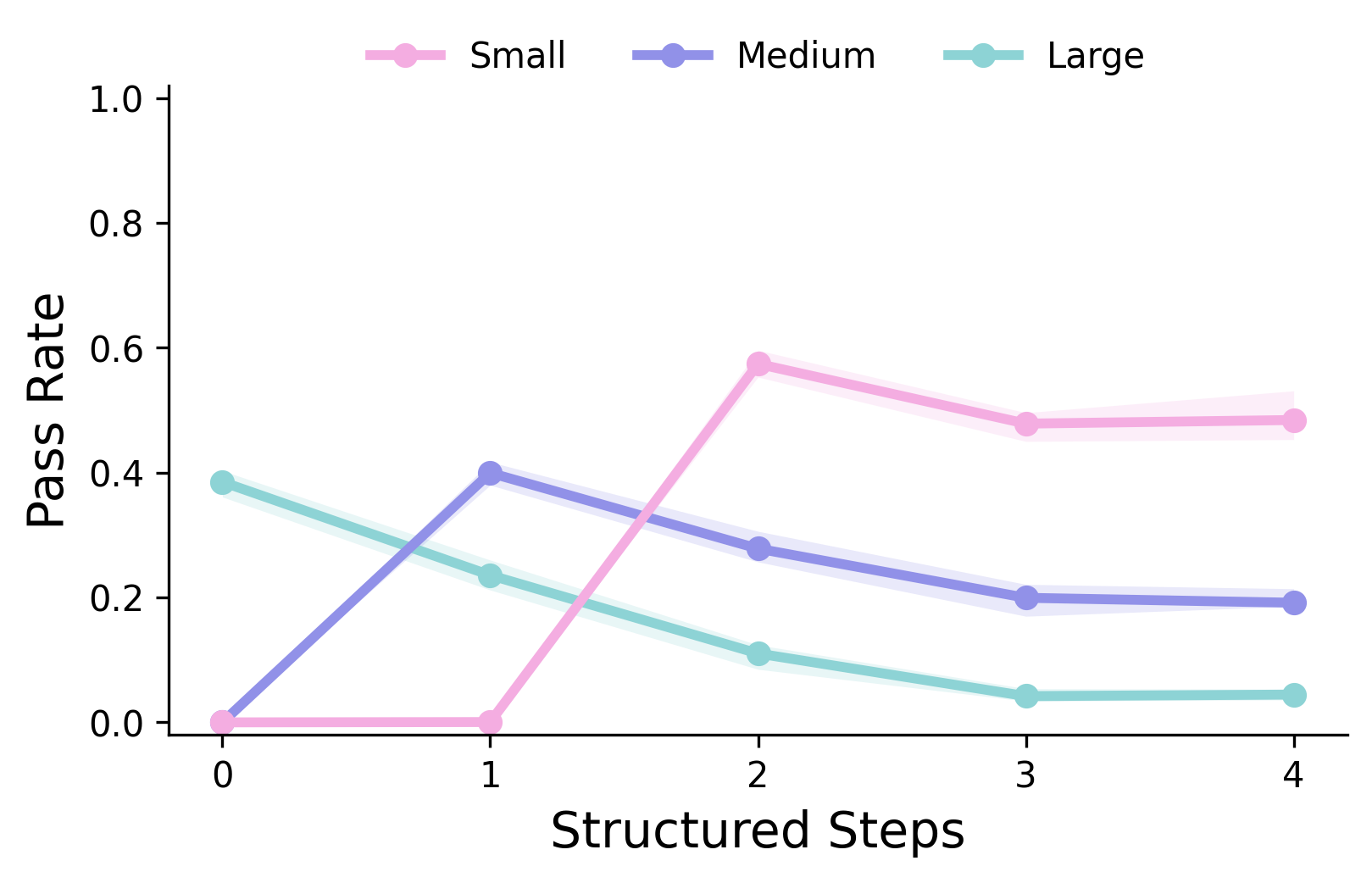}
        \caption{Pass rate, $c=25$}
    \end{subfigure}
    \begin{subfigure}{0.47\textwidth}\centering
        \includegraphics[width=\linewidth]{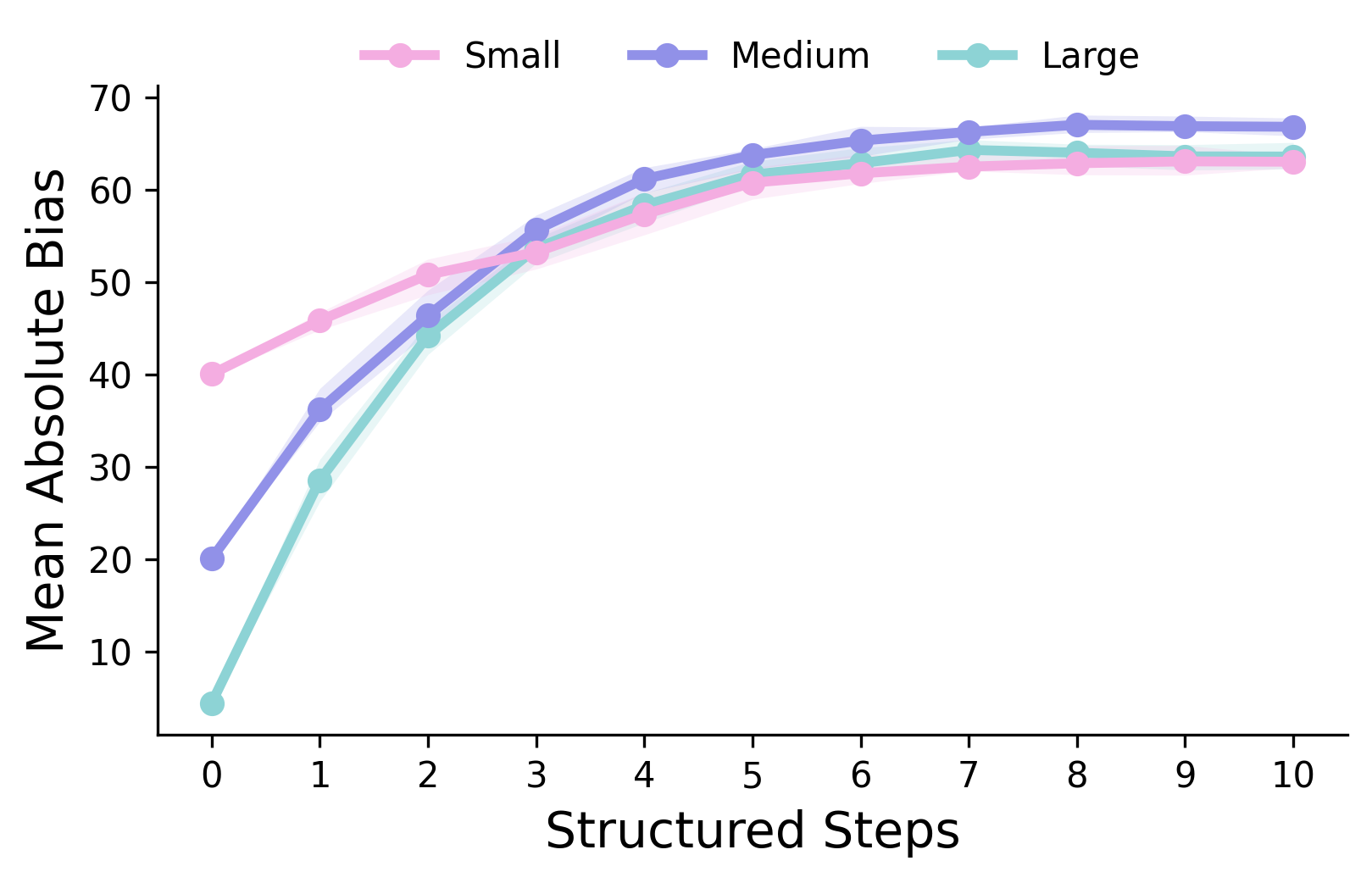}
        \caption{Final bias, $c=10$}
    \end{subfigure}\hfill
    \begin{subfigure}{0.47\textwidth}\centering
        \includegraphics[width=\linewidth]{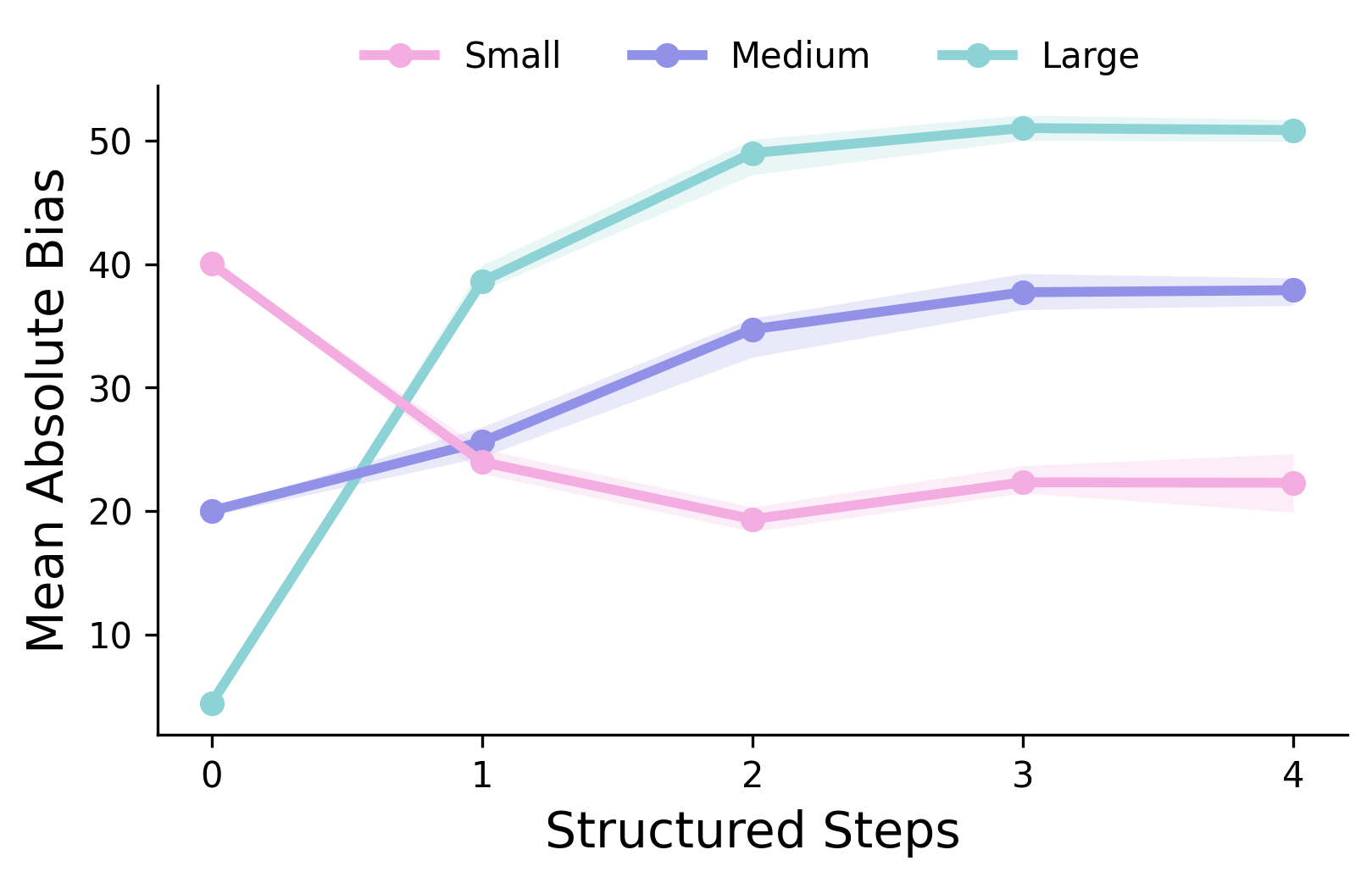}
        \caption{Final bias, $c=25$}
    \end{subfigure}
    \caption{\textbf{Marginal stopping persists across chunk sizes.} Both pass rate and final bias remain unimodal in $r$ at $c=10$ and $c=25$, with peak location shifting as predicted by Theorem~\ref{thm:partial-harnessing}.}
    \label{fig:partial-extended}
\end{figure}

%% file: text/999_3_real_experiments.tex
\section{Real World Task Details}
\label{app:real}

\subsection{Granularity experiment on Terminal-Bench-2}
\label{app:tb2_step_setup}

This appendix specifies the full configuration of the granularity sweep referenced in Section~\ref{sec:real}. The experiment varies the workflow step count $k\!\in\!\{1,\dots,10\}$ supplied to a fixed solver and measures pass rate on real software-engineering tasks from \texttt{terminal-bench@2.0}, with all non-$k$ factors held constant.

\subsubsection{Tasks}

We sample tasks from \texttt{terminal-bench@2.0} using the official Terminal-Bench leaderboard's \texttt{GLM-5} task-level correctness statistics (snapshot: May 2026). The selection rule is to focus on tasks of moderate difficulty for \texttt{GLM-5}, down-weighting both near-ceiling tasks (uninformative because already trivial) and near-floor tasks (uninformative because no workflow yields signal), so that granularity effects can be observed in the sensitive middle band. Applying this rule yields a pool of 36 tasks.

For the aggregate step-curve in the main text, we further restrict to a $32$-task subset by removing four tasks that returned all-zero outcomes in our first trial across every $k$: \texttt{break-filter-js-from-html}, \texttt{caffe-cifar-10}, \texttt{chess-best-move}, and \texttt{db-wal-recovery}. Excluding these prevents degenerate floor cases from flattening the aggregate curve.

\begin{table}[t]
\centering
\footnotesize
\caption{Selected Terminal-Bench 2 tasks and official resolution rates in \cite{merrill2026terminalbench}.}
\label{tab:tbench2_selected_tasks}
\begin{tabular}{lc|lc}
\toprule
Task & Rate & Task & Rate \\
\midrule
break-filter-js-from-html & 0.0\% &
caffe-cifar-10 & 0.0\% \\
chess-best-move & 0.0\% &
db-wal-recovery & 0.0\% \\
bn-fit-modify & 20.0\% &
build-cython-ext & 20.0\% \\
cancel-async-tasks & 20.0\% &
circuit-fibsqrt & 20.0\% \\
compile-compcert & 25.0\% &
overfull-hbox & 20.0\% \\
protein-assembly & 20.0\% &
query-optimize & 20.0\% \\
sanitize-git-repo & 20.0\% &
winning-avg-corewars & 20.0\% \\
adaptive-rejection-sampler & 40.0\% &
feal-differential-cryptanalysis & 40.0\% \\
log-summary-date-ranges & 40.0\% &
build-pov-ray & 60.0\% \\
extract-elf & 60.0\% &
qemu-startup & 60.0\% \\
sparql-university & 60.0\% &
tune-mjcf & 60.0\% \\
configure-git-webserver & 80.0\% &
count-dataset-tokens & 80.0\% \\
custom-memory-heap-crash & 80.0\% &
financial-document-processor & 80.0\% \\
fix-ocaml-gc & 80.0\% &
headless-terminal & 80.0\% \\
kv-store-grpc & 80.0\% &
large-scale-text-editing & 80.0\% \\
llm-inference-batching-scheduler & 80.0\% &
model-extraction-relu-logits & 80.0\% \\
qemu-alpine-ssh & 80.0\% &
sqlite-db-truncate & 80.0\% \\
sqlite-with-gcov & 80.0\% &
largest-eigenval & 100.0\% \\
\bottomrule
\end{tabular}
\end{table}

\subsubsection{Variables and Controls}

The independent variable is the workflow step count $k\!\in\!\{1,\dots,10\}$. The primary dependent variable is pass rate; we additionally log \texttt{episodes\_to\_success} (number of solver episodes used on successful runs) as a secondary diagnostic. To isolate the effect of $k$, the following factors are held fixed across all conditions:

\begin{table}[ht]
\centering
\caption{Configuration for the Terminal-Bench-2 workflow-step experiment.}
\label{tab:terminal_bench_config}
\begin{tabular}{ll}
\toprule
\textbf{Component} & \textbf{Setting} \\
\midrule
Planner & \texttt{openrouter/qwen/qwen3.5-plus-02-15} \\
Solver & \texttt{openrouter/z-ai/glm-5} \\
Execution backend & Harbor Docker (local) \\
Episode budget & 50 episodes per run \\
Attempts per cell & \texttt{N\_ATTEMPTS=1} per $(\text{task},k)$\\
Verifier & Official Terminal-Bench verifier (per task) \\
Prompt scaffold & Fixed; only the workflow block varies with $k$ \\
\bottomrule
\end{tabular}
\end{table}

All experiments are run locally on a MacBook with an Apple M4 chip.

\subsubsection{Pipeline}

Each $(\text{task},k)$ cell follows a two-stage pipeline.

\textbf{Stage 1: Workflow Generation.}
For each task, the planner generates one workflow per granularity $k\!\in\!\{1,\dots,10\}$, producing exactly $k$ ordered steps. The planner receives the raw Terminal-Bench task instruction together with a fixed generation prompt that requests $k$ ordered intermediate actions or verification points and explicitly prohibits fabricated command outputs or verifier results unsupported by the instruction. Each task's workflows are saved as a single JSON object indexed by $k$.

\textbf{Stage 2: Solver Evaluation.}
For each $(\text{task},k)$, the solver \footnote{\url{https://github.com/stanford-iris-lab/meta-harness-tbench2-artifact}} receives a prompt of the form
$$
\Pi_k(x) \;=\; \texttt{Template}\bigl(x,\ \texttt{Format}(W_k(x))\bigr),
$$
where $x$ is the task instruction, $W_k(x)$ is the $k$-step workflow, and \texttt{Template} is a fixed scaffold containing three blocks: \texttt{\#\#~task} (verbatim $x$), \texttt{\#\#~guidance} (a four-bullet $k$-invariant rubric on inspecting the verifier, reproducing failures, and minimal local edits), and \texttt{\#\#~workflow} (the numbered $k$-step list). The solver is then run once under the fixed Harbor Docker environment with \texttt{MAX\_EPISODES=50}, and success is determined by the verifier's reward. 

\subsubsection{Metric}

For task $c$, step count $k$, and trial $r$, define the per-cell outcome as
$$
s_{c,k,r}\;=\;\mathbf{1}\!\left\{\texttt{verifier\_result.rewards.reward}>0\right\}.
$$
The aggregate pass rate at granularity $k$ is
$$
\mathrm{PassRate}(k)
\;=\;
\frac{n_{\mathrm{succ}}(k)}{n_{\mathrm{valid}}(k)},
$$
where $n_{\mathrm{valid}}(k)$ is the number of evaluated cells at $k$ (excluding missing or errored runs) and $n_{\mathrm{succ}}(k)$ is the number of cells with $s_{c,k,r}=1$. For successful cells, we additionally report \texttt{episodes\_to\_success}, taken from the agent's \texttt{n\_episodes} metadata.

\subsection{Plot-Reasoning Hallucination Setup}
\label{app:plot_reasoning_hallucination_setup}

\begin{figure}[t]
    \centering
    \begin{subfigure}{0.32\textwidth}
        \centering
        \includegraphics[width=\linewidth]{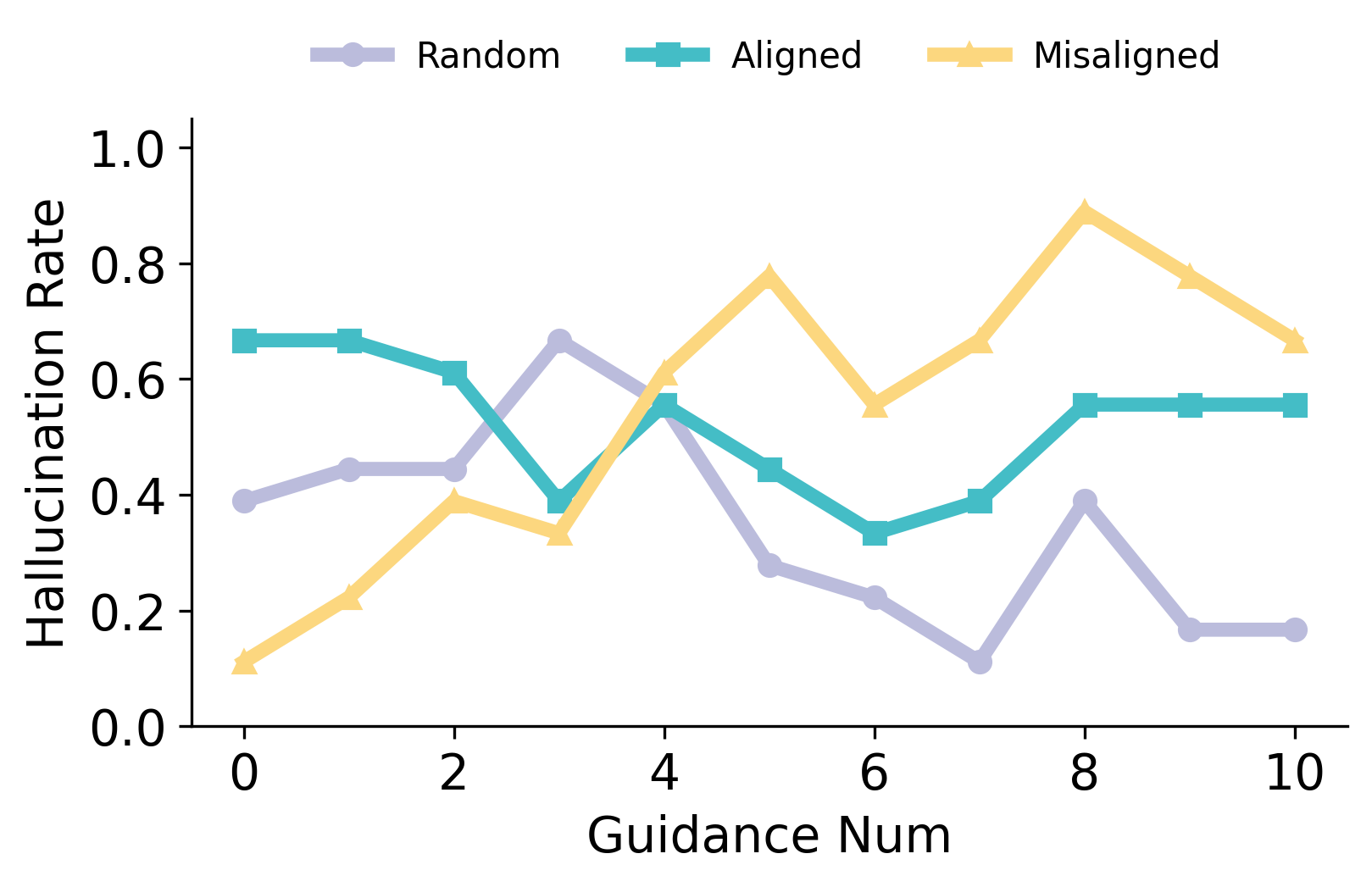}
        \caption{Qwen3.5-plus-02-15}
        \label{fig:hallucination_qwen}
    \end{subfigure}
    \hfill
    \begin{subfigure}{0.32\textwidth}
        \centering
        \includegraphics[width=\linewidth]{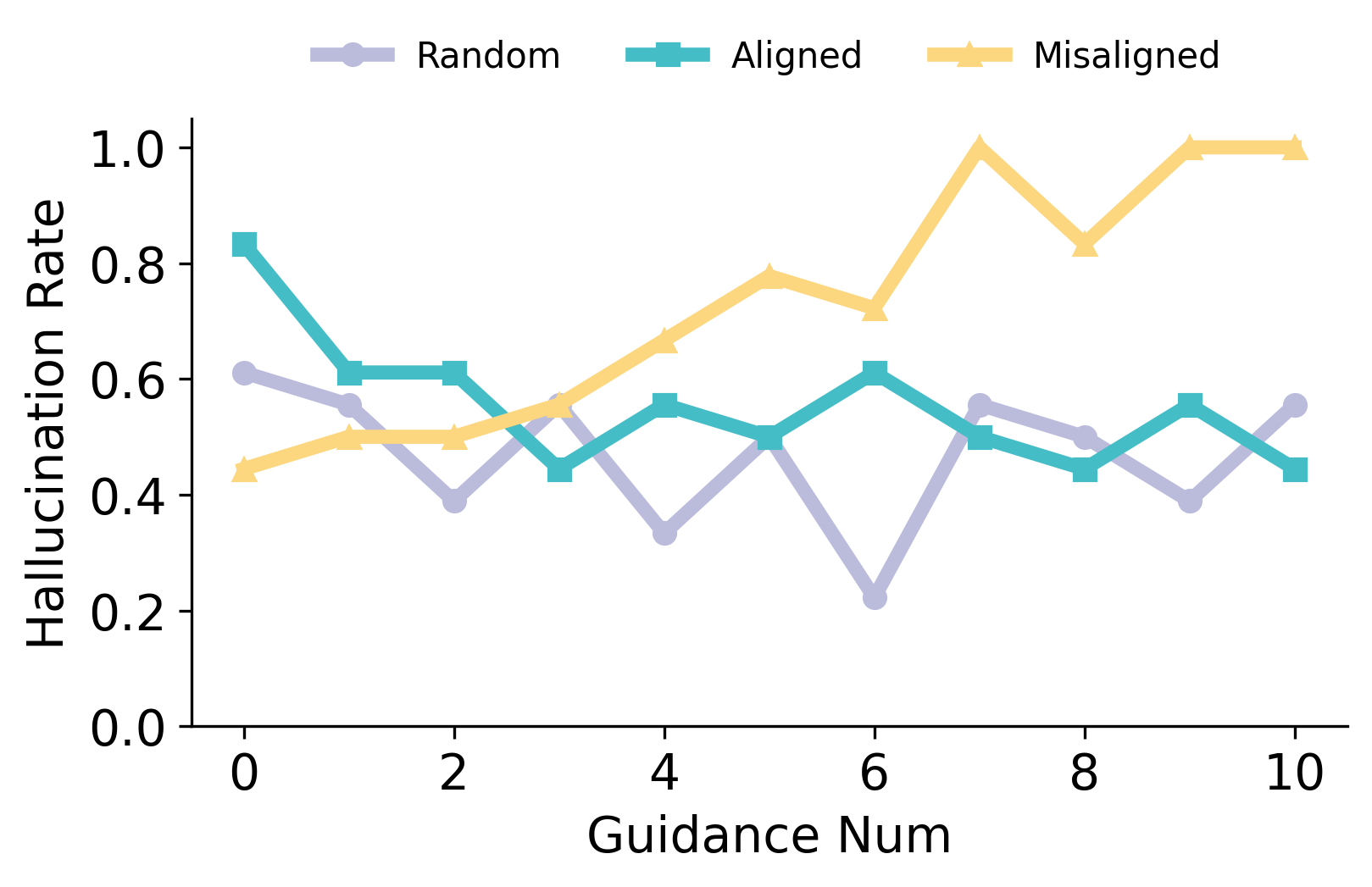}
        \caption{GPT-OSS-120B}
        \label{fig:hallucination_gpt}
    \end{subfigure}
    \hfill
    \begin{subfigure}{0.32\textwidth}
        \centering
        \includegraphics[width=\linewidth]{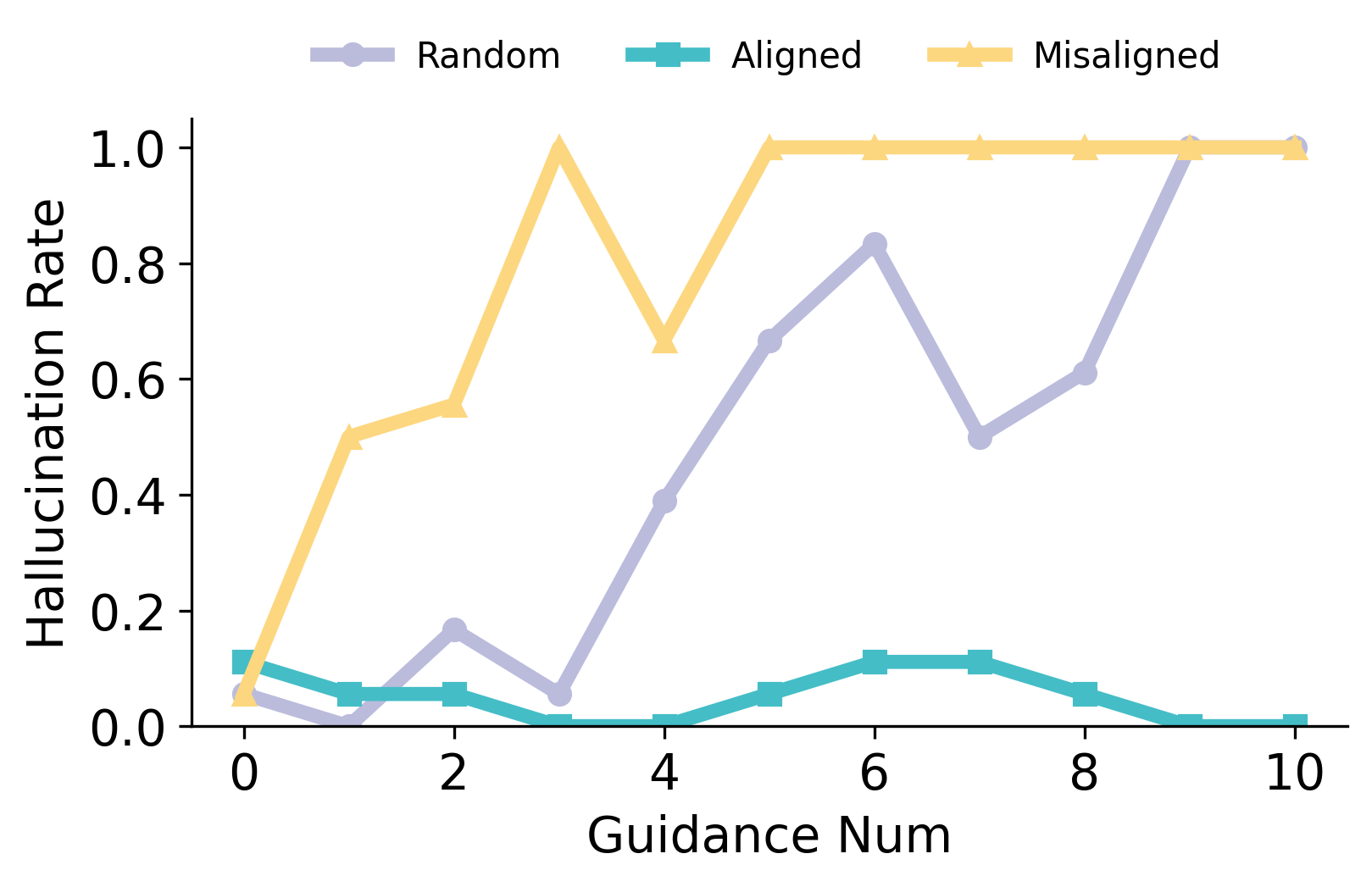}
        \caption{Gemini-3-flash-preview}
        \label{fig:hallucination_gemini}
    \end{subfigure}
    \caption{
    \textbf{Granularity--capability alignment in a controlled addition task.} (a) Pass rate varies with the number of subgoals and depends on the match between harness granularity and agent capability.
    (b) The mean absolute final bias increases as the number of subgoals grows, showing that overly fine task decomposition can accumulate larger terminal error.
    }
    \label{fig:granularity_two_panel}
\end{figure}

We further evaluate harness guidance in a visibility-limited chart reasoning task. This experiment complements the controlled simulations by testing whether additional guidance reduces or amplifies unsupported claims in a more realistic analysis setting. The central question is: when the model only observes a truncated Plotly representation rather than the rendered figure, how does the amount and type of guidance affect hallucination?

\subsubsection{Task Construction}

Each task is a chart-analysis problem. Instead of showing the model the rendered image, we provide only the raw Plotly trace representation, denoted as \texttt{plotly\_repr}. This input contains partial structural information about the chart, but it does not provide full visual access to the figure. Therefore, the model must reason under limited observability.

We generate six chart types:
\[
\texttt{box},\quad
\texttt{line},\quad
\texttt{scatter},\quad
\texttt{histogram},\quad
\texttt{violin},\quad
\texttt{heatmap}.
\]
For each chart type, we generate a large synthetic chart and store its Plotly representation as a task instance. The random seed is fixed to ensure reproducibility. Each task contains a chart name and a string-valued \texttt{plotly\_repr}, which is later inserted into the analysis prompt.

\subsubsection{Guidance Conditions}

The harness augments the chart-analysis prompt with a set of textual guidance rules. We vary the number of injected rules, denoted by $k$. When $k=0$, the model receives no additional guidance beyond the base prompt. As $k$ increases, the harness imposes more requirements on how the model should analyze the chart.

We compare three guidance sources. The first source is the original guidance set, which contains general analysis requirements. The second source is an anti-hallucination guidance set, which explicitly instructs the model to respect the limited visibility of the Plotly representation and avoid unsupported visual claims. The third source is a mixed condition, where guidance rules are sampled from the two sources with equal probability. Thus, the experiment separates the effect of guidance quantity from the effect of guidance quality.

\subsubsection{Generation and Judging Pipeline}

Each experimental case follows the same two-stage pipeline. First, the analysis model receives the base chart-analysis prompt, the task-specific \texttt{plotly\_repr}, and $k$ sampled guidance rules. It then produces a natural-language analysis of the chart. Second, the judge model reads both the input context and the generated analysis, and assigns one of three labels:
\[
\texttt{GROUNDED},\quad
\texttt{HALLUCINATION},\quad
\texttt{ERROR}.
\]

A response is labeled \texttt{GROUNDED} if the analysis stays within what can be reasonably inferred from the provided representation. It is labeled \texttt{HALLUCINATION} if it makes unsupported claims, overstates visual details, or fails to acknowledge the limited observability of the chart. Cases with API failures or invalid outputs are labeled \texttt{ERROR} and excluded from the valid denominator.

\subsubsection{Metrics}

For each guidance count $k$, we aggregate results across chart types and trials. Let $n_{\mathrm{valid}}(k)$ be the number of non-error cases, and let $n_{\mathrm{hall}}(k)$ be the number of cases labeled as hallucination. The main metric is the hallucination rate:
\[
\mathrm{HallucinationRate}(k)
=
\frac{n_{\mathrm{hall}}(k)}{n_{\mathrm{valid}}(k)}.
\]
We also report the number of error cases to ensure that a lower hallucination rate is not caused by a large number of invalid model outputs.

\subsubsection{Experimental Variables}

The independent variables are the guidance count $k$ and the guidance source. By default, we evaluate
$$
k \in \{0,1,\ldots,10\},
$$
with three trials for each $k$ and each chart type. For a fair comparison, the analysis model, judge model, task set, random seed, and tested values of $k$ are kept fixed across guidance strategies.

\subsubsection{Interpretation}

This experiment is designed to test whether stronger guidance is always better. If increasing $k$ consistently lowers hallucination rate, then additional guidance acts as useful inference-time constraint. If hallucination rate increases with $k$, then excessive guidance may encourage the model to satisfy more textual requirements than the limited evidence can support. Comparing the original, anti-hallucination, and mixed guidance conditions further distinguishes whether the outcome is driven by the number of guidance rules or by their alignment with the available evidence.

%% file: text/999_1_proof.tex
\section{Omitted Proofs}

\input{text/999_1_0_loss_decouple}
\input{text/999_1_1_step_alignment}

\input{text/999_1_2_guidance}

\input{text/999_1_3_partial_harnessing}

%% file: text/999_1_0_loss_decouple.tex
\subsection{Recoverability Construction and Process Loss}
\label{app:recoverability_factorization}

This appendix formalizes the chain-rule factorization $P_h(\mathsf{Succ}_x(\tau_h) \mid x) = \prod_t \bar p_t(h;x)$ stated without proof in Section~\ref{sec:preliminary}, on which all subsequent stagewise bounds rely. The factorization requires a precise notion of when an intermediate prefix can still reach a successful continuation, so Appendix~\ref{app:recoverable-prefixes-goal-consistency} first defines the recoverable prefix sets $\mathcal{R}_t(x,h)$, identifies the stagewise event $B_t$ with membership in $\mathcal{R}_t$, and isolates the goal-consistency condition that aligns these intermediate notions with final success. Under this condition, Appendix~\ref{app:chain-rule-identity} then derives the factorization by chain rule and converts it into the additive log form invoked by the bounds in Appendix~\ref{sec:granularity_alignment} and Appendix~\ref{app:guidance_action_space_filtering}.

\subsubsection{Recoverable Prefixes and Goal Consistency}
\label{app:recoverable-prefixes-goal-consistency}

Recall from Section~\ref{sec:preliminary} the completed prefix
$$
K_t := (x, g_1, \tau_1, \ldots, g_t, \tau_t),
\qquad
K_0 := x,
$$
which records the full execution trace through stage $t$. For each intermediate stage $t < T$, define the recoverable set
$$
\mathcal{R}_t(x, h)
:=
\left\{
K_t :
\exists\, (\tau_{t+1}, \ldots, \tau_T)
\text{ such that }
\mathsf{Succ}_x(\tau_h)
\right\},
$$
the set of completed prefixes from which some continuation under the remaining plan $(g_{t+1}, \ldots, g_T)$ reaches $y^\star(x)$. At the terminal stage we set $\mathcal{R}_T(x, h) := \{K_T : \mathsf{Succ}_x(\tau_h)\}$, so a terminal prefix is recoverable iff it is successful. The stagewise recoverability event of Section~\ref{sec:preliminary} is then
$$
B_t := \{K_t \in \mathcal{R}_t(x, h)\},
\qquad
t = 1, \ldots, T,
$$
and the cumulative events $E_t := \bigcap_{s \le t} B_s$ form a nested sequence $E_T \subseteq \cdots \subseteq E_1$.

We call the harness plan \emph{goal-consistent} if
$$
E_T \equiv \mathsf{Succ}_x(\tau_h),
$$
that is, if remaining recoverable through every stage is equivalent to final success. This rules out decompositions whose intermediate recoverability notions are misaligned with the final answer, and it is the regime under which the chain-rule identity below holds.

\subsubsection{Chain-Rule Identity}
\label{app:chain-rule-identity}

Under goal consistency, $P_h(\mathsf{Succ}_x(\tau_h) \mid x) = P_h(E_T \mid x)$. Applying the chain rule to the nested events $E_1, \ldots, E_T$ and using $E_t = E_{t-1} \cap B_t$ together with $E_{t-1} = B_{<t}$,
$$
P_h(\mathsf{Succ}_x(\tau_h) \mid x)
=
\prod_{t=1}^{T}
P_h(B_t \mid B_{<t}, x)
=
\prod_{t=1}^{T}
\bar p_t(h; x),
$$
which is the factorization stated in Section~\ref{sec:preliminary}. Taking negative logarithms,
$$
-\log P_h(\mathsf{Succ}_x(\tau_h) \mid x)
=
\sum_{t=1}^{T} -\log \bar p_t(h; x),
$$
so the process loss is induced by the primitive final-success objective rather than added on top of it. Each summand penalizes a low conditional probability of preserving recoverability after one harness-level plan step, and this is the form used by the bounds in Appendix~\ref{sec:granularity_alignment} and Appendix~\ref{app:guidance_action_space_filtering}.

%% file: text/999_1_1_step_alignment.tex
\subsection{Granularity--Capability Alignment}
\label{sec:granularity_alignment}

This appendix gives the formal statement and proof for the granularity--capability alignment principle of Section~\ref{sec:granularity_capability_alignment}. The result quantifies how the harness's choice of sub-goal scale, combined with the agent's per-stage attempt budget, controls the final success probability through a per-stage mismatch penalty. Theorem~\ref{thm:granularity_capability_alignment} states the general bound under prefix-dependent stage quantities, taking infimum over all recoverable prefixes; Remark~\ref{rem:deterministic_stage} then specializes this to the deterministic regime that recovers the simplified form stated in the main text. Building on this specialization, Corollary~\ref{cor:coarse_fine_granularity} characterizes when a uniform $T$-step decomposition incurs no mismatch penalty: the sub-goal size $L_x/T$ must lie in the union of cumulative controllability windows reachable within the attempt budget, with deviations on either side producing the coarse-grained and fine-grained failure regimes observed in the main text.

\begin{theorem}[Granularity--capability alignment bound with finite attempts]
\label{thm:granularity_capability_alignment}
Fix a task $x$ and a harness $h=(\kappa,\lambda,\psi)$. Let
$$
\Delta_h(x)=(g_1,\dots,g_T),
\qquad
T=T_h(x),
$$
be the harness-induced sub-goal sequence, and let
$
\tau_h(x)=(g_1,\tau_1,\dots,g_T,\tau_T)
$
be the harness-conditioned execution trajectory. Recall that the completed
prefix after the first $t$ harness-level sub-goals is
$$
K_t=(x,g_1,\tau_1,\dots,g_t,\tau_t),
\qquad
K_0:=x.
$$

For each stage $t$ and each completed prefix $K_{t-1}$ satisfying $B_{<t}$,
suppose there exists a latent progress coordinate $\phi_x$. Define the intended
latent progress required by the harness-specified sub-goal $g_t$ as
$$
\ell_t(K_{t-1})
:=
\phi_x(g_t)-\phi_x(K_{t-1}).
$$

At stage $t$, suppose the agent may make at most $M_t$ primitive attempts before
the stage fails. For $m=1,\dots,M_t$, let
$
Z_{t,m}
$
denote the cumulative latent progress made after $m$ primitive attempts within
stage $t$, measured from the prefix $K_{t-1}$.

Assume the following conditions hold for every stage $t$ and almost surely over
completed prefixes $K_{t-1}$ satisfying $B_{<t}$.

\textbf{(i) Finite-attempt recoverability tube.}
There exists a tolerance $\varepsilon_t(K_{t-1})\ge 0$ such that preserving
recoverability after executing sub-goal $g_t$ requires landing near the intended
milestone within the finite attempt budget:
$$
B_t
\subseteq
\bigcup_{m=1}^{M_t}
\left\{
\left|
Z_{t,m}
-
\ell_t(K_{t-1})
\right|
\le
\varepsilon_t(K_{t-1})
\right\}.
$$

\textbf{(ii) Cumulative controllability windows.}
For each $m=1,\dots,M_t$, there exist quantities
$$
\mu_{t,m}^-(K_{t-1})
\le
\mu_{t,m}^+(K_{t-1}),
\qquad
\sigma_{t,m}(K_{t-1})>0,
$$
and a mean
$$
\mu_{t,m}(K_{t-1})
\in
[
\mu_{t,m}^-(K_{t-1}),
\mu_{t,m}^+(K_{t-1})
],
$$
such that, for all $u\ge 0$,
$$
P_h\left(
Z_{t,m}-\mu_{t,m}(K_{t-1})\ge u
\mid K_{t-1}
\right)
\le
\exp\left(
-\frac{u^2}{2\sigma_{t,m}^2(K_{t-1})}
\right),
$$
and
$$
P_h\left(
\mu_{t,m}(K_{t-1})-Z_{t,m}\ge u
\mid K_{t-1}
\right)
\le
\exp\left(
-\frac{u^2}{2\sigma_{t,m}^2(K_{t-1})}
\right).
$$

\textbf{(iii) Boundary loss.}
Crossing the $t$-th harness-specified sub-goal boundary incurs an irreducible
coordination loss $\eta_t(K_{t-1})\ge 0$, so that
$$
P_h(B_t\mid K_{t-1})
\le
\exp\bigl(-\eta_t(K_{t-1})\bigr)
P_h\left(
\bigcup_{m=1}^{M_t}
\left\{
\left|
Z_{t,m}
-
\ell_t(K_{t-1})
\right|
\le
\varepsilon_t(K_{t-1})
\right\}
\mid K_{t-1}
\right).
$$

For each $m=1,\dots,M_t$, define the prefix-wise finite-attempt mismatch
$$
a_{t,m}(K_{t-1})
:=
\frac{
\left(
d\left(
\ell_t(K_{t-1}),
[
\mu_{t,m}^-(K_{t-1}),
\mu_{t,m}^+(K_{t-1})
]
\right)
-
\varepsilon_t(K_{t-1})
\right)_+^2
}{
2\sigma_{t,m}^2(K_{t-1})
}.
$$
Define the best finite-attempt mismatch at stage $t$ as
$$
\rho_t^{(M)}(K_{t-1})
:=
\min_{1\le m\le M_t}
a_{t,m}(K_{t-1}).
$$

Let
$$
\mathcal R_{t-1}(h,x)
:=
\operatorname{supp}_{P_h(\cdot\mid B_{<t},x)}(K_{t-1})
$$
denote the set of recoverable prefixes that can arise before stage $t$. Define
the stage-wise certified finite-attempt granularity loss as
$$
\gamma_t^{(M)}(h;x)
:=
\inf_{K_{t-1}\in\mathcal R_{t-1}(h,x)}
\left\{
\eta_t(K_{t-1})
+
\left(
\rho_t^{(M)}(K_{t-1})-\log M_t
\right)_+
\right\}.
$$

Then the final success probability satisfies
$$
-\log P_h(\mathrm{Succ}_x(\tau_h)\mid x)
\ge
\sum_{t=1}^{T_h(x)}
\gamma_t^{(M)}(h;x).
$$
Equivalently,
$$
P_h(\mathrm{Succ}_x(\tau_h)\mid x)
\le
\exp\left(
-
\sum_{t=1}^{T_h(x)}
\gamma_t^{(M)}(h;x)
\right).
$$
\end{theorem}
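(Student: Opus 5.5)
The argument combines the chain-rule identity of Appendix~\ref{app:chain-rule-identity} with a per-stage bound. Under goal consistency,
\[
-\log P_h(\mathrm{Succ}_x(\tau_h)\mid x)=\sum_{t=1}^{T}-\log \bar p_t(h;x),
\qquad
\bar p_t(h;x)=P_h(B_t\mid B_{<t},x).
\]
So it suffices to show, for every $t$, that $\bar p_t(h;x)\le \exp(-\gamma_t^{(M)}(h;x))$. Because $B_{<t}$ is determined by $K_{t-1}$, I will disintegrate over $K_{t-1}$:
\[
\bar p_t(h;x)=\mathbb E\bigl[P_h(B_t\mid K_{t-1})\,\big|\,B_{<t},x\bigr].
\]
The expectation runs over $K_{t-1}$ distributed on $\mathcal R_{t-1}(h,x)$. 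It then suffices to prove the pointwise bound
\[
P_h(B_t\mid K_{t-1})\le \exp\!\bigl(-\eta_t(K_{t-1})-(\rho_t^{(M)}(K_{t-1})-\log M_t)_+\bigr)
\]
for every such prefix. Averaging this bound gives at most $\sup_{K_{t-1}}\exp(-[\cdots])=\exp(-\inf_{K_{t-1}}[\cdots])=\exp(-\gamma_t^{(M)})$, where the supremum and infimum are taken over the support. The ``almost surely'' qualifier in the assumptions is harmless here.

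For the pointwise bound, fix $K_{t-1}$ and write $\ell=\ell_t$, $\varepsilon=\varepsilon_t$. Condition (iii) reduces the task to bounding $P(\bigcup_m\{|Z_{t,m}-\ell|\le\varepsilon\}\mid K_{t-1})$. A union bound gives at most $\sum_{m=1}^{M_t}P(|Z_{t,m}-\ell|\le\varepsilon)$.

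For each $m$, I use $\mu_{t,m}\in[\mu^-_{t,m},\mu^+_{t,m}]$. Let $\delta=d(\ell,[\mu^-,\mu^+])$. If $\delta\le\varepsilon$, then $a_{t,m}=0$ and the trivial bound $1=e^{-a_{t,m}}$ holds. Otherwise, suppose without loss of generality that $\ell>\mu^+$. Then $|Z-\ell|\le\varepsilon$ forces
\[
Z-\mu_{t,m}\ge \ell-\varepsilon-\mu_{t,m}\ge \ell-\mu^+-\varepsilon=\delta-\varepsilon>0.
\]
The upper-tail bound in (ii) then yields $P\le\exp(-(\delta-\varepsilon)^2/(2\sigma_{t,m}^2))=e^{-a_{t,m}}$. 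The case $\ell<\mu^-$ is symmetric and uses the lower tail. Hence each summand is at most $e^{-a_{t,m}}\le e^{-\rho_t^{(M)}}$, so the sum is at most $M_te^{-\rho_t^{(M)}}=\exp(-(\rho_t^{(M)}-\log M_t))$.

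Since the probability is also at most $1$, it is at most $\exp(-(\rho_t^{(M)}-\log M_t)_+)$. Multiplying by the factor $e^{-\eta_t}$ from (iii) gives the pointwise claim. Summing $-\log\bar p_t\ge\gamma_t^{(M)}$ over $t$ completes the proof.

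The main point needing care is the measure-theoretic step: passing from the prefix-wise bound to $\bar p_t$. Conditioning on $B_{<t}$ must be compatible with conditioning on $K_{t-1}$. This holds because $B_{<t}\in\sigma(K_{t-1})$, so $P(B_t\mid K_{t-1},B_{<t})=P(B_t\mid K_{t-1})$ on $B_{<t}$. Beyond that, the only thing to check is that the infimum is taken over the support, so that exceptional null sets do not matter. The concentration step itself is routine.
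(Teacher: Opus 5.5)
Your proposal is correct and follows the paper's proof essentially step for step. The shared chain is: reduce via the boundary-loss condition to the tube event; bound each $m$ with a sub-Gaussian tail, split by where $\ell_t$ sits relative to $[\mu^-_{t,m},\mu^+_{t,m}]$; cap the union bound at $1$ to get $\exp(-(\rho_t^{(M)}-\log M_t)_+)$; then take the infimum over recoverable prefixes, average over $K_{t-1}$ given $B_{<t}$, and apply the chain-rule factorization. Your remark that $B_{<t}\in\sigma(K_{t-1})$ makes explicit a justification the paper leaves implicit when it takes the conditional expectation.
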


\begin{remark}[Deterministic stage quantities]
\label{rem:deterministic_stage}
When the prefix-dependent quantities
$$
\ell_t(K_{t-1}),\;
\mu_{t,m}^\pm(K_{t-1}),\;
\sigma_{t,m}(K_{t-1}),\;
\varepsilon_t(K_{t-1}),\;
\eta_t(K_{t-1})
$$
are constant on $B_{<t}$---a regime we denote without the $K_{t-1}$ argument---the infimum in $\gamma_t^{(M)}(h;x)$ collapses to its argument, and Theorem~\ref{thm:granularity_capability_alignment} reduces to
$$
P_h(\mathrm{Succ}_x(\tau_h)\mid x)
\le
\exp\left(
-\sum_{t=1}^{T_h(x)}
\bigl[\eta_t + (\rho_t^{(M)}-\log M_t)_+\bigr]
\right),
$$
which is the form stated in the main text. The two corollaries below operate in this regime.
\end{remark}

\begin{proof}[Proof of Theorem~\ref{thm:granularity_capability_alignment}]
Fix a task $x$ and a harness $h$, and write $T=T_h(x)$. All probabilities are
taken under the harness-conditioned execution distribution induced by $q_h$.

Fix a stage $t$ and a completed prefix $K_{t-1}$ satisfying $B_{<t}$. For each
$m=1,\dots,M_t$, define the finite-attempt tube event
$$
E_{t,m}(K_{t-1})
:=
\left\{
\left|
Z_{t,m}
-
\ell_t(K_{t-1})
\right|
\le
\varepsilon_t(K_{t-1})
\right\}.
$$
The event that the agent hits the recoverability tube within the finite attempt
budget is
$$
E_t^{(M)}(K_{t-1})
:=
\bigcup_{m=1}^{M_t}
E_{t,m}(K_{t-1}).
$$

By the boundary-loss assumption,
$$
P_h(B_t\mid K_{t-1})
\le
\exp\bigl(-\eta_t(K_{t-1})\bigr)
P_h(E_t^{(M)}(K_{t-1})\mid K_{t-1}).
$$
It remains to upper bound the finite-attempt tube probability
$P_h(E_t^{(M)}(K_{t-1})\mid K_{t-1})$.

Fix an attempt count $m\in\{1,\dots,M_t\}$. Let
$$
I_{t,m}(K_{t-1})
:=
[
\mu_{t,m}^-(K_{t-1}),
\mu_{t,m}^+(K_{t-1})
],
$$
and define
$$
d_{t,m}(K_{t-1})
:=
d\left(
\ell_t(K_{t-1}),
I_{t,m}(K_{t-1})
\right).
$$
We first show that
$$
P_h(E_{t,m}(K_{t-1})\mid K_{t-1})
\le
\exp\left(
-
a_{t,m}(K_{t-1})
\right).
$$

There are three cases.

First suppose
$$
\ell_t(K_{t-1})
>
\mu_{t,m}^+(K_{t-1}).
$$
Then
$$
d_{t,m}(K_{t-1})
=
\ell_t(K_{t-1})
-
\mu_{t,m}^+(K_{t-1}).
$$
Since
$$
\mu_{t,m}(K_{t-1})
\le
\mu_{t,m}^+(K_{t-1}),
$$
on the event $E_{t,m}(K_{t-1})$ we have
$$
Z_{t,m}
\ge
\ell_t(K_{t-1})
-
\varepsilon_t(K_{t-1}).
$$
Therefore,
$$
Z_{t,m}-\mu_{t,m}(K_{t-1})
\ge
\ell_t(K_{t-1})
-
\varepsilon_t(K_{t-1})
-
\mu_{t,m}^+(K_{t-1})
=
d_{t,m}(K_{t-1})
-
\varepsilon_t(K_{t-1}).
$$
If
$$
d_{t,m}(K_{t-1})
>
\varepsilon_t(K_{t-1}),
$$
the upper-tail sub-Gaussian bound gives
$$
P_h(E_{t,m}(K_{t-1})\mid K_{t-1})
\le
\exp\left(
-
\frac{
\left(
d_{t,m}(K_{t-1})
-
\varepsilon_t(K_{t-1})
\right)^2
}{
2\sigma_{t,m}^2(K_{t-1})
}
\right).
$$
If
$$
d_{t,m}(K_{t-1})
\le
\varepsilon_t(K_{t-1}),
$$
the desired bound is trivial because the right-hand side equals $1$.

Second suppose
$$
\ell_t(K_{t-1})
<
\mu_{t,m}^-(K_{t-1}).
$$
Then
$$
d_{t,m}(K_{t-1})
=
\mu_{t,m}^-(K_{t-1})
-
\ell_t(K_{t-1}).
$$
Since
$$
\mu_{t,m}(K_{t-1})
\ge
\mu_{t,m}^-(K_{t-1}),
$$
on the event $E_{t,m}(K_{t-1})$ we have
$$
Z_{t,m}
\le
\ell_t(K_{t-1})
+
\varepsilon_t(K_{t-1}).
$$
Therefore,
$$
\mu_{t,m}(K_{t-1})-Z_{t,m}
\ge
\mu_{t,m}^-(K_{t-1})
-
\ell_t(K_{t-1})
-
\varepsilon_t(K_{t-1})
=
d_{t,m}(K_{t-1})
-
\varepsilon_t(K_{t-1}).
$$
The lower-tail sub-Gaussian bound gives
$$
P_h(E_{t,m}(K_{t-1})\mid K_{t-1})
\le
\exp\left(
-
\frac{
\left(
d_{t,m}(K_{t-1})
-
\varepsilon_t(K_{t-1})
\right)_+^2
}{
2\sigma_{t,m}^2(K_{t-1})
}
\right).
$$

Third suppose
$$
\ell_t(K_{t-1})
\in
I_{t,m}(K_{t-1}).
$$
Then $d_{t,m}(K_{t-1})=0$, and the desired bound reduces to
$$
P_h(E_{t,m}(K_{t-1})\mid K_{t-1})\le 1,
$$
which is immediate.

Combining the three cases, for every $m=1,\dots,M_t$,
$$
P_h(E_{t,m}(K_{t-1})\mid K_{t-1})
\le
\exp\left(
-
a_{t,m}(K_{t-1})
\right).
$$

Now apply the union bound over the finite attempt budget:
$$
P_h(E_t^{(M)}(K_{t-1})\mid K_{t-1})
=
P_h\left(
\bigcup_{m=1}^{M_t}
E_{t,m}(K_{t-1})
\mid K_{t-1}
\right)
\le
\sum_{m=1}^{M_t}
P_h(E_{t,m}(K_{t-1})\mid K_{t-1}).
$$
Using the previous bound,
$$
P_h(E_t^{(M)}(K_{t-1})\mid K_{t-1})
\le
\sum_{m=1}^{M_t}
\exp\left(
-
a_{t,m}(K_{t-1})
\right).
$$
Since
$$
\rho_t^{(M)}(K_{t-1})
=
\min_{1\le m\le M_t}
a_{t,m}(K_{t-1}),
$$
we have
$$
\sum_{m=1}^{M_t}
\exp\left(
-
a_{t,m}(K_{t-1})
\right)
\le
M_t
\exp\left(
-
\rho_t^{(M)}(K_{t-1})
\right).
$$
Therefore,
$$
P_h(E_t^{(M)}(K_{t-1})\mid K_{t-1})
\le
\min\left\{
1,\,
M_t
\exp\left(
-
\rho_t^{(M)}(K_{t-1})
\right)
\right\}.
$$
Equivalently,
$$
P_h(E_t^{(M)}(K_{t-1})\mid K_{t-1})
\le
\exp\left(
-
\left(
\rho_t^{(M)}(K_{t-1})
-
\log M_t
\right)_+
\right).
$$

Combining this with the boundary-loss assumption gives
$$
P_h(B_t\mid K_{t-1})
\le
\exp\left(
-
\eta_t(K_{t-1})
-
\left(
\rho_t^{(M)}(K_{t-1})
-
\log M_t
\right)_+
\right).
$$

By the definition of $\gamma_t^{(M)}(h;x)$, for every
$K_{t-1}\in\mathcal R_{t-1}(h,x)$,
$$
\eta_t(K_{t-1})
+
\left(
\rho_t^{(M)}(K_{t-1})
-
\log M_t
\right)_+
\ge
\gamma_t^{(M)}(h;x).
$$
Hence, almost surely on $B_{<t}$,
$$
P_h(B_t\mid K_{t-1})
\le
\exp\bigl(-\gamma_t^{(M)}(h;x)\bigr).
$$

Taking expectation over $K_{t-1}$ conditional on $B_{<t}$ gives
$$
\bar p_t(h;x)
=
P_h(B_t\mid B_{<t},x)
\le
\exp\bigl(-\gamma_t^{(M)}(h;x)\bigr).
$$

By the recoverability factorization in
Section~\ref{sec:preliminary},
$$
P_h(\mathrm{Succ}_x(\tau_h)\mid x)
=
\prod_{t=1}^{T_h(x)}
\bar p_t(h;x).
$$
Therefore,
$$
P_h(\mathrm{Succ}_x(\tau_h)\mid x)
\le
\prod_{t=1}^{T_h(x)}
\exp\bigl(-\gamma_t^{(M)}(h;x)\bigr)
=
\exp\left(
-
\sum_{t=1}^{T_h(x)}
\gamma_t^{(M)}(h;x)
\right).
$$
Taking negative logarithms proves the claim.
\end{proof}

\begin{corollary}[Failure of overly coarse and overly fine decompositions]
\label{cor:coarse_fine_granularity}
Adopt the deterministic regime of Remark~\ref{rem:deterministic_stage}, and
assume in addition that the harness uses a uniform $T$-step decomposition with
$\ell_t=L_x/T$ for all $t$, that each primitive attempt has a one-step
controllable window $[r^-,r^+]$ and one-step variation $\sigma$ so that
$$
[\mu_{t,m}^-,\mu_{t,m}^+]=[mr^-,mr^+],
\qquad
\sigma_{t,m}^2=m\sigma^2,
$$
and that $\eta_t=\eta$, $\varepsilon_t=\varepsilon$, $M_t=M$. Then
$$
-\log P_h(\mathrm{Succ}_x(\tau_h)\mid x)
\ge
T
\left[
\eta
+
\left(
\min_{1\le m\le M}
\frac{
(d(L_x/T,[mr^-,mr^+])-\varepsilon)_+^2
}{
2m\sigma^2
}
-
\log M
\right)_+
\right],
$$
and the certified mismatch term vanishes if and only if
$$
\frac{L_x}{T}
\in
\bigcup_{m=1}^{M}
[mr^- - \varepsilon,\,mr^+ + \varepsilon].
$$
Avoiding both coarse-grained ($L_x/T$ above the union) and fine-grained ($L_x/T$ below the union) failure regimes therefore requires the uniform sub-goal size to be reachable by some cumulative progress scale within the finite attempt budget.
\end{corollary}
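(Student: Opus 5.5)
The plan is to specialize Theorem~\ref{thm:granularity_capability_alignment} through Remark~\ref{rem:deterministic_stage}, and then analyze when the per-stage mismatch vanishes. In the deterministic regime the infimum defining $\gamma_t^{(M)}(h;x)$ collapses to $\eta_t+(\rho_t^{(M)}-\log M_t)_+$. Substituting the corollary's assumptions gives
\[
\ell_t=L_x/T,\quad [\mu^-_{t,m},\mu^+_{t,m}]=[mr^-,mr^+],\quad \sigma_{t,m}^2=m\sigma^2,\quad \eta_t=\eta,\quad \varepsilon_t=\varepsilon,\quad M_t=M,
\]
so that
\[
a_{t,m}=\frac{(d(L_x/T,[mr^-,mr^+])-\varepsilon)_+^2}{2m\sigma^2}.
\]
Every stage then carries the same loss. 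Summing $T$ identical terms of the bound in Theorem~\ref{thm:granularity_capability_alignment} yields the displayed inequality directly.

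For the characterization, I read "the certified mismatch term" as the quantity $\rho^{(M)}=\min_m a_{t,m}$ itself, not the $(\cdot-\log M)_+$ clipped version. Because $\sigma>0$, each $a_{t,m}$ is nonnegative, and $a_{t,m}=0$ holds iff $d(L_x/T,[mr^-,mr^+])\le\varepsilon$. Since $d$ is the distance to a closed interval, this is equivalent to
\[
L_x/T\in[mr^- -\varepsilon,\,mr^+ +\varepsilon].
\]
The minimum over the finitely many $m$ is zero iff some $a_{t,m}$ vanishes, which gives exactly the stated union condition.

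The final sentence about coarse versus fine regimes is then an interpretation: if $L_x/T$ lies outside the union, it lies either above its maximum $Mr^+ +\varepsilon$ or below some gap or the minimum $r^- -\varepsilon$.

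The only delicate point is the ambiguity in the "iff" claim. If the term is read as $(\rho^{(M)}-\log M)_+$, then for $M\ge2$ it can vanish even when $\rho^{(M)}>0$, and the equivalence with the union condition fails. I would therefore state the characterization explicitly for $\rho^{(M)}$ and note that the clipped penalty vanishes whenever the union condition holds, for any $M$. The converse for the clipped penalty holds when $M=1$.
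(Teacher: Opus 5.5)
Your proposal is correct and follows the paper's proof. You specialize Theorem~\ref{thm:granularity_capability_alignment} via Remark~\ref{rem:deterministic_stage}, sum $T$ identical stage losses, and characterize when $\rho^{(M)}$ vanishes (which is also how the paper reads the ``certified mismatch term'') through $d(L_x/T,[mr^-,mr^+])\le\varepsilon$ for some $m$. Your two caveats sharpen the paper's informal coarse/fine dichotomy and are valid: the clipped penalty $(\rho^{(M)}-\log M)_+$ can vanish for $M\ge 2$ without the union condition, and $L_x/T$ can fall in a gap of the union rather than strictly above or below it.
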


\begin{proof}[Proof of Corollary~\ref{cor:coarse_fine_granularity}]
Substituting the uniform and additive assumptions into
$\rho_t^{(M)}$ from Remark~\ref{rem:deterministic_stage} gives
$$
\rho_t^{(M)}
=
\min_{1\le m\le M}
\frac{
(d(L_x/T,[mr^-,mr^+])-\varepsilon)_+^2
}{
2m\sigma^2
},
$$
which is independent of $t$. Plugging into the bound in
Remark~\ref{rem:deterministic_stage} and using $\eta_t=\eta$, $M_t=M$ for all
$t$ yields the stated lower bound on $-\log P_h$.

The mismatch term $\rho_t^{(M)}$ vanishes precisely when
$d(L_x/T,[mr^-,mr^+])\le\varepsilon$ for some $m\in\{1,\dots,M\}$, which is
equivalent to $L_x/T\in[mr^- - \varepsilon, mr^+ + \varepsilon]$ for that $m$.
The union over $m$ characterizes the set of sub-goal sizes for which no
mismatch penalty is paid; outside this union, $L_x/T$ either exceeds every
$mr^+ + \varepsilon$ (coarse-grained failure) or falls below every
$mr^- - \varepsilon$ except at small $m$, where the small reachable scales
cannot match $L_x/T$ either (fine-grained failure).
\end{proof}

%% file: text/999_1_2_guidance.tex
\subsection{Guidance as Action-Space Filtering}
\label{app:guidance_action_space_filtering}

This appendix gives the formal statement and proof for the guidance--evidence alignment principle of Section~\ref{sec:guidance_action_space_filtering}. The result casts guidance as a reweighting of the base stage-level trajectory distribution and shows that its effect on stage recoverability is governed by a single one-dimensional quantity: the log-ratio of average retention weights on recoverable versus non-recoverable trajectories. Appendix~\ref{app:stage-level-filtering-model} sets up the filtering model, defining the guidance-filtered distribution $Q_{\lambda_t}$ and the conditional retention weights $\bar W_{t,R_t^{\mathrm{stg}}}, \bar W_{t,R^c}$ that the rest of the analysis is built on. Appendix~\ref{app:action-space-filtering-identity} then uses these objects to prove the central identity, expressing the filtered recoverability probability as a sigmoid in the base log-odds plus the retention gap $\Gamma_{t,\lambda_t}(K_{t-1})$. Building on this identity, Appendix~\ref{app:guidance-action-space-alignment-theorem} states the alignment theorem characterizing when guidance helps, hurts, or leaves stage recoverability unchanged, together with the process-level integration that lifts per-stage retention gaps into a final-success bound. Appendix~\ref{app:hallucination_special_case} closes by interpreting pseudo-compliant hallucination as a concrete instance of a negative retention gap.

\subsubsection{Stage-Level Filtering Model}
\label{app:stage-level-filtering-model}

Fix a task $x$, a harness stage $t$, and a completed prefix $K_{t-1}$ satisfying $B_{<t}$. At this stage, the harness specifies a sub-goal $g_t$. Let
$
Q_0(\tau_t\mid K_{t-1},g_t)
$
be the base stage-level trajectory distribution.

Guidance is represented by a nonnegative measurable retention weight $W_{t,\lambda_t}(K_{t-1},g_t,\tau_t)\ge 0$, where $\lambda_t$ denotes the amount of guidance imposed at stage $t$. The guidance-filtered distribution is
$$
Q_{\lambda_t}(\tau_t\mid K_{t-1},g_t)
=
\frac{
Q_0(\tau_t\mid K_{t-1},g_t)\,
W_{t,\lambda_t}(K_{t-1},g_t,\tau_t)
}{
Z_{t,\lambda_t}(K_{t-1},g_t)
},
\qquad
Z_{t,\lambda_t}(K_{t-1},g_t)
:=
\mathbb E_{Q_0}[W_{t,\lambda_t}],
$$
and we assume throughout that $0<Z_{t,\lambda_t}<\infty$.

This formulation is intentionally general: it captures inference-time guidance mechanisms that retain, downweight, or upweight candidate stage trajectories. Hard action-space pruning is the special case where $W_{t,\lambda_t}$ is an indicator over remaining trajectories.

Let
$$
\mathcal R_t^{\mathrm{stg}}(K_{t-1})
:=
\left\{
\tau_t:
K_t=(K_{t-1},g_t,\tau_t)
\text{ remains recoverable}
\right\}
$$
denote the recoverable cross-section of the stage-level trajectory space, and assume the base recoverability probability is non-degenerate, $0<Q_0(\mathcal R_t^{\mathrm{stg}}\mid K_{t-1},g_t)<1$. Define the conditional average retention weights on the recoverable set and its complement,
$$
\bar W_{t,R_t^{\mathrm{stg}}}(K_{t-1})
:=
\mathbb E_{Q_0(\cdot\mid \mathcal R_t^{\mathrm{stg}},K_{t-1},g_t)}[W_{t,\lambda_t}],
\qquad
\bar W_{t,R^c}(K_{t-1})
:=
\mathbb E_{Q_0(\cdot\mid \mathcal R_t^c,K_{t-1},g_t)}[W_{t,\lambda_t}],
$$
and assume $0<\bar W_{t,R_t^{\mathrm{stg}}}, \bar W_{t,R^c}<\infty$. Boundary cases where one of these quantities is zero are handled by the same identity in the extended-real sense, with $\sigma(+\infty)=1$ and $\sigma(-\infty)=0$.

\subsubsection{Action-Space Filtering Identity}
\label{app:action-space-filtering-identity}

\begin{lemma}[Action-space filtering identity]
\label{lem:action_space_filtering_identity}
Under the conditions of the previous subsection, define the base recoverability log-odds
$$
\omega_{t}^0(K_{t-1})
:=
\log
\frac{
Q_0(\mathcal R_t^{\mathrm{stg}}(K_{t-1})\mid K_{t-1},g_t)
}{
Q_0(\mathcal R_t^{\mathrm{stg}}(K_{t-1})^c\mid K_{t-1},g_t)
}
$$
and the guidance retention gap
$$
\Gamma_{t,\lambda_t}(K_{t-1})
:=
\log\bar W_{t,R_t^{\mathrm{stg}}}(K_{t-1})
-
\log\bar W_{t,R^c}(K_{t-1}).
$$
Then
$$
Q_{\lambda_t}(\mathcal R_t^{\mathrm{stg}}(K_{t-1})\mid K_{t-1},g_t)
=
\sigma\bigl(
\omega_{t}^0(K_{t-1})
+
\Gamma_{t,\lambda_t}(K_{t-1})
\bigr),
$$
where $\sigma(u)=1/(1+e^{-u})$.
\end{lemma}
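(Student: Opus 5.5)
The identity is a direct Bayes-type computation. I would write the reweighted mass of the recoverable set as a ratio of two expectations under $Q_0$ and then split the normalizer according to $\mathcal R_t^{\mathrm{stg}}$ and its complement. Abbreviate $\pi_0:=Q_0(\mathcal R_t^{\mathrm{stg}}\mid K_{t-1},g_t)\in(0,1)$, $\bar W^{\mathrm{rec}}:=\bar W_{t,R_t^{\mathrm{stg}}}(K_{t-1})$ and $\bar W^{\mathrm{bad}}:=\bar W_{t,R^c}(K_{t-1})$.

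By the definition of the guidance-filtered distribution,
\[
Q_{\lambda_t}(\mathcal R_t^{\mathrm{stg}}\mid K_{t-1},g_t)=\frac{\mathbb E_{Q_0}[W_{t,\lambda_t}\mathbf 1_{\mathcal R_t^{\mathrm{stg}}}]}{Z_{t,\lambda_t}}.
\]
Conditioning on the recoverable set gives
\[
\mathbb E_{Q_0}[W_{t,\lambda_t}\mathbf 1_{\mathcal R_t^{\mathrm{stg}}}]=\pi_0\,\bar W^{\mathrm{rec}},
\]
and the same step on the complement gives $\mathbb E_{Q_0}[W_{t,\lambda_t}\mathbf 1_{\mathcal R^c}]=(1-\pi_0)\,\bar W^{\mathrm{bad}}$. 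By the law of total expectation these two pieces add up to $Z_{t,\lambda_t}=\pi_0\bar W^{\mathrm{rec}}+(1-\pi_0)\bar W^{\mathrm{bad}}$. Both conditional expectations are well defined because $0<\pi_0<1$.

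Dividing numerator and denominator by $\pi_0\bar W^{\mathrm{rec}}$, which is strictly positive by assumption, gives
\[
Q_{\lambda_t}(\mathcal R_t^{\mathrm{stg}}\mid K_{t-1},g_t)=\Bigl(1+\frac{1-\pi_0}{\pi_0}\cdot\frac{\bar W^{\mathrm{bad}}}{\bar W^{\mathrm{rec}}}\Bigr)^{-1}.
\]
The product $\frac{1-\pi_0}{\pi_0}\cdot\frac{\bar W^{\mathrm{bad}}}{\bar W^{\mathrm{rec}}}$ is exactly $\exp\bigl(-\omega_t^0(K_{t-1})-\Gamma_{t,\lambda_t}(K_{t-1})\bigr)$, so the right-hand side equals $\sigma(\omega_t^0+\Gamma_{t,\lambda_t})$. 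Equivalently, the posterior log-odds of recoverability under $Q_{\lambda_t}$ equal the prior log-odds plus the log-likelihood ratio of the average weights.

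There is no real obstacle. The only care needed is bookkeeping: the averages $\bar W$ are conditional expectations under $Q_0$ restricted to each set, not under $Q_{\lambda_t}$, and all quantities must stay positive and finite so that the logarithms make sense, which the standing assumptions guarantee. The extended-real boundary cases noted in the setup follow from the unnormalized form $\pi_0\bar W^{\mathrm{rec}}/(\pi_0\bar W^{\mathrm{rec}}+(1-\pi_0)\bar W^{\mathrm{bad}})$. If $\bar W^{\mathrm{bad}}=0$ this ratio is $1=\sigma(+\infty)$, and if $\bar W^{\mathrm{rec}}=0$ it is $0=\sigma(-\infty)$; both cases are consistent because $Z_{t,\lambda_t}>0$.
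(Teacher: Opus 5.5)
Your proof is correct and is essentially the paper's argument. The paper forms the odds $Q_{\lambda_t}(\mathcal R_t)/Q_{\lambda_t}(\mathcal R_t^c)$ so that $Z_{t,\lambda_t}$ cancels and reads off $\omega_t^0+\Gamma_{t,\lambda_t}$ as the log-odds, whereas you expand $Z_{t,\lambda_t}=\pi_0\bar W^{\mathrm{rec}}+(1-\pi_0)\bar W^{\mathrm{bad}}$ and divide through; this is the same Bayes computation written in probability form rather than odds form, and your unnormalized version handles the extended-real boundary cases directly without needing to interpret an odds ratio with a zero denominator.
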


\begin{proof}
Suppress the dependence on $(K_{t-1},g_t)$ and write $\mathcal R_t=\mathcal R_t^{\mathrm{stg}}(K_{t-1})$. By definition of the filtered distribution,
$$
\frac{Q_{\lambda_t}(\mathcal R_t)}{Q_{\lambda_t}(\mathcal R_t^c)}
=
\frac{Q_0(\mathcal R_t)\,\mathbb E_{Q_0(\cdot\mid \mathcal R_t)}[W_{t,\lambda_t}]}{Q_0(\mathcal R_t^c)\,\mathbb E_{Q_0(\cdot\mid \mathcal R_t^c)}[W_{t,\lambda_t}]},
$$
since the two normalizers cancel. Taking logarithms gives
$$
\log\frac{Q_{\lambda_t}(\mathcal R_t)}{Q_{\lambda_t}(\mathcal R_t^c)}
=
\omega_{t}^0
+
\Gamma_{t,\lambda_t}(K_{t-1}),
$$
and converting log-odds to probability via $\sigma$ yields the result.
\end{proof}

\subsubsection{Guidance--Action-Space Alignment}
\label{app:guidance-action-space-alignment-theorem}

\begin{theorem}[Guidance--action-space alignment]
\label{thm:guidance_action_space_alignment}
Under the conditions of Lemma~\ref{lem:action_space_filtering_identity}, guidance improves stage recoverability relative to the unguided base distribution at the same prefix if and only if $\Gamma_{t,\lambda_t}(K_{t-1})>0$, and harms it if and only if $\Gamma_{t,\lambda_t}(K_{t-1})<0$. More quantitatively, if $\Gamma_{t,\lambda_t}(K_{t-1})\ge \gamma$ for some $\gamma\ge 0$, then
$$
Q_{\lambda_t}(\mathcal R_t^{\mathrm{stg}}(K_{t-1})\mid K_{t-1},g_t)
\ge
\sigma\bigl(\omega_{t}^0(K_{t-1})+\gamma\bigr),
$$
and if $G_{t,\lambda_t}(K_{t-1})\le -\gamma$, then
$$
Q_{\lambda_t}(\mathcal R_t^{\mathrm{stg}}(K_{t-1})\mid K_{t-1},g_t)
\le
\sigma\bigl(\omega_{t}^0(K_{t-1})-\gamma\bigr).
$$
\end{theorem}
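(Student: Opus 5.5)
The argument rests entirely on Lemma~\ref{lem:action_space_filtering_identity}, which writes the guided stage recoverability probability as $\sigma(\omega_t^0(K_{t-1})+\Gamma_{t,\lambda_t}(K_{t-1}))$. The unguided probability is the special case with uniform weights, where both conditional averages $\bar W$ coincide and $\Gamma=0$:
$$
Q_0(\mathcal R_t^{\mathrm{stg}}\mid K_{t-1},g_t)=\sigma\bigl(\omega_t^0(K_{t-1})\bigr).
$$
This identity is simply the definition of log-odds inverted through $\sigma$, which is valid because $0<Q_0(\mathcal R_t^{\mathrm{stg}})<1$. Comparing guided and unguided recoverability therefore reduces to comparing $\sigma(\omega_t^0+\Gamma)$ with $\sigma(\omega_t^0)$ at a fixed, finite $\omega_t^0$.

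\emph{Qualitative claim.} The sigmoid $\sigma$ is strictly increasing on $\mathbb R$, since $\sigma'(u)=\sigma(u)(1-\sigma(u))>0$. Hence $\sigma(\omega_t^0+\Gamma)>\sigma(\omega_t^0)$ holds iff $\Gamma>0$, equality holds iff $\Gamma=0$, and $<$ holds iff $\Gamma<0$. This gives both ``if and only if'' statements and the neutral case.

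\emph{Boundary cases.} If one of the $\bar W$ is $0$, then $\Gamma=\pm\infty$ and the lemma holds with $\sigma(\pm\infty)\in\{0,1\}$. Because $\omega_t^0$ is finite, $\sigma(\omega_t^0)$ lies strictly inside $(0,1)$. So $\Gamma=+\infty$ strictly improves recoverability and $\Gamma=-\infty$ strictly harms it, consistent with the signs.

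\emph{Quantitative claims.} Monotonicity again suffices. If $\Gamma\ge\gamma$, then $\omega_t^0+\Gamma\ge\omega_t^0+\gamma$, which gives the lower bound $\sigma(\omega_t^0+\gamma)$. If the retention gap is at most $-\gamma$, the upper bound $\sigma(\omega_t^0-\gamma)$ follows the same way. The second hypothesis in the statement is written $G_{t,\lambda_t}$; I read this as a typo for $\Gamma_{t,\lambda_t}$ and prove that version.

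There is no genuine obstacle, since all the work is done by the lemma. The only points needing care are two. First, the non-degeneracy assumption $0<Q_0(\mathcal R_t^{\mathrm{stg}})<1$ is what keeps $\omega_t^0$ finite, so that the sign comparison is strict. Second, the extended-real convention must be handled as above.
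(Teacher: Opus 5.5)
Your proposal is correct and follows essentially the same route as the paper. Both invoke Lemma~\ref{lem:action_space_filtering_identity} to write the guided probability as $\sigma(\omega_t^0+\Gamma_{t,\lambda_t})$, identify the unguided probability as $\sigma(\omega_t^0)$, and derive the sign characterization and the quantitative bounds from the strict monotonicity of $\sigma$. Your additional handling of the extended-real boundary cases, and your reading of $G_{t,\lambda_t}$ as a typo for $\Gamma_{t,\lambda_t}$, are consistent with the paper's conventions.
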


\begin{proof}
By Lemma~\ref{lem:action_space_filtering_identity}, $Q_{\lambda_t}(\mathcal R_t)=\sigma(\omega_{t}^0+\Gamma_{t,\lambda_t}(K_{t-1}))$, while the unguided probability at the same prefix is $\sigma(\omega_{t}^0)$. Both qualitative claims and quantitative bounds follow from the strict monotonicity of $\sigma$.
\end{proof}

\begin{remark}[Process-level integration]
The chain-rule identity established in Appendix~\ref{app:recoverability_factorization} integrates Theorem~\ref{thm:guidance_action_space_alignment} across stages: under goal consistency,
$$
-\log P_h(\mathsf{Succ}_x(\tau_h)\mid x)
=
\sum_{t=1}^{T_h(x)}
-\log \bar p_t(h;x),
\qquad
\bar p_t(h;x)
=
\mathbb E_h\bigl[
Q_{\lambda_t}(\mathcal R_t^{\mathrm{stg}}(K_{t-1})\mid K_{t-1},g_t)
\mid B_{<t},x
\bigr].
$$
A negative retention gap $\Gamma_{t,\lambda_t}(K_{t-1})\le -\gamma_t(K_{t-1})$ on $B_{<t}$ then yields $\bar p_t(h;x)\le \mathbb E_h[\sigma(\omega_{t}^0(K_{t-1})-\gamma_t(K_{t-1}))\mid B_{<t},x]$, so per-stage misalignment accumulates into the final-success bound.
\end{remark}

\subsubsection{Hallucination as a Special Case}
\label{app:hallucination_special_case}

The action-space filtering view also covers pseudo-compliant hallucination in content-producing tasks, where a stage trajectory $\tau_t$ may correspond to a generated answer, reasoning trace, tool-use sequence, or explanation. Let $\mathcal M_t(K_{t-1})\subseteq \mathcal R_t(K_{t-1})^c$ denote a hallucination basin: trajectories that appear compliant with the imposed guidance but introduce unsupported content and make the resulting prefix non-recoverable. If the guidance filter assigns high retention weight to this basin, it preserves trajectories that look locally compliant yet lie outside the recoverable region---precisely a negative retention gap, $\Gamma_{t,\lambda_t}(K_{t-1})<0$. Pseudo-compliant hallucination is therefore one manifestation of the general failure mode characterized by Lemma~\ref{lem:action_space_filtering_identity} and Theorem~\ref{thm:guidance_action_space_alignment}.

%% file: text/999_1_3_partial_harnessing.tex
\subsection{Proofs and Details for Partial Harnessing}
\label{app:partial-harnessing}

This appendix gives the formal statement and proof for the partial-harnessing principle of Section~\ref{sec:partial-harnessing}. Without additional structure, the marginal effect of adding one scaffolded stage, $F(m+1)-F(m)$, mixes the new scaffold cost, the residual tail-risk reduction, and any shift in earlier prefix distributions induced by changing the harness; the analysis isolates the first two by working on a homogeneous progress slice. Appendix~\ref{app:homogeneous-progress-slice} introduces the slice and its factorization assumption, and proves a lemma that decomposes $F(m)$ into a per-stage scaffold cost plus a residual tail cost, from which the marginal identity $F(m+1)-F(m)=c_s-\Delta(m;M)$ follows. Building on this lemma, Appendix~\ref{app:proof-thm-partial-harnessing} adds discrete convexity of the tail to prove Theorem~\ref{thm:partial-harnessing}, characterizing the unimodal success curve and the smallest scaffold count that maximizes reliability or attains a target $\alpha$. Appendix~\ref{app:connection-local-alignment} then closes the loop with the preceding two principles, expressing the scalar $c_s$ in terms of the granularity penalty and the guidance retention gap, and Appendix~\ref{app:slice-failure-modes} delineates the regimes in which the slice assumptions break and the marginal rule no longer applies.

\subsubsection{Homogeneous Progress Slice}
\label{app:homogeneous-progress-slice}

Fix a task $x$ with total latent progress demand $L_x$ and scaffold step size $s>0$, and let $N:=\lfloor L_x/s\rfloor$, $\mathcal J:=\{0,1,\ldots,N\}$. For each $m\in\mathcal J$, the partial harness $h_m$ executes $m$ scaffolded stages of length $s$ and hands the residual length $L_x-ms$ to the autonomous continuation policy. Let $A_t^{(m)}$ be the event that the $t$-th scaffolded stage under $h_m$ preserves recoverability, and let $A_{\mathrm{tail}}^{(m)}$ be the event that the autonomous tail succeeds.

\begin{assumption}[Positive homogeneous slice factorization]
\label{assump:slice-factorization}
There exist scalars $q_{\mathrm{scaf}}(s;M)\in(0,1]$ and $q_{\mathrm{tail}}(L_x-ms;M)\in(0,1]$, with $q_{\mathrm{tail}}(0;M)=1$, such that for every $m\in\mathcal J$:
\textbf{(i) goal consistency:} $\mathrm{Succ}_x(\tau_h) \equiv \bigcap_{t=1}^m A_t^{(m)} \cap A_{\mathrm{tail}}^{(m)}$;
\textbf{(ii) homogeneous scaffold reliability:} $\Pr_{h_m}(A_t^{(m)} \mid A_{<t}^{(m)},x)=q_{\mathrm{scaf}}(s;M)$ for $t=1,\ldots,m$;
\textbf{(iii) residual-tail reduction:} $\Pr_{h_m}(A_{\mathrm{tail}}^{(m)} \mid A_1^{(m)},\ldots,A_m^{(m)},x)=q_{\mathrm{tail}}(L_x-ms;M)$.
\end{assumption}

The assumption compresses prefix-level behavior into two scalar quantities: scaffolded stages have the same effective success probability along the slice, and the tail depends only on residual length, not on the realized prefix. Define
$$
c_s:=-\log q_{\mathrm{scaf}}(s;M),
\qquad
\kappa_{\mathrm{tail}}(d;M):=-\log q_{\mathrm{tail}}(d;M),
$$
both finite on the slice grid with $\kappa_{\mathrm{tail}}(0;M)=0$.

\begin{lemma}[Slice factorization and marginal identity]
\label{lem:slice-factorization}
Under Assumption~\ref{assump:slice-factorization}, for every $m\in\mathcal J$,
$$
F(m)
:=
-\log \Pr_{h_m}(\mathrm{Succ}_x(\tau_h)\mid x)
=
m c_s+\kappa_{\mathrm{tail}}(L_x-ms;M),
$$
and for $m,m+1\in\mathcal J$,
$$
F(m+1)-F(m)=c_s-\Delta(m;M),
\qquad
\Delta(m;M)
:=
\kappa_{\mathrm{tail}}(L_x-ms;M)
-
\kappa_{\mathrm{tail}}(L_x-(m+1)s;M).
$$
Adding the $(m+1)$-st scaffolded stage strictly improves reliability if and only if $\Delta(m;M)>c_s$.
\end{lemma}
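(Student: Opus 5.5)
The plan is to follow the same chain-rule route as the recoverability factorization of Appendix~\ref{app:recoverability_factorization}, now applied to the $m+1$ nested events $A_1^{(m)},\ldots,A_m^{(m)},A_{\mathrm{tail}}^{(m)}$ instead of the stagewise events $B_t$. Fix $m\in\mathcal J$. By goal consistency (Assumption~\ref{assump:slice-factorization}(i)),
\[
\Pr_{h_m}(\mathrm{Succ}_x(\tau_h)\mid x)=\Pr_{h_m}\Bigl(\bigcap_{t=1}^m A_t^{(m)}\cap A_{\mathrm{tail}}^{(m)}\,\Big|\,x\Bigr).
\]
The chain rule over the nested intersections then writes this probability as
\[
\prod_{t=1}^m \Pr_{h_m}(A_t^{(m)}\mid A_{<t}^{(m)},x)\cdot \Pr_{h_m}(A_{\mathrm{tail}}^{(m)}\mid A_1^{(m)},\ldots,A_m^{(m)},x).
\]
For $m=0$ the product is empty and only the tail factor remains.

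The conditional probabilities are well defined because each factor lies in $(0,1]$. By induction, every conditioning event has positive probability: $\Pr(A_{<t}^{(m)}\mid x)=q_{\mathrm{scaf}}^{t-1}>0$. Substituting (ii) and (iii) gives
\[
\Pr_{h_m}(\mathrm{Succ}_x\mid x)=q_{\mathrm{scaf}}(s;M)^m\,q_{\mathrm{tail}}(L_x-ms;M).
\]
Taking $-\log$ and using the definitions of $c_s$ and $\kappa_{\mathrm{tail}}$ yields
\[
F(m)=mc_s+\kappa_{\mathrm{tail}}(L_x-ms;M),
\]
which is finite since both $q$'s are positive.

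For the marginal identity, take $m,m+1\in\mathcal J$ and subtract the two instances of this formula:
\[
F(m+1)-F(m)=c_s+\kappa_{\mathrm{tail}}(L_x-(m+1)s;M)-\kappa_{\mathrm{tail}}(L_x-ms;M)=c_s-\Delta(m;M).
\]
Finally, the success probability is $\exp(-F)$ and $\exp(-\cdot)$ is strictly decreasing. So $\Pr_{h_{m+1}}(\mathrm{Succ}_x\mid x)>\Pr_{h_m}(\mathrm{Succ}_x\mid x)$ iff $F(m+1)<F(m)$, which holds iff $\Delta(m;M)>c_s$.

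There is no real analytic obstacle; the argument is bookkeeping. The step that needs the most care is applying the chain rule to the slice events. I need to make sure the homogeneity in (ii) is stated conditionally on $A_{<t}^{(m)}$ exactly as used, and that the tail conditioning in (iii) matches the full intersection $A_1^{(m)}\cap\cdots\cap A_m^{(m)}$. Only then is the factorization exact, rather than an inequality.
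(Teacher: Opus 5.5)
Your proposal is correct and follows essentially the same route as the paper. It uses the chain rule over the slice events via goal consistency, substitutes (ii) and (iii) to get $q_{\mathrm{scaf}}(s;M)^m\,q_{\mathrm{tail}}(L_x-ms;M)$, takes negative logarithms, subtracts directly, and finishes with the strict monotonicity of $\exp(-\cdot)$; your check that each conditioning event has probability $q_{\mathrm{scaf}}(s;M)^{t-1}>0$, and your explicit $m=0$ case, add small points of rigor that the paper leaves implicit.
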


\begin{proof}
By (i), $\Pr_{h_m}(\mathsf{Succ}\mid x)=\prod_{t=1}^m \Pr_{h_m}(A_t^{(m)}\mid A_{<t}^{(m)},x)\cdot\Pr_{h_m}(A_{\mathrm{tail}}^{(m)}\mid A_1^{(m)},\ldots,A_m^{(m)},x)$ via chain rule. Substituting (ii) and (iii) gives $\Pr_{h_m}(\mathsf{Succ}\mid x)=q_{\mathrm{scaf}}(s;M)^m\,q_{\mathrm{tail}}(L_x-ms;M)$, and taking negative logarithms gives $F(m)=mc_s+\kappa_{\mathrm{tail}}(L_x-ms;M)$. The marginal identity follows by direct subtraction, and the $\Delta(m;M)>c_s$ characterization from $F(m+1)<F(m)\iff c_s<\Delta(m;M)$.
\end{proof}

\subsubsection{Proof of Theorem~\ref{thm:partial-harnessing}}
\label{app:proof-thm-partial-harnessing}

\begin{assumption}[Convex tail risk on the slice]
\label{assump:convex-tail-risk}
On the grid \(\{L_x-ms:m\in\mathcal J{J}\}\), the function
\(d\mapsto \kappa_{\mathrm{tail}}(d;M)\) is finite, nondecreasing, and convex, with
\(\kappa_{\mathrm{tail}}(0;M)=0\).
\end{assumption}

\begin{proof}
By Lemma~\ref{lem:slice-factorization}, $F(m+1)-F(m)=c_s-\Delta(m;M)$.

\paragraph{Discrete convexity.}
Let $d_m:=L_x-ms$. By Assumption~\ref{assump:convex-tail-risk}, $d\mapsto\kappa_{\mathrm{tail}}(d;M)$ is convex on the slice grid, so the discrete increment $d\mapsto\kappa_{\mathrm{tail}}(d;M)-\kappa_{\mathrm{tail}}(d-s;M)$ is nondecreasing in $d$. Since $d_m$ decreases as $m$ increases, $\Delta(m;M)=\kappa_{\mathrm{tail}}(d_m;M)-\kappa_{\mathrm{tail}}(d_m-s;M)$ is nonincreasing in $m$, hence $c_s-\Delta(m;M)$ is nondecreasing. The forward difference of $F$ is nondecreasing, so $F$ is discrete-convex; equivalently $\exp(-F(m))$ is log-concave and unimodal on $\mathcal J$.

\paragraph{Smallest reliability maximizer.}
For a discrete-convex sequence, the minimizer set is an interval whose smallest element is the first index at which the forward difference becomes nonnegative. By Lemma~\ref{lem:slice-factorization}, $F(m+1)-F(m)\ge 0\iff \Delta(m;M)\le c_s$, giving
$$
m_{\mathrm{peak}}
=
\min\{m\in\mathcal J:m+1\in\mathcal J,\ \Delta(m;M)\le c_s\},
$$
with $m_{\mathrm{peak}}=\max\mathcal J$ if the set is empty.

\paragraph{Minimum scaffold for target reliability.}
For $\alpha\in(0,1)$, $P_m(x)\ge\alpha\iff F(m)\le-\log\alpha$. The feasible set $\mathcal J_\alpha=\{m\in\mathcal J:F(m)\le-\log\alpha\}$ is nonempty iff the target is achievable, in which case $m_\alpha=\min\mathcal J_\alpha$.
\end{proof}

\subsubsection{Connection to Local Alignment Results}
\label{app:connection-local-alignment}

The slice scalar $c_s=-\log\Pr(A_s)$, with $A_s$ the event that one scaffolded stage of length $s$ preserves recoverability, exposes how the local alignment results of Sections~\ref{sec:granularity_capability_alignment} and~\ref{sec:guidance_action_space_filtering} drive the global trade-off. The granularity bound certifies $\Pr(A_s)\le\exp(-\gamma_s)$ when the scaffolded sub-goal lies outside the agent's reachable scales, hence $c_s\ge\gamma_s$. The guidance identity gives $\Pr(A_s\mid K_{t-1})=\sigma(\omega^0_t(K_{t-1})+G_t(K_{t-1}))$, so a negative retention gap raises $c_s$ while a positive gap lowers it. The slice criterion $\Delta(m;M)>c_s$ therefore summarizes the two local effects as a single global stopping condition: alignment makes additional scaffolded stages cheaper, misalignment makes them more expensive.

\begin{remark}[Capability frontier in the zero-cost limit]
\label{rem:capability-frontier}
In the idealized regime $c_s=0$, the minimum-structure rule simplifies. Define $\mathcal D_\alpha:=\{d\in\{L_x-ms:m\in\mathcal J\}:\kappa_{\mathrm{tail}}(d;M)\le-\log\alpha\}$ and $d_\alpha:=\max\mathcal D_\alpha$ when nonempty. Then $m_\alpha=\min\{m\in\mathcal J:L_x-ms\le d_\alpha\}=\lceil(L_x-d_\alpha)/s\rceil_+$, recovering the capability-frontier reading: scaffold only until the residual lies within the agent's autonomous reliability range. For $c_s>0$ the exact grid rule from Theorem~\ref{thm:partial-harnessing} should be used.
\end{remark}

\subsubsection{When the Slice Assumptions Fail}
\label{app:slice-failure-modes}

The slice rule rests on two restrictions and can fail when they do. \emph{Non-nested or behavior-shifting workflows:} if $h_{m+1}$ is not a nested truncation of $h_m$, or if added later instructions alter the agent's earlier behavior (through prompt length, attention, or reinterpretation of the goal), then the first $m$ stages under $h_m$ and $h_{m+1}$ have different distributions, violating the scalar cost model $F(m)=mc_s+\kappa_{\mathrm{tail}}$. \emph{Prefix-dependent tail reliability:} if tail success depends on the realized prefix rather than only on residual length, $F(m+1)-F(m)$ contains additional terms beyond $c_s$ and $\Delta(m;M)$, and the simple rule $\Delta(m;M)>c_s$ no longer applies. \emph{Non-convex tail risk:} reliability cliffs, irreversible overshoot, discrete action scales, or prefix-dependent repair can break the convexity of $\kappa_{\mathrm{tail}}$; $\Delta(m;M)$ may then fail to be monotone in $m$ and $P_m(x)$ may be multi-modal. In any of these regimes, the safe procedure is to estimate or compare $F(m)$ directly across candidate scaffold counts.

%% file: text/999_4_prompt.tex
\section{Prompt Details}

We list the two harness templates used in our experiments below. They share a common skeleton but isolate different variables: the first holds \texttt{guidance} fixed and varies \texttt{workflow} step depth (granularity experiments), while the second holds \texttt{workflow} fixed and varies the \texttt{guidance} string (guidance experiments). Placeholders in curly braces are filled in per task at evaluation time.

\vspace{0.5em}
\begin{prompt}{Harness Template on Granularity Experiments}
\vspace{0.5em}
\textbf{Task:} \texttt{\{task\}}

\medskip
\textbf{Guidance:}
\begin{itemize}
  \item Follow the provided workflow.
  \item Inspect the evaluator and target files before choosing an implementation strategy.
  \item Reproduce the failure or run a minimal check before editing.
  \item Make the smallest local change that passes the focused evaluation without touching unrelated code.
\end{itemize}

\medskip
\textbf{Workflow:} \texttt{\{workflow\}}
\end{prompt}

\vspace{0.5em}
\begin{prompt}{Harness Template on Guidance Experiments}
\vspace{0.5em}
\textbf{Task:} Analyze the provided Plotly chart structure and produce a concise, evidence-grounded conclusion.

\medskip
\textbf{Guidance:} \texttt{\{guidance\}}

\medskip
\textbf{Workflow:}
\begin{enumerate}
  \item Identify the chart type and the available variables.
  \item State what can and cannot be inferred from the provided information.
  \item Give a concise conclusion without inventing unsupported distributional details.
\end{enumerate}
\end{prompt}